\newcommand{\abs}[1]{\left\lvert#1\right\rvert}  
\newcommand{\norm}[1]{\left\lVert#1\right\rVert}  
\newcommand{\angles}[1]{\left\langle#1\right\rangle}  
\newcommand{\ceil}[1]{\left\lceil#1\right\rceil}  
\newcommand{\calA}{\mathcal{A}}
\newcommand{\calC}{\mathcal{C}}
\newcommand{\calF}{\mathcal{F}}
\newcommand{\calH}{\mathcal{H}}
\newcommand{\calI}{\mathcal{I}}
\newcommand{\calN}{\mathcal{N}}
\newcommand{\calP}{\mathcal{P}}
\newcommand{\calR}{\mathcal{R}}
\newcommand{\calS}{\mathcal{S}}
\newcommand{\calT}{\mathcal{T}}
\newcommand{\calY}{\mathcal{Y}}
\newcommand{\E}{\mathbb{E}}   
\newcommand{\sN}{\mathbb{N}}  
\newcommand{\sR}{\mathbb{R}}  
\newcommand{\sP}{\mathbb{P}}  
\newcommand{\Bernoulli}{\mathrm{Bernoulli}} 
\newcommand{\tr}{\mathrm{tr}}           
\newcommand{\dd}{\mathrm{d}}
\newcommand{\KL}{\mathrm{KL}}
\newcommand{\reg}{\mathrm{reg}}  
\newcommand{\supp}{\mathrm{supp}}
\newcommand{\StochasticBandit}{\mathrm{StochasticBandit}} 
\newcommand{\BernoulliBandit}{\mathrm{BernoulliBandit}}  
\newcommand{\PTS}{\mathrm{PTS}} 
\newcommand{\Block}{\mathrm{Block}}    
\newcommand{\indicator}{\mathds{1}}
\def\XXint#1#2#3{{\setbox0=\hbox{$#1{#2#3}{\int}$ }
		\vcenter{\hbox{$#2#3$ }}\kern-.6\wd0}}
\newcommand{\approptoinn}[2]{\mathrel{\vcenter{
			\offinterlineskip\halign{\hfil$##$\cr
				#1\propto\cr\noalign{\kern2pt}#1\sim\cr\noalign{\kern-2pt}}}}}
\newcommand{\appropto}{\mathpalette\approptoinn\relax}
\newtheorem{proposition}{Proposition}
\newtheorem{corollary}[proposition]{Corollary}
\newtheorem{lemma}[proposition]{Lemma}
\newtheorem{conjecture}[proposition]{Conjecture}
\theoremstyle{definition}
\newtheorem{definition}{Definition}
\theoremstyle{definition}
\newtheorem{assumption}{Assumption}
\theoremstyle{definition}
\newtheorem{example}{Example}
\theoremstyle{remark}
\theoremstyle{remark}
\newtheorem*{remark}{Remark}
\begin{document}
	
\title{Regenerative Particle Thompson Sampling\footnote{Parts of this work have been published as two papers in \emph{Proceedings of the 57th Annual Conference on Information Sciences and Systems (CISS)}, Baltimore, MD, USA, March, 2023, titled \href{https://ieeexplore.ieee.org/document/10089653}{Particle Thompson Sampling with Static Particles} and \href{https://ieeexplore.ieee.org/document/10089647}{Improving Particle Thompson Sampling with Regenerative Particles.}}}
\author{Zeyu Zhou\footnote{Department of Radiology, Mayo Clinic, Rochester, Minnesota, USA. Email: \href{mailto:zeyuzhou91@gmail.com}{zeyuzhou91@gmail.com}}, Bruce Hajek\footnote{Department of Electrical and Computer Engineering and the Coordinated Science Laboratory, University of Illinois at Urbana-Champaign, Email:  \href{mailto:b-hajek@illinois.edu}{b-hajek@illinois.edu}}, Nakjung Choi\footnote{Network System and Security Research, Nokia Bell Labs, Murray Hill, New Jersey, USA. Email: \href{mailto:nakjung.choi@nokia-bell-labs.com}{nakjung.choi@nokia-bell-labs.com}}, Anwar Walid\footnote{Amazon, New York, USA. Email: \href{mailto:acmanwar@acm.org}{acmanwar@acm.org}}} 
\date{\today}
\maketitle

\begin{abstract}
This paper proposes regenerative particle Thompson sampling (RPTS), a flexible variation of Thompson sampling. Thompson sampling itself is a Bayesian heuristic for solving stochastic bandit problems, but it is hard to implement in practice due to the intractability of maintaining a continuous posterior distribution. Particle Thompson sampling (PTS) is an approximation of Thompson sampling obtained by simply replacing the continuous distribution by a discrete distribution supported at a set of weighted static particles. We observe that in PTS, the weights of all but a few fit particles converge to zero. RPTS is based on the heuristic: delete the decaying unfit particles and regenerate new particles in the vicinity of fit surviving particles. Empirical evidence shows uniform improvement from PTS to RPTS and flexibility and efficacy of RPTS across a set of representative bandit problems, including an application to 5G network slicing.
\end{abstract}

\section{Introduction} \label{sec:introduction}

A bandit problem is a sequential decision problem that elegantly captures the fundamental trade-off between the exploitation of actions with high rewards in the past and the exploration of actions that may produce higher rewards in the future. \emph{Thompson sampling (TS)} is a Bayesian heuristic for solving bandit problems with an assumption that the rewards are generated according to a given distribution with a fixed unknown parameter. TS maintains a posterior distribution on the parameter and selects an action according to the posterior probability that the action is optimal. The biggest advantage of TS is its ability to automatically handle setups with a complex information structure, where knowing the performance of one action may inform properties about other actions. Also, it has strong empirical performance \cite{ChapelleLi2011}. Theoretical performance guarantees of TS have been established for some bandit problems \cite{KaufmannKordaMunos2012, AgrawalGoyal2012_BernoulliBandits, AgrawalGoyal2013_ContextualBandits, GopalanMannorMansour2014}. However, efficient updating, storing, and sampling from the posterior distribution in TS are only feasible for some special cases (e.g. conjugate distributions). For general bandit problems, one has to resort to various approximations, most of which are complicated and have restrictive assumptions. 

\emph{Particle Thompson sampling} (PTS) is an approximation of TS based on the following idea: replace the continuous posterior distribution by a discrete distribution supported at a set of weighted static particles. Updating the posterior distribution then becomes updating the particles' weights by Bayes formula, followed by normalization. PTS is flexible: it applies to very general bandit setups. Also, PTS is very easy to implement. However, it may seem on the surface that the crude approximation may bring down the performance of TS significantly, because the set of particles in PTS is finite and static and may not contain the actual parameter. Intuitively, the performance of PTS can be improved by using more particles. However, that comes with an increasing computational cost. 

The main contributions of this paper:

\begin{itemize}
    \item We provide an analysis of PTS for general bandit problems, without assuming that the set of particles contains the hidden system parameter. The main result is a drift-based sample-path necessary condition on the surviving particles, illuminating the phenomenon that fit particles survive and unfit particles decay. 
    \item We propose an algorithm, \emph{regenerative particle Thompson sampling} (RPTS), to improve PTS. The heuristic is: periodically replace the decaying unfit particles in PTS with new generated particles in the vicinity of the survivors. Empirical results show that RPTS algorithms outperform PTS uniformly for a set of representative bandit problems. RPTS is very flexible and easy to implement. 
    \item We show an application of PTS and RPTS to network slicing, a 5G communication network problem, and demonstrate their efficacy through simulation.
\end{itemize}

The remainder of this paper is organized as follows. Section \ref{sec:related_work} lists some related work. Section \ref{sec:setup_and_preliminaries} introduces the general setup and notation of stochastic bandit problems and PTS. Section \ref{sec:analysis_of_PTS} provides a sample-path analysis of PTS. Section \ref{sec:RPTS} introduces RPTS and presents some simulation results. Section \ref{sec:application} shows an application of PTS and RPTS to network slicing. Section  \ref{sec:conclusions} concludes the paper and mentions some potential future work. 

\section{Related Work} \label{sec:related_work}

See \cite{BubeckCesaBianchi2012_BanditMonograph} and \cite{LattimoreSzepesvari_BanditAlgorithms} for a survey and recent developments in bandit problems. 

Upper-confidence-bound (UCB) algorithms \cite{AuerCesaBianchiFischer2002UCB, GarivierOlivier2011KL-UCB} have certain theoretical guarantees for some simple bandit models. KL-UCB \cite{GarivierOlivier2011KL-UCB} even meets a lower bound on regret established in \cite{LaiRobbins1985}. Empirically, UCB algorithms are not very competitive in the non-asymptotic regime due to their inefficient exploration and inability to take advantage of the problem structure for complex bandit problems. 

Reward-biased maximum likelihood estimation (RBMLE) \cite{LiuHsiehBhattacharyaKumar2019MAB, HungHsiehLiuKumar2020LinearBandits} reduces to an indexed policy like UCB and performs well compared to state-of-art algorithms. But for many problems in which the actions give information about the parameter in complicated ways, there is no efficient implementation of RBMLE.  

Thompson sampling (TS) \cite{Thompson1935} has strong empirical performance \cite{ChapelleLi2011} and can handle rather general and complex stochastic bandit problems \cite{GopalanMannorMansour2014, RussoVanRoyKazerouniOsbandWen2017ThompsonSamplingTutorial}. Note that there are certain problems for which TS does not work well \cite{RussoVanRoyKazerouniOsbandWen2017ThompsonSamplingTutorial} and it is still an active area of research to identify such problems and design algorithms to solve them. 

TS can be implemented efficiently in setups where a conjugate prior exists for the reward distribution. In cases where a conjugate prior is not available, one need to resort to approximations of TS, such as Gibbs sampling, Laplace approximation, Langevin Monte Carlo, and boostrapping \cite{RussoVanRoyKazerouniOsbandWen2017ThompsonSamplingTutorial}. These approximations are either complicated, or rely on restrictive assumptions. 

\cite{LuVanRoy2017} proposes ensemble sampling, which is related to the idea of PTS because it aims to maintain a set of particles (called ``models" in the paper) independently and identically sampled from the posterior distribution  in order to approximate TS. Particles in ensemble sampling are unweighted. A major restriction of the algorithm is that it requires Gaussian noise in the observation. Also, except in special setups, updating the particles in ensemble sampling requires solving an optimization problem that accounts for all the data from the start to the current time. 

To the best of our knowledge, the term \emph{particle Thompson sampling} first  appeared in \cite{KawaleBuiKvetonTranThanhChawla2015}, where the authors apply PTS as an efficient approximation of TS to solve a matrix-factorization recommendation problem. Note that in their work, the particles are not static, but are incrementally re-sampled at each step through an MCMC-kernel. The re-sampling method relies heavily on the specific problem structure. It is not clear how it can be generalized to other bandit problems. 

\cite{GopalanMannorMansour2014} analyzes TS for general stochastic bandit problems. The main result is that with high probability the number of plays of non-optimal actions is upper bounded by $B+C \log T$, where $B, C$ are problem-dependent constants and $T$ is the time horizon. For technical tractability, the paper assumes the prior distribution of the parameter is supported over a finite (possibly huge) set instead of a continuum. Therefore, TS in the paper is tantamount to PTS, with the finite prior support set equivalent to a set of particles. The result of the paper relies on a realizability assumption (called ``grain of truth" in the paper): the finite support set of the prior includes the true system parameter. However, for PTS when the true parameter exists in a continuum, the realizability assumption is unreasonable. In fact, without the realizability assumption, PTS may be inconsistent, i.e., the running average regret may not converge to zero. In this paper, PTS is analyzed without the realizability assumption. The analysis is inspired by \cite{GopalanMannorMansour2014}, especially on how KL divergence comes into play in the measurement of the fitness of particles. 

\section{Setup and Preliminaries} \label{sec:setup_and_preliminaries}

\subsection{Stochastic bandit problem} \label{subsec:setup_and_preliminaries|stochastic_bandit_problem}

A \emph{stochastic bandit problem} contains the following elements: an action set $\calA$, an observation space $\calY$, a parameter space $\Theta$, a known observation model $P_\theta(\cdot |a)$ and a reward function $R: \calY \rightarrow \sR$.  Consider a player who acts at steps $t = 1,2,\cdots$. At step $t$, the player takes an action $A_t \in \calA$, then observes $Y_t \in \calY$ according to the observation model $P_{\theta^*}(\cdot|A_t)$ for some fixed and unknown $\theta^* \in \Theta$, independent of past observations. The observation $Y_t$ then incurs a reward $R_t = R(Y_t)$. The goal of the player is to maximize the cumulative reward. For notational convenience, we denote an instance of the stochastic bandit problem by $\StochasticBandit(\calA, \calY, \Theta, P_{\theta}(\cdot |a), R, \theta^*)$. \footnote{The problem can be made more general by adding contexts. Let $\calC$ be a context set. The observation model becomes $P_\theta(\cdot | a, c)$. At each step of the game, the game player receives an arbitrary context $c_t \in \calC$ before taking action $A_t$. The observation $Y_t$ follows distribution $P_{\theta^*}(\cdot|A_t, c_t)$. This is known as the contextual stochastic bandit model, for which PTS still works. The reason we do not use this more general model here is that we want to emphasize the key word \emph{stochastic}, not contextual.} Let $\calH_t = (A_1, \cdots, A_t, Y_1, \cdots, Y_t)$ denote the history of actions and observations up to time $t$. An algorithm is a (possibly randomized) mapping from $\calH_{t-1}$ to $\calA$, for each step $t$. The performance of an algorithm is measured by \emph{regret}. Let $a^* \triangleq \arg \max_{a \in \calA} \E_{\theta^*}[R(Y) | a]$ denote the optimal action that maximizes the mean reward, assuming complete knowledge about $\theta^*$. Let $R^* \triangleq \E_{\theta^*} \left[R(Y) | a^*\right]$ denote the maximum expected reward. The regret of an algorithm that selects $A_t$ at time $t$ is $\reg_t \triangleq R^* - \E_{\theta^*}\left[R(Y) | A_t \right]$, the difference between the expected reward of an optimal action and the action selected by the algorithm. The cumulative regret and running average regret up to time $t$ are $\sum_{\tau=1}^t \reg_\tau$ and $\frac{1}{t} \sum_{\tau=1}^t \reg_\tau$, respectively.

\begin{example}[Bernoulli bandit]
	\label{example:bernoulli_bandit}
	
	Let $K$ be a positive integer. A Bernoulli bandit problem depicts a player who picks an arm indexed by $a \in \{1, \cdots, K \}$ at each step, which generates a reward of either 0 or 1 according to a Bernoulli distribution parameterized by $\theta^*_a \in [0,1]$, fixed and unknown. This is a stochastic bandit problem with $\calA = \left\{1, 2, \cdots, K\right\}$, $\calY = \{0,1\}$, $\Theta = [0,1]^K$, $P_\theta(\cdot|a) \sim \Bernoulli(\theta_a)$, and $R(y) = y$. This is a bandit problem with separable actions -- the observation distribution for each action is parametrized by a corresponding coordinate of $\theta^*.$
\end{example}

\begin{example}[Max-Bernoulli bandit]
	\label{example:max_bernoulli_bandit}
	Let $K, M$ be positive integers with $K \geq 2$ and $M < K$. A max-Bernoulli bandit problem is similar to the Bernoulli bandit, with arms indexed by $\{1,\cdots,K\}$ and each arm is associated with a Bernoulli distribution with a fixed and unknown parameter $\theta^*_a$. The difference is that, in a max-Bernoulli bandit problem, the player picks $M$ different arms at each step instead of one. The reward is the maximum of the $M$ binary values generated by the $M$ selected arms. This problem can be formulated as a stochastic bandit problem with $\Theta = [0,1]^K$, $\calA = \binom{[K]}{M} = \left\{S \subset [K]: |S| = M \right\}$, $\calY = \{0,1\}$. Given $a = (a_1, \cdots, a_M) \in \calA$, observe $Y = \max_{m \in [M]} X_m$, where $X_m \sim \Bernoulli(\theta^*_{a_m})$. That is, the observation model is $P_\theta(\cdot | a) \sim \Bernoulli\left(1 - \prod_{m \in M} (1-\theta_{a_m}) \right)$. The reward function is $R(y) = y$. Actions in the max-Bernoulli bandit problem are not separable. The number of actions, $\binom{K}{M}$, can be much larger than $K$, the dimension of the parameter space. The problem is considered in \cite{GopalanMannorMansour2014}.
\end{example}

\begin{example}[Linear bandit]
	\label{example:linear_bandit}
	A linear bandit problem has two parameters: a positive integer $K$ and $\sigma_W^2 > 0$. It is a stochastic bandit problem with $\Theta = \sR^K$, $\calA = \calS^{K-1} = \{x \in \sR^K: \norm{x}_2 = 1 \}$, the surface of a unit sphere in $\sR^K$, $\calY = \sR$ and $R(y) = y$. Given an action $a \in \calA$, we observe $Y = \angles{\theta^*, a} + W$, where $\theta^* \in \Theta^K$ is fixed and unknown and $W \sim \calN(0, \sigma_W^2)$ is some Gaussian noise. That is, the observation model is $P_\theta(\cdot | a) \sim \calN(\angles{\theta, a}, \sigma_W^2)$. The problem is named ``linear" because the expected reward in each round is an unknown linear function of the action taken. This is an example of a bandit problem in which the dimension of the parameter space is finite, but the number of actions is infinite. 
\end{example}

\subsection{Particle Thompson sampling (PTS)} \label{subsec:setup_and_preliminaries|PTS}

Thompson sampling (TS) is the algorithm for solving stochastic bandit problems, shown in Algorithm \ref{alg:TS_stochastic_bandit}. 

\begin{algorithm}
	\caption{Thompson sampling (TS)}
	\label{alg:TS_stochastic_bandit}
	\textbf{Inputs}: $\calA, \calY, \Theta, P_{\theta}(\cdot | a), R, \theta^*$ \\
	\textbf{Initialization}: prior $\pi_0$ over $\Theta$
	\begin{algorithmic}[1]
		\For{$t = 1,2,\cdots$} 
		\State Sample $\theta_t \sim \pi_{t-1}$
		\State Play $A_t \gets \arg \max_{a \in \calA} \E_{\theta_t} \left[R(Y) | A_t = a\right]$
		\State Observe $Y_t \sim P_{\theta^*}(\cdot | A_t)$
		\State $\text{Update} \; \pi_t$: $\pi_t(\theta) = \frac{P_\theta(Y_t | A_t) \pi_{t-1}(\theta)}{\int_\Theta P_\theta(Y_t | A_t) \pi_{t-1}(\theta) \, \dd \theta} \quad \forall \theta \in \Theta$.
		\EndFor
	\end{algorithmic}
\end{algorithm}

TS is often difficult to implement in practice because $\pi_t$ may not have a closed form. Even if a closed form can be obtained, it is not clear how it can be efficiently stored and be sampled from. The idea of particle Thompson sampling (PTS) (Algorithm \ref{alg:PTS_stochastic_bandit}) is to approximate $\pi_t$ by a discrete distribution $w_t = (w_{t,1}, \cdots, w_{t,N})$ supported on a finite set of fixed particles $\calP_N = \left\{\theta^{(1)}, \cdots, \theta^{(N)} \right\} \subset \Theta$, where $N$ is the number of particles. 

\begin{algorithm}
	\caption{Particle Thompson sampling (PTS)}
	\label{alg:PTS_stochastic_bandit}
	\textbf{Inputs}: $\calA, \calY, \Theta, P_{\theta}(\cdot | a), R, \theta^*, \calP_N$ \\
	\textbf{Initialization}: $w_0 \gets \left(\frac{1}{N}, \cdots, \frac{1}{N}  \right)$ 
	\begin{algorithmic}[1]
		\For{$t = 1,2,\cdots$} 
		\State Generate $\theta_t$ from $\calP_N$ according to weights $w_{t-1}$
		\State Play $A_t \gets \arg \max_{a \in \calA} \E_{\theta_t} \left[R(Y) | A_t = a\right]$
		\State Observe $Y_t \sim P_{\theta^*}(\cdot | A_t)$
		    \For{$i \in \left\{1,2,\cdots, N\right\}$}
		    \State $\widetilde{w}_{t,i} = w_{t-1,i} \; P_{\theta^{(i)}}(Y_t|A_t)$
		    \EndFor
		\State $w_t \gets \text{normalize} \; \widetilde{w}_t$
		\EndFor
	\end{algorithmic}
\end{algorithm}

In practice, one can use a pre-determined set of points $\calP_N$ in $\Theta$, or randomly generate some points from $\Theta$. $\widetilde{w}_{t,i}$ is the unnormalized weight of particle $i$ at time $t$. Step 6 can be alternatively implemented by $\widetilde{w}_{t,i} = \widetilde{w}_{t-1,i} P_{\theta^{(i)}}(Y_t | A_t)$,
with the initialization $\widetilde{w}_0 = w_0$, because it yields the same normalized vectors $w_t$. PTS is very flexible because it does not require any structure on the observation model $P_\theta(\cdot | a)$, as long as the model is given. Steps 5-7 in Algorithm \ref{alg:PTS_stochastic_bandit} are easy to implement: they require only multiplication and normalization. For notational convenience, we denote an instance of particle Thompson sampling with particle set $\calP_N$ by $\PTS(\calP_N)$.

\section{A Sample-Path Analysis of PTS} \label{sec:analysis_of_PTS}

We provide an analysis of PTS in this section. The main result is a sample-path necessary condition for surviving particles based on drift information. 

Notation: Let $I_t \in [N]$ be the index of the particle chosen at time $t$. Thus, $I_t \sim w_{t-1}$. Let $A_t \in \calA$ be the arm chosen at time $t$. Let $A: \Theta \rightarrow \calA$ be the function mapping from a particle to the corresponding optimal arm, defined by $A(\theta) = \arg\max_{a \in \calA} \E_\theta[R(Y)|a]$. If there are multiple maximizers, let $A(\theta)$ be one of them selected deterministically. With a slight abuse of notation, we sometimes abbreviate $A(\theta^{(i)})$ by $A(i)$. So $A_t = A(I_t)$. For any $x \in \sR^N$, define $\supp(x) \triangleq \left\{i \in [N]: x_i \neq 0\right\}$ and $\arg \max x \triangleq \left\{i \in [N]: x_i = \max_{j \in [N]} x_j \right\}$. 

Recall from Algorithm \ref{alg:PTS_stochastic_bandit} that the unnormalized weights of the particles evolve by the equation $\widetilde{w}_{t,i} = \widetilde{w}_{t-1,i} P_{\theta^{(i)}}(Y_t | A_t)$, where $Y_t \sim P_{\theta^*}(\cdot | A_t)$. 

\begin{definition}(Drift matrix)
	\label{definition:general_stochastic_bandit|drift_matrix|drift_matrix}
	For a given $\StochasticBandit(\calA, \Theta, \calY, P_\theta(\cdot|a), R, \theta^*)$ problem and a set of particles $\calP_N \subset \Theta$, the \emph{drift matrix} $D$ is a $N \times N$ matrix, where 
	\begin{equation*}
	\begin{aligned}
		D_{ij} &\triangleq \E\left[\ln \widetilde{w}_{t,j} - \ln \widetilde{w}_{t-1,j} | I_t = i\right] = \E[\ln P_{\theta^{(j)}}(Y_t | A_t)|I_t = i] = \E_{Y \sim P_{\theta^*}(\cdot | A(i)) }\left[\ln P_{\theta^{(j)}}(Y | A(i)) \right] \, ,
	\end{aligned}
	\end{equation*}
	for $i, j \in [N]$. In words, $D_{ij}$ is the (exponential) drift of particle $j$ when particle $i$ is chosen. 
\end{definition}

The following properties of $D$ are readily verified: 1) Entries in $D$ are non-positive; 2) $D$ is independent of time, fundamentally because $\{\widetilde{w}_t\}$ is a time-homogeneous Markov process;  3) Row $i_1$ and row $i_2$ of $D$ are the same if $A(i_1) = A(i_2)$. Therefore $D$ can have at most $\abs{\calA}$ distinct rows. In what follows we consider drift matrices $D$ and $D'$ to be equivalent if each row in $D'$ is equal to the corresponding row of $D$ up to an additive constant.  Therefore, $D$ remains in the same equivalence class if for each $i$ the constant $-\E \left[\ln P_{\theta^*}(Y | A(i))\right]$ is added to row $i.$  Therefore, a representative choice of $D$ is the following:
\begin{equation*}
    \begin{aligned}
        D_{ij} &\overset{\text{equivalent}}{=} -\E_{Y \sim P_{\theta^*}(\cdot | A(i))} \left[\ln \frac{P_{\theta^*}(Y | A(i))}{P_{\theta^{(j)}}(Y | A(i))} \right] = -\KL\left(P_{\theta^*}(\cdot | A(i)) \; \big|\big| \; P_{\theta^{(j)}} (\cdot | A(i)) \right) .
    \end{aligned}
\end{equation*}

Here $D_{ij}$ is the negative of KL divergence between distributions $P_{\theta^*}(\cdot | A(i))$ and $P_{\theta^{(j)}}(\cdot | A(i)).$  In this sense, the $i$th row of $D$ gives the relative fitness of the particles for action $A(i)$, and the $j^{th}$ column of $D$ gives the fitness of particle $j$ for action $A(i)$ varying over all $i.$

We need the following two assumptions before the main result.

\begin{assumption}[Sample path assumptions]
	\label{assumption:PTS_for_general_stochastic_bandit|sample_path_asymptotic_particle_behavior|sample_path_assumptions}

	Consider the problem $\StochasticBandit(\calA, \Theta, \calY, P_\theta(\cdot|a), R, \theta^*)$ and suppose
	$\PTS(\calP_N)$ is run for a set of $N$ particles $\calP_N \subset \Theta$. Assume that the sample path satisfies the following: there exists a non-empty set $S \subset [N]$ that satisfies\footnote{There are two additional technical assumptions on sample-path, which are put in appendix Section \ref{appendix:analysis_of_PTS} to save space.}
	\begin{itemize}
		\item[(a)] (Non-zero decaying rate gap) For any $i \not\in S$ and $j \in S$,  $\limsup_{t \rightarrow \infty} \frac{1}{t} \left( \ln \widetilde{w}_{t,i} -  \ln \widetilde{w}_{t,j} \right) < 0$, and
		\item[(b)] (Existence of survivor limiting distribution) $G_t = \left(\ln \widetilde{w}_{t,i} - \ln \widetilde{w}_{t,j}: i, j \in S\right) \in \sR^{|S| \times |S|}$ has a limiting empirical distribution $\mu_{G}$. In other words, for any bounded continuous function $h$ on $\sR^{|S| \times |S|}$, $\frac{1}{t} \sum_{\tau=0}^t h(G_\tau) \rightarrow \E_{\mu_G}[h]$.
	\end{itemize}
\end{assumption}

The set $S$ can be thought of as the set of surviving particles. Assumption \ref{assumption:PTS_for_general_stochastic_bandit|sample_path_asymptotic_particle_behavior|sample_path_assumptions}(a) says the (unnormalized) weight decaying rate of a non-surviving particle is strictly less than that of a surviving particle. Consequently, the weight of a non-surviving particle converges to 0 exponentially fast. Assumption \ref{assumption:PTS_for_general_stochastic_bandit|sample_path_asymptotic_particle_behavior|sample_path_assumptions}(b) says that the process $G_t$ has some ergodicity property. It is similar to saying that $G_t$ is Harris recurrent, except $G_t$ is not Markov, because it excludes information about particles not in $S$. Note that knowing any row of $G_t$ determines all the other entries of $G_t$.  
 
\begin{assumption}[Boundedness of observation model]
	\label{assumption:PTS_for_general_stochastic_bandit|sample_path_asymptotic_particle_behavior|boundedness_of_observation_likelihood}
	Assume that the observation model $P_{\theta}(\cdot | a)$ satisfies: there exists constants $b_0, B_0 > 0$, such that for any $\theta, \theta' \in \Theta$, $b_0 \leq \frac{P_\theta(y|a)}{P_{\theta'}(y|a)}  \leq B_0$ for any $y \in \calY, a \in \calA$. 
\end{assumption}

The assumption can be easily verified for problems in which $\abs{\calY} < \infty$ and $\abs{\calA} < \infty$, for example, the Bernoulli bandit and max-Bernoulli bandit problems. 

Define a probability vector $\pi$ over $[N]$ by $\pi_i  = \lim_{t \rightarrow \infty}$ $\frac{1}{t+1} \sum_{\tau=0}^t w_{\tau, i}$. That is, $\pi_i$ is the limiting running average weight of particle $i$, if it exists. The following proposition shows the relationship between $\pi$ and the drift matrix $D$ and provides a necessary condition for surviving particles in a sample path.  

\begin{proposition}[Sample-path necessary surviving condition]
	\label{proposition:sample_path_necessary_survival_condition}
	Let $\StochasticBandit(\calA, \Theta, \calY, P_\theta(\cdot|a), R, \theta^*)$ be a given problem and $\calP_N \subset \Theta$ a given set of $N$ particles. Suppose $P_{\theta}(\cdot | a)$ satisfies Assumption \ref{assumption:PTS_for_general_stochastic_bandit|sample_path_asymptotic_particle_behavior|boundedness_of_observation_likelihood}. Consider running $\PTS(\calP_N)$ for the problem. 
	Let $D$ be the drift matrix. For a sample path of the algorithm under Assumption \ref{assumption:PTS_for_general_stochastic_bandit|sample_path_asymptotic_particle_behavior|sample_path_assumptions}, $\pi$ is well defined and satisfies 
	\begin{equation}
		\label{eq:PTS_for_general_stochastic_bandit|sample_path_asymptotic_particle_behavior|necessary_survival_condition}
		\arg \max (\pi D) = \supp(\pi) = S \, ,
	\end{equation}
	where $S$ is the set in Assumption \ref{assumption:PTS_for_general_stochastic_bandit|sample_path_asymptotic_particle_behavior|sample_path_assumptions}. 
\end{proposition}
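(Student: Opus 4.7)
The plan is to work on the log scale. Writing $L_t(i) := \ln \widetilde{w}_{t,i}$, the PTS update gives
\[
L_t(i) - L_0(i) \;=\; \sum_{\tau=1}^t \ln P_{\theta^{(i)}}(Y_\tau \mid A_\tau).
\]
Since adding an $i$-independent constant to every coordinate of $\pi D$ does not change $\arg\max(\pi D)$, I choose the KL representative $D_{j,i} = -\KL(P_{\theta^*}(\cdot \mid A(j)) \,\|\, P_{\theta^{(i)}}(\cdot \mid A(j)))$. Subtracting the $i$-free term $\sum_{\tau=1}^t \ln P_{\theta^*}(Y_\tau \mid A_\tau)$ from both sides rewrites the right-hand side as $\sum_{\tau=1}^t D_{I_\tau, i} + M_t^{(i)}$, where $M_t^{(i)}$ is the martingale with increments $\ln (P_{\theta^{(i)}}/P_{\theta^*})(Y_\tau \mid A_\tau) - D_{I_\tau,i}$. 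By Assumption 2 these increments are uniformly bounded, so Azuma--Hoeffding yields $M_t^{(i)}/t \to 0$ almost surely.

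Next I establish that $\pi$ is well-defined and $\supp(\pi) = S$. For a non-survivor $i \not\in S$ and any $j \in S$, Assumption 1(a) gives $L_t(i) - L_t(j) \to -\infty$, so $w_{t,i}/w_{t,j} \to 0$; combined with $w_{t,j} \leq 1$ this forces $w_{t,i} \to 0$ and hence $\pi_i = 0$. For $i \in S$, fix a reference $i_0 \in S$ and write $w_{t,i} = \psi(G_t)/(1+\epsilon_t)$, where $\psi(G) := e^{g_{i,i_0}}/\sum_{k \in S} e^{g_{k,i_0}}$ is bounded and continuous on $\sR^{|S|\times|S|}$ and $\epsilon_t := \sum_{k \not\in S} e^{L_t(k) - L_t(i_0)} / \sum_{k \in S} e^{L_t(k) - L_t(i_0)} \to 0$ by the same non-survivor bound. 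Cesaro averaging together with Assumption 1(b) gives $\frac{1}{t+1}\sum_{\tau=0}^t w_{\tau,i} \to \E_{\mu_G}[\psi] \in (0,1]$, so $\pi_i$ exists and is strictly positive.

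An analogous bounded-increment martingale argument applied to the sampling step $I_\tau \sim w_{\tau-1}$ yields $N_j(t)/t \to \pi_j$, where $N_j(t) := \sum_{\tau=1}^t \mathbf{1}\{I_\tau = j\}$. Substituting into the identity from the first paragraph gives $\frac{1}{t}(L_t(i) - L_t(j)) \to (\pi D)_i - (\pi D)_j$ for every $i,j$. Assumption 1(a) then forces $(\pi D)_i < (\pi D)_j$ whenever $i \not\in S$ and $j \in S$, while Assumption 1(b) forces equality for $i,j \in S$; together with $\supp(\pi) = S$, this yields $\arg\max(\pi D) = S$.

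The main obstacle is deducing $(L_t(i) - L_t(j))/t \to 0$ for $i,j \in S$ from the limiting-empirical-distribution part of Assumption 1(b), since that assumption only controls time averages of bounded continuous test functions of $G_\tau$ and not the raw process. My plan is to exploit that $G_t$ has uniformly bounded increments by Assumption 2: if $|G_t|/t$ had positive limsup $c$ along some subsequence $t_n$, bounded increments would force $|G_\tau| \geq c t_n/4$ for a positive fraction of $\tau \in [t_n/2, t_n]$, placing non-vanishing empirical mass outside $[-K,K]$ for every $K > 0$ and contradicting the tightness of $\mu_G$. The remainder of the argument is routine bookkeeping around the martingale LLN and Cesaro averaging.
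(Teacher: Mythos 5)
Your proposal is correct and follows essentially the paper's own route: establish $\supp(\pi)=S$ by writing survivor weights as bounded continuous functions of $G_t$ plus a vanishing correction and invoking Assumption 1(b), show the time-averaged log-weight drifts converge to $\pi D$, and then use exactly the paper's key lemma-type argument (bounded increments of $G_t$ from Assumption 2 plus the limiting empirical distribution forces $G_t(i,j)/t \to 0$) to get equality of $(\pi D)_i$ on $S$, with Assumption 1(a) giving the strict inequality for $i \notin S$. The only difference is bookkeeping: where the paper encodes the probabilistic inputs as two additional sample-path assumptions (justified almost surely by Azuma--Hoeffding and the SLLN) and treats particles played finitely versus infinitely often separately, you obtain the same facts directly from a bounded-increment martingale decomposition of $\ln \widetilde{w}_{t,i}$ centered at $\ln P_{\theta^*}$, a harmless and arguably slightly cleaner variation.
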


The proposition says that, if a set of particles $S$ were to survive in a sample path, they must have a limiting average selection distribution $\pi$ that satisfies (\ref{eq:PTS_for_general_stochastic_bandit|sample_path_asymptotic_particle_behavior|necessary_survival_condition}).  The $j$th coordinate of $\pi D$, $(\pi D)_j$, is equal to $\angles{\pi, D_{\cdot j}}$, where $D_{\cdot j} = (D_{1j}, \cdots, D_{Nj})$ is the $j$th column of $D$, the drifts of particle $j$ when particles $1, 2, \cdots, N$ are chosen, which we recall can be interpreted as the fitness of particle $j$. Thus, $(\pi D)_j$ is the average fitness of particle $j,$ assuming distribution $\pi$ is used to select a random action $A(i).$ Therefore,  (\ref{eq:PTS_for_general_stochastic_bandit|sample_path_asymptotic_particle_behavior|necessary_survival_condition}) means that, with respect to distribution $\pi$, each surviving particle has the same average fitness, and the average fitness of each non-surviving particle is strictly smaller. This aligns with our observation in experiments: \emph{fit particles survive, unfit particles decay}. Note the following caveat: Proposition \ref{proposition:sample_path_necessary_survival_condition} provides a sample-path condition for surviving particles. The actual set of survivors may be random. Thus, there may be more than one $\pi$ that satisfies  \eqref{eq:PTS_for_general_stochastic_bandit|sample_path_asymptotic_particle_behavior|necessary_survival_condition}. 

Applying Proposition \ref{proposition:sample_path_necessary_survival_condition} to Bernoulli bandit with randomly generated particles in PTS, yields the following corollary that says that not many particles can survive. 

\begin{corollary}
	\label{corollary:PTS_for_K_arm_Bern_bandit|number_of_surviving_particles_is_no_more_than_K}
	Let $\calP_N$ be a set of $N$ points generated independently and uniformly at random from $[0,1]^K$. Consider running $\PTS(\calP_N)$ for a given Bernoulli bandit problem with $K$ arms and with $\theta^* \in [0,1]^K$. Suppose that any sample path satisfies Assumption \ref{assumption:PTS_for_general_stochastic_bandit|sample_path_asymptotic_particle_behavior|sample_path_assumptions}. Then with probability one, at most $K$ particles can survive, i.e. $\abs{\supp(\pi)} \leq K$. 
\end{corollary}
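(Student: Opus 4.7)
The plan is to derive from Proposition~\ref{proposition:sample_path_necessary_survival_condition} a rank condition on a certain matrix, and then show that this condition fails with probability one for uniformly random particles. Assume for contradiction that $|S| \geq K+1$. Let $a_1, \ldots, a_L$ be the distinct elements of $\{A(i) : i \in S\}$, so $L \leq K < |S|$, and set $\alpha_\ell = \sum_{i \in S,\, A(i) = a_\ell} \pi_i$. Using the KL-divergence form of the drift matrix, condition (\ref{eq:PTS_for_general_stochastic_bandit|sample_path_asymptotic_particle_behavior|necessary_survival_condition}) forces, for every $j \in S$,
\begin{equation*}
\sum_{\ell=1}^L \alpha_\ell \cdot \bigl(-\KL(\Bern(\theta^*_{a_\ell}) \,\|\, \Bern(\theta^{(j)}_{a_\ell}))\bigr) \;=\; c,
\end{equation*}
for a common value $c$. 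Writing $(D_\ell)_j \triangleq -\KL(\Bern(\theta^*_{a_\ell}) \,\|\, \Bern(\theta^{(j)}_{a_\ell}))$ for $j \in S$ and $\mathbf{1}$ for the all-ones vector in $\sR^{|S|}$, the existence of such $(\alpha,c)$ with $\alpha \geq 0$ and $\sum_\ell \alpha_\ell = 1$ gives a nontrivial linear dependence $\sum_\ell \alpha_\ell D_\ell - c \mathbf{1} = 0$, so the $(L+1) \times |S|$ matrix $M$ with rows $D_1^\top, \ldots, D_L^\top, \mathbf{1}^\top$ must have $\mathrm{rank}(M) \leq L$.

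The main step is to show $\mathrm{rank}(M) = L+1$ almost surely, yielding the desired contradiction. Pick any $L+1$ particles $j_1, \ldots, j_{L+1} \in S$ and consider the $(L+1) \times (L+1)$ submatrix $\bar M$ with $(\ell,k)$-entry $h_\ell(\theta^{(j_k)}_{a_\ell})$ for $\ell \leq L$ and $1$ for $\ell = L+1$, where $h_\ell(p) \triangleq -\KL(\Bern(\theta^*_{a_\ell}) \,\|\, \Bern(p))$. Because $a_1, \ldots, a_L$ are distinct actions and $j_1, \ldots, j_{L+1}$ are distinct particle indices, the $L(L+1)$ variables $u_{\ell,k} \triangleq \theta^{(j_k)}_{a_\ell}$ are independent $\Uniform[0,1]$ random variables, and each $h_\ell$ is a non-constant real-analytic function on $(0,1)$ (strict convexity of KL in its second argument). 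Hence $\det(\bar M)$ is real-analytic in $(u_{\ell,k})$. I would show by induction on $L$ that $\det(\bar M) \not\equiv 0$: expanding along entry $(1,1)$ gives $\det(\bar M) = h_1(u_{1,1})\cdot \det(\bar M^{(1,1)}) + (\text{terms independent of } u_{1,1})$, where the minor $\bar M^{(1,1)}$ has exactly the same structure with functions $h_2, \ldots, h_L$. The inductive hypothesis lets one fix the remaining variables so that $\det(\bar M^{(1,1)}) \neq 0$; then non-constancy of $h_1$ forces $\det(\bar M)$ to be non-constant in $u_{1,1}$, hence not identically zero. A real-analytic function that is not identically zero vanishes only on a set of Lebesgue measure zero, so $\det(\bar M) \neq 0$ almost surely, whence $\mathrm{rank}(M) = L+1$ almost surely, contradicting $\mathrm{rank}(M) \leq L$.

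To finish, I would union-bound over the finitely many subsets $T \subset [N]$ with $|T| \geq K+1$ and the finitely many ordered choices of distinct actions $(a_1, \ldots, a_L) \in [K]^L$ with $L \leq K$: for each such pair, the event that the associated matrix has deficient rank is a null set, so the union remains null. Thus, almost surely no set of size larger than $K$ can serve as $S$ in Proposition~\ref{proposition:sample_path_necessary_survival_condition}, giving $|\supp(\pi)| \leq K$. The main obstacle is the inductive non-vanishing argument for $\det(\bar M)$; this relies crucially on the disjointness of the sets of random variables feeding different rows (which comes from $a_1, \ldots, a_L$ being distinct) and the non-constancy of each $h_\ell$. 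Once that is in hand, the rest is a standard analytic-genericity plus union-bound argument.
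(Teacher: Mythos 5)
Your proposal is correct and follows essentially the same route as the paper's proof: aggregate the surviving particles' limiting weights by their optimal arm, use $\arg\max(\pi D)=\supp(\pi)=S$ to extract a nontrivial linear relation among the negative-KL-divergence rows, and rule it out by a genericity argument for i.i.d.\ uniform particles. The only difference is one of detail: where the paper simply asserts that with probability one the vectors $\widetilde{D}_i-\widetilde{D}_j$, $i,j\in S$, span $\sR^K$ when $|S|>K$, you supply that genericity step explicitly (real-analytic determinant, induction via cofactor expansion, union bound over particle subsets and action tuples), which strengthens the write-up but is not a different method.
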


We suspect that something similar can be said about the fewness of survivors for other bandit problems in which the action space has a finite dimension $K$ (the number of actions may be much larger). But we don't have a proof.

Proofs of Proposition \ref{proposition:sample_path_necessary_survival_condition} and Corollary \ref{corollary:PTS_for_K_arm_Bern_bandit|number_of_surviving_particles_is_no_more_than_K} can be found in Appendix Section \ref{appendix:analysis_of_PTS}. For more evidence and intuition of the assumptions and conclusions of Proposition \ref{proposition:sample_path_necessary_survival_condition} and Corollary \ref{corollary:PTS_for_K_arm_Bern_bandit|number_of_surviving_particles_is_no_more_than_K}, see Appendix Section \ref{appendix:PTS_for_two_arm_Bernoulli_bandit}, where a thorough analysis of PTS for two-arm Bernoulli bandit is provided. 

\section{RPTS: Regenerative Particle Thompson Sampling} \label{sec:RPTS}

This section  proposes \emph{regenerative particle Thompson sampling} (RPTS) and demonstrates its performance by simulation. 
Recall that, in PTS, fit particles survive, unfit particles decay, and most particles eventually decay. When the weights of the decaying particles become so small that they become essentially inactive, continuing using these particles would be a waste of computational resource. A natural thing to do is to delete those decaying particles and use the saved computational resource to improve the algorithm. RPTS (Algorithm \ref{alg:RPTS}) is based on the following heuristic inspired by biological evolution: \emph{delete unfit decaying particles, regenerate new particles in the vicinity of the fit surviving particles.}

\begin{algorithm}[h]
	\caption{Regenerative particle Thompson sampling (RPTS)}
	\label{alg:RPTS}
	\textbf{Input}:  $\calA, \calY, \Theta \subset \sR^K, P_{\theta}(\cdot | a), R, \theta^*, \calP_N$ \\
	\textbf{Parameters}: $N$, $f_{del} \in (0,1)$, $w_{inact} \in (0,1)$, $w_{new} \in (0,1)$ \\
	\textbf{Initialization}: $w_0 \gets \left(\frac{1}{N}, \cdots, \frac{1}{N}\right)$
	\begin{algorithmic}[1]
		\For{$t = 1,2,\cdots$} 
		\State Generate $\theta_t$ from $\calP_N$ according to weights $w_{t-1}$
		\State Play $A_t \gets \arg \max_{a \in \calA} \E_{\theta_t} \left[R(Y) | A_t = a\right]$
		\State Observe $Y_t \sim P_{\theta^*}(\cdot | A_t)$
		\For{$i \in \left\{1,2,\cdots, N\right\}$}
		\State $\widetilde{w}_{t,i} = w_{t-1,i} \; P_{\theta^{(i)}}(Y_t|A_t)$ 
		\EndFor
		\State $w_t \gets \text{normalize} \; \widetilde{w}_t$
		\If {CONDITION($w_t, N, f_{del}, w_{inact}$) = True}
		\State $\calI_{del} \gets$ the indices of the lowest weighted $\ceil{f_{del}N}$ particles in $\calP_N$ 
		\State $\{\theta^{(i)}: i \in \calI_{del}\} \overset{\text{replace}}{\gets}$ RPTS-Exploration
		\State $w_{t,i} \gets \frac{w_{new}}{\ceil{f_{del}N}}$ for each $i \in \calI_{del}$
		\State normalize $w_t$
		\EndIf
		\EndFor
	\end{algorithmic}
\end{algorithm}

\begin{algorithm}[h]
	\begin{algorithmic}[0]
		\State CONDITION($w_t, N, f_{del}, w_{inact}$):
		\State $w'_t \gets$ sort $w_t$ in ascending order
		\State If $\sum_{i=1}^{\ceil{f_{del}N}} w'_{t,i} \leq w_{inact}$: Return True
		\State Else: Return False
	\end{algorithmic}
\end{algorithm}

\begin{algorithm}[h!]
	\begin{algorithmic}[0]
		\State RPTS-Exploration:
		\State $\mu_t \gets \E_{\theta \sim w_t}[\theta]$, $\Sigma_t \gets \E_{\theta \sim w_t}[(\theta-\mu_t)(\theta-\mu_t)^T]$
		\State Generate $\ceil{f_{del}N}$ particles $\overset{i.i.d.}{\sim} \calN(\mu_t, \frac{1}{K} \tr(\Sigma_t)I_K)$, project to $\Theta$
	\end{algorithmic}
\end{algorithm}

Steps 1-8 of RPTS are the same as PTS (Algorithm \ref{alg:PTS_stochastic_bandit}). The difference is that RPTS adds steps 9-14. Three new hyper-parameters are introduced: $f_{del}$, the fraction of particles to delete; $w_{inact}$, the weight threshold for deciding inactive particles; $w_{new}$, the new (aggregate) weight of regenerated particles. The CONDITION in Step 9 checks if $f_{del}$ fraction of the particles become inactive. If so, we find the lowest weighted $f_{del}$ fraction of the particles (Step 10), delete them, and regenerate the same number of particles through RPTS-Exploration (Step 11). In RPTS-Exploration, we first calculate the empirical mean $\mu_t$ and covariance matrix $\Sigma_t$ of all the particles based on their current weights $w_t$\footnote{According to the RPTS heuristic, one may expect to calculate $\mu_t$ and $\Sigma_t$ based on the weights of the surviving particles only, instead of all the particles. But because the surviving particles have a total weight of at least $1-w_{inact}$, close to 1, the difference is negligible.}, i.e. $\mu_t = \sum_{i=1}^N w_{t,i} \theta^{(i)}$ and $\Sigma_t = \sum_{i=1}^N w_{t,i} \left(\theta^{(i)}-\mu_t\right)\left(\theta^{(i)}-\mu_t\right)^T$, then generate the new particles according to a multi-variate Gaussian distribution. $I_K$ is the $K \times K$ identity matrix. We use $\frac{1}{K}\tr(\Sigma_t)I_K$ as the covariance matrix instead of $\Sigma_t$, in case $\Sigma_t$ is or close to singular. This particle regeneration strategy requires that the parameter space $\Theta$ is a subset of $\sR^K$. If a newly generated particle is outside of $\Theta$, we project it to $\Theta$ in any natural way.\footnote{Alternatively, we can reject it and regenerate until it is in $\Theta$.} Step 12 means that the newly generated $\ceil{f_{del}N}$ particles are assigned a total weight of $w_{new}$ and each of them has the same weight. 

Typical values of the three hyperparameters are $f_{del} = 0.8$, $w_{inact} = 0.001$ and $w_{new} = 0.01$. Section \ref{appendix:RPTS} in appendix elaborates on the choice of these values.

We run simulations\footnote{Code is available if the paper is accepted.}
to compare RPTS with PTS and TS. Selected results are shown in Figure \ref{fig:RPTS_simulation}. For the Bernoulli bandit problem, TS is implemented as a bench mark. For max-Bernoulli bandit, it is not clear how TS can be implemented. Each curve is obtained by averaging over 200 independent simulations. In each simulation of PTS or RPTS, the initial particles are generated uniformly at random from $[0,1]^K$.

\begin{figure}[h]
\centering
\subcaptionbox{Bernoulli bandit with $K=10$ \\ $\theta^* = [0.51, 0.52, \cdots, 0.60].$}{\includegraphics[width=0.48\columnwidth]{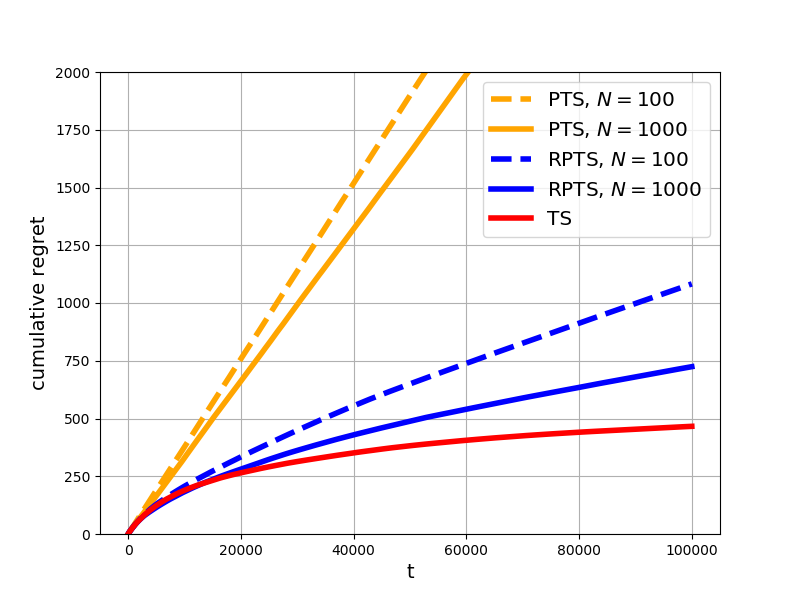}}%
\subcaptionbox{Bernoulli bandit with $K=100$ \\ $\theta^*$ consists of $N=100$ points uniformly spaced over [0.5,0.7].}{\includegraphics[width=0.48\columnwidth]{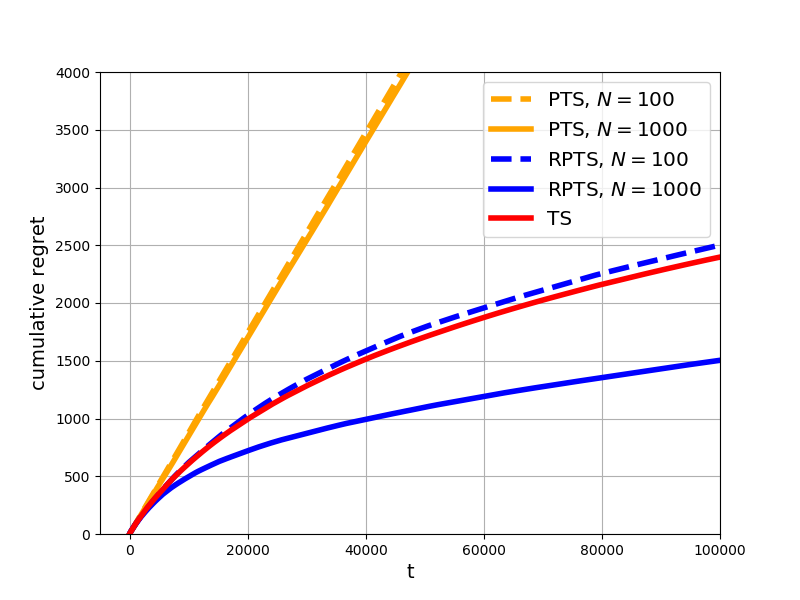}}%

\subcaptionbox{Max-Bernoullin bandit with $K=100$, $M=5$ \\ $\theta^*$ consists of $N=100$ points uniformly spaced over [0.3,0.8].}{\includegraphics[width=0.48\columnwidth]{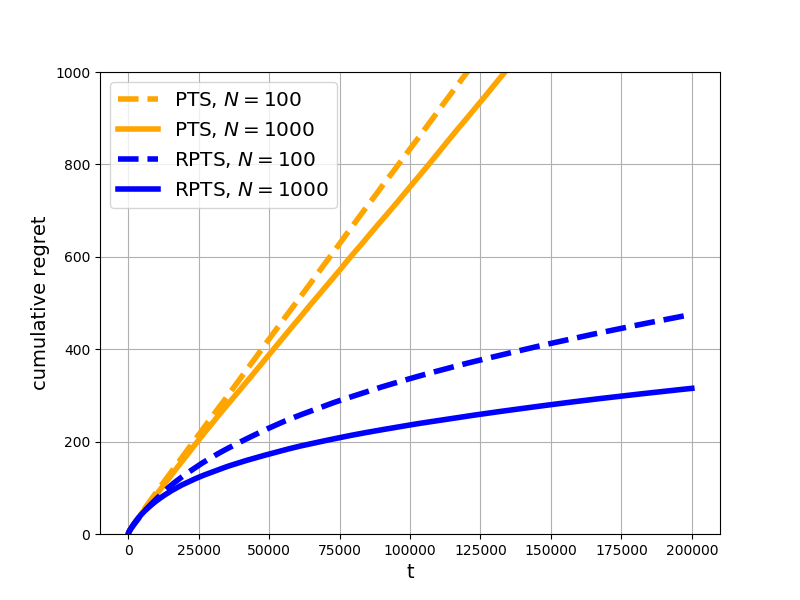}}%
\caption{Simulations}
\label{fig:RPTS_simulation}
\end{figure}

\section{Application to Network Slicing} \label{sec:application}

In this section, we describe an application of PTS and RPTS to 5G network slicing. Network slicing is the partition of a network infrastructure into logically independent networks across multiple technology domains, in order to support independent vertical services with heterogeneous requirements. A network slice is an end-to-end virtual network, formed by stitching resources across different domains. Although network slicing is a promising technology, there remain many challenges both on the system level and theory level, see \cite{NencioniGarroppoGonzalezHelvikProcissi_5G} for a detailed account. One main challenge is the complexity in the coordination and integration of resources at different domains, which necessitates a centralized control for resource allocation and cross-dodmain coordination for stitching the slice.
We propose a high-level model that captures the main features and challenges of the network slicing process and solve it using PTS and RPTS.

\subsection{Model}

On a high level, a mobile operator creates network slices across domains on-demand, which are then put into use and exhibits certain performance. The system observes each domain behaviors, e.g., latency, to make better decisions in the future. We formulate the problem as a contextual stochastic bandit problem by specifying the following elements: $(\calC, \calA, \calY, \Theta, \theta^*, P_\theta(\cdot | a, c), R)$. See Figure \ref{fig:application|models|general_network_slicing_model}. 

\begin{figure}[h]
\begin{center}
\centerline{\includegraphics[width=0.7\columnwidth]{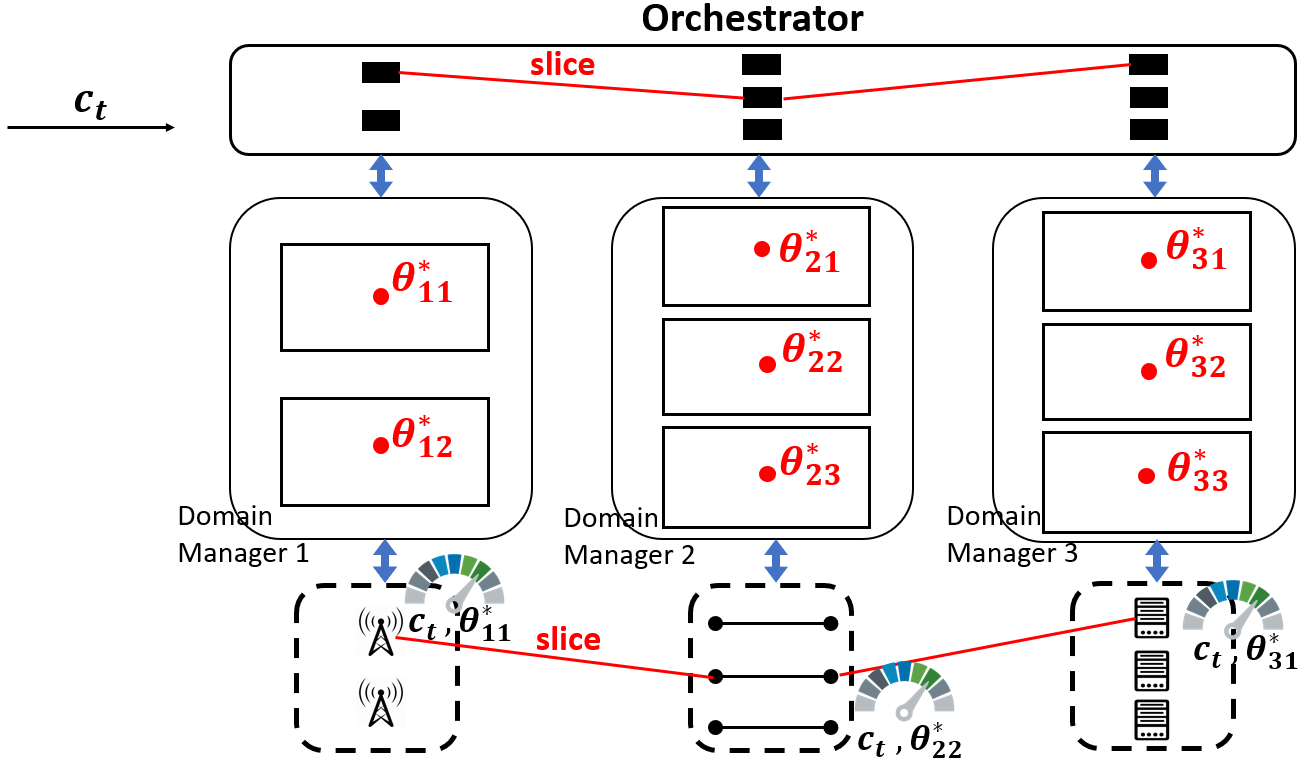}}
\caption{A network slicing model.}
\label{fig:application|models|general_network_slicing_model}
\end{center}
\vskip -0.3in
\end{figure}

\textbf{Context set $\calC$}. Let $\calC = [0,1]^2$. A context vector $c = (c_1, c_2)$ represents a slice request, characterizing the load and latency requirements for the intended service. Specifically, $c_1 \in [0,1]$ is the scaled offered load, relative to some maximum load that the mobile operator can support. For example, if the maximum supportable load is $20$Gbps and $c_1 = 0.5$, then the requested load is $20 \cdot c_1 = 10$Gbps. Let $c_2 \in [0,1]$ be the inverse end-to-end latency requirement, scaled by the minimum possible. For example, if the minimum latency the network can support is 1ms and $c_2 = 0.5$, then the latency required by the service provider is $\frac{1}{c_2} = 2$ms.  

\textbf{Action space $\calA$}. Let $\calA = [B_1] \times \cdots \times [B_D]$, where $D$ is the number of domains, $B_i$ is the number of resource blocks in domain $i$, and $[n]$ is short for $\{1,2,\cdots,n\}$. That is, an action $a = (a_1, \cdots, a_D)$ is a stitched chain of resource blocks, one from each domain, that form an end-to-end network slice. The resource blocks model the resources available in each domain, regardless of their specific types. Block $j$ in domain $i$ is denoted as $\Block_{ij}$. At time $t$, the mobile operator selects an action $A_t \in \calA$ through the central orchestrator. In Figure \ref{fig:application|models|general_network_slicing_model}, $D = 3$, $(B_1, B_2, B_3) = (2,3,3)$, and the action selected is $(1,2,1)$. In practice, $D$ and $B_i$'s are not large.  

\textbf{Parameter space $\Theta$ and parameter $\theta^*$.} The parameter space is $\Theta = \Theta_1 \times \cdots \times \Theta_D$, where $\Theta_i = \underbrace{[0,1]^2 \times \cdots \times [0,1]^2}_{B_i \; \text{such terms}}$ is the parameter space of domain $i$. Thus, the dimension of $\Theta$ is $ \sum_{i=1}^D 2B_i$. The system parameter is $\theta^* = \left(\theta^*_{ij} \right)_{i \in [D], j \in [B_i]}$, where $\theta^*_{ij} = (\theta^*_{ij1}, \theta^*_{ij2}) \in [0,1]^2$ reflects some intrinsic properties of $\Block_{ij}$.

\textbf{Observation space $\calY$}. Let $\calY = \calY_1 \times \cdots \times \calY_D$ be the observation space of the whole system, where $\calY_i = [0,\infty)$ for each $i$. Given that action $a = (a_1, \cdots, a_D)$ is taken, the resource blocks $(\Block_{1, a_1}, \cdots, \Block_{D, a_D})$ are selected. $Y_i \in \calY_i$ is the observed latency in domain $i$, exhibited by $\Block_{i, a_i}$. Assume that $Y_i$ is observable by domain manager $i$ for each $i$. $Y_t = (Y_{t,1}, \cdots, Y_{t, D}) \in \calY$ is the system performance observed in all $D$ domains at time $t$. 

\textbf{Observation Model $P_\theta(\cdot | a, c)$}. Given action $a = (a_1, \cdots, a_D)$ and context $c = (c_1, c_2)$, the observation $Y = (Y_1, \cdots, Y_D)$ is generated by the following distribution: $Y_i$'s are independent and each $Y_i$ follows an exponential distribution with $\E[Y_i] = c_1 \theta^*_{ij1} + \theta^*_{ij2}$, where $j = a_i$. An interpretation of this expression is that the expected latency $\E[Y_i]$ exhibited by domain $i$ is positively related to the offered load $c_1$ of the requested service, due to queueing effects. $\theta^*_{ij1}$ is the rate at which the latency scales with the offered load at $\Block_{ij}$, $\theta^*_{ij2}$ is the baseline latency at $\Block_{ij}$. 

\textbf{Reward function $R$}. The reward function $R: \calY \times C \rightarrow \sR$ is defined by $R((Y_1, Y_2, Y_3), (c_1, c_2)) = g_{c_2}(Y_1 + Y_2 + Y_3)$, where $g_d$ for $0 \leq d \leq 1$ is defined by 
	\begin{equation*}
		g_d(y) = \left\{\begin{array}{cccc}
			\frac{y}{d} & if & 0 \leq y \leq d \\
			0 & if & y > d \,  \\
		\end{array} \right. .
	\end{equation*}
	
This reward function is based on two ideas. First, the minimum latency requirement $c_2$ in the context serves as a Service Level Agreement (SLA) between the mobile operator and the service provider. If the actual end-to-end latency is larger than $c_2$, SLA is violated and the mobile operator gets a huge penalty (zero reward). Second, minimizing the latency as much as possible might be an overkill, which could be costly. The mobile operator would be content with an observed latency that just meets the target. 

\subsection{Algorithm}

\begin{algorithm}[h]
	\caption{PTS for contextual stochastic bandit (per-system particles)}
	\label{alg:PTS_for_contextual_stochastic_bandit}
	\textbf{Inputs}: $\calC, \calA, \calY, \Theta, \theta^*, P_{\theta}(\cdot | a,c), R, \calP_N \subset \Theta$\\
	\textbf{Initialization}: $w_0 \gets \left(\frac{1}{N}, \cdots, \frac{1}{N}  \right)$
	\begin{algorithmic}[1]
		\For{$t = 1,2,\cdots$} 
		\State Get $c_t$
		\State Generate $\theta_t$ from $\calP_N$ according to weights $w_{t-1}$
		\State Play $A_t \gets \arg \max_{a \in \calA} \E_{\theta_t} \left[R(Y) | A_t = a, c_t\right]$
		\State Observe $Y_t \sim P_{\theta^*}(\cdot | A_t, c_t)$
		\For{$k \in \left\{1,2,\cdots, N\right\}$}
		\State $\widetilde{w}_{t,k} = w_{t-1,k} \; P_{\theta^{(k)}}(Y_t|A_t, c_t)$ 
		\EndFor
		\State $w_t \gets \text{normalize} \; \widetilde{w}_t$
		\EndFor
	\end{algorithmic}
\end{algorithm}

PTS (Algorithm \ref{alg:PTS_stochastic_bandit}) can be easily updated to include contexts, shown below in Algorithm \ref{alg:PTS_for_contextual_stochastic_bandit}. RPTS (Algorithm \ref{alg:RPTS}) can be similarly updated to include contexts: just update steps 1-8 of Algorithm \ref{alg:RPTS} to steps in Algorithm \ref{alg:PTS_for_contextual_stochastic_bandit}.

\begin{figure}[h]
\begin{center}
\centerline{\includegraphics[width=0.6\columnwidth]{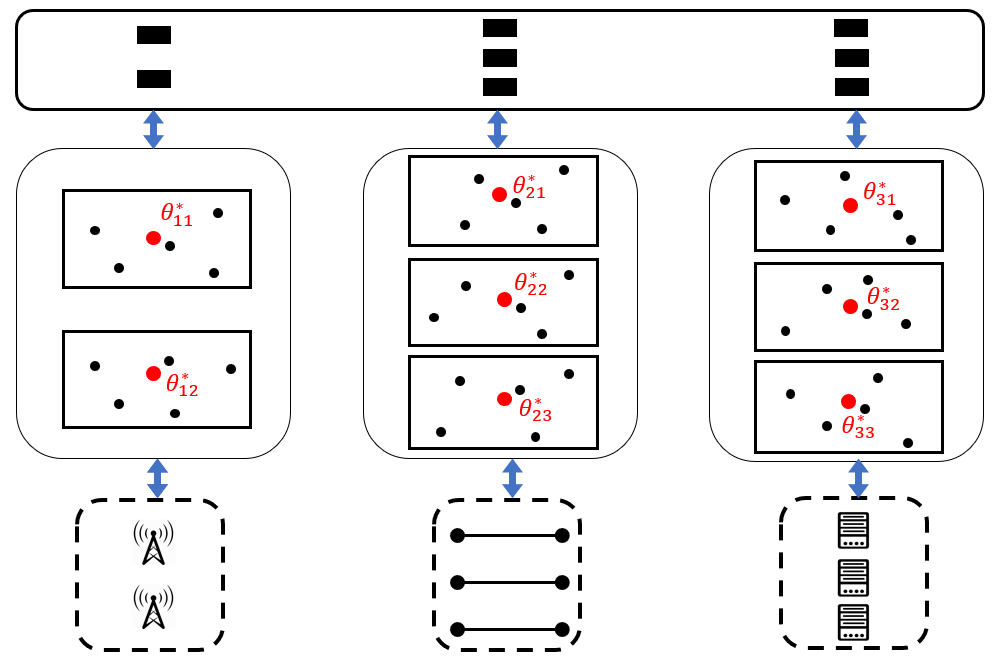}}
\caption{Per-block particles implementation.}
\label{fig:application|models|two_ways_of_generating_particles}
\end{center}
\vskip -0.2in
\end{figure}

In Algorithm \ref{alg:PTS_for_contextual_stochastic_bandit}, each particle in $\calP_N$ has the same dimension as $\theta^* \in \Theta$. However, due to the independence and availability of observations across the domains for this particular model, there is a more effective way to construct the particles and update their weights, called per-block particles, as follows (See Figure \ref{fig:application|models|two_ways_of_generating_particles} for an illustration). For each $\Block_{ij}$, we generate a set of $N$ particles $\calP_{ij} = \left\{\theta^{(1)}_{ij}, \cdots, \theta^{(N)}_{ij}\right\} \subset [0,1]^2$, which have weights $w_{t,ij} = \left(w_{t,ij,1}, \cdots, w_{t,ij,N} \right)$ at time $t$. In step 3 of Algorithm \ref{alg:PTS_for_contextual_stochastic_bandit}, we generate $\theta_t = \left\{\theta_{t,ij} \right\}_{i \in [D], j \in [B_i]}$ by  generating each $\theta_{t,ij}$ from $\calP_{ij}$ according to weights $w_{t, ij}$. Steps 6-8 of Algorithm \ref{alg:PTS_for_contextual_stochastic_bandit} then become:
	\begin{equation*}
		\begin{aligned}
			&\textbf{for} \; i \in \{1,2,\cdots, D\} \;  \textbf{do}: \\
			& \hspace{0.5cm} \textbf{for} \; k \in \{1, \cdots, N\} \; \textbf{do}: \\
			& \hspace{1cm}  \widetilde{w}_{t,i,A_{t,i},k} = w_{t-1, i, A_{t,i}, k} P_{\theta_{i, A_{t,i}}^{(k)}}(Y_{t,i} | A_{t,i}, c_t) \\
			& \hspace{0.5cm} w_{t,i,A_{t,i}} \leftarrow \text{normalize} \; \widetilde{w}_{t,i,A_{t,i}} \, 
		\end{aligned}
	\end{equation*} 
due to the independence of observations across domains. In essence, we  maintain a set of particles for each block, and in each time step, we only update the weights of the particles of the chosen block in each domain, while keeping unchanged the weights of the particles of the unused blocks. Per-block particle implementation stores the same number of parameter values in the system, $2N \sum_{i=1}^D B_i$, but the effective number of per-system particles is $N^{\sum_{i=1}^D B_i}$ (although these particles are not independent). 

For this model, the expectation in step 4 of Algorithm \ref{alg:PTS_for_contextual_stochastic_bandit} can be approximately calculated. See Appendix Section \ref{appendix:application}.

\subsection{Simulation}

Simulation setup:
$D = 3$ and $(B_1, B_2, B_3) = (3,3,3)$. In practice, $D$ and $B_i$'s are often small. Results are in Figure \ref{fig:application|simulation|model2_simulation}. Each curve is averaged over 100 independent simulations. In each simulation, the system parameter $\theta^*$ and the initial set of particles are randomly generated in the parameter space. Both PTS and RPTS work poorly with 10 per-block particles and is subject to much randomness. With 100 per-block particles, both algorithms are effective, although the improvement of PRTS compared to PTS is not obvious at the shown time scale. 

\begin{figure}[h]
\begin{center}
\centerline{\includegraphics[width=0.5\columnwidth]{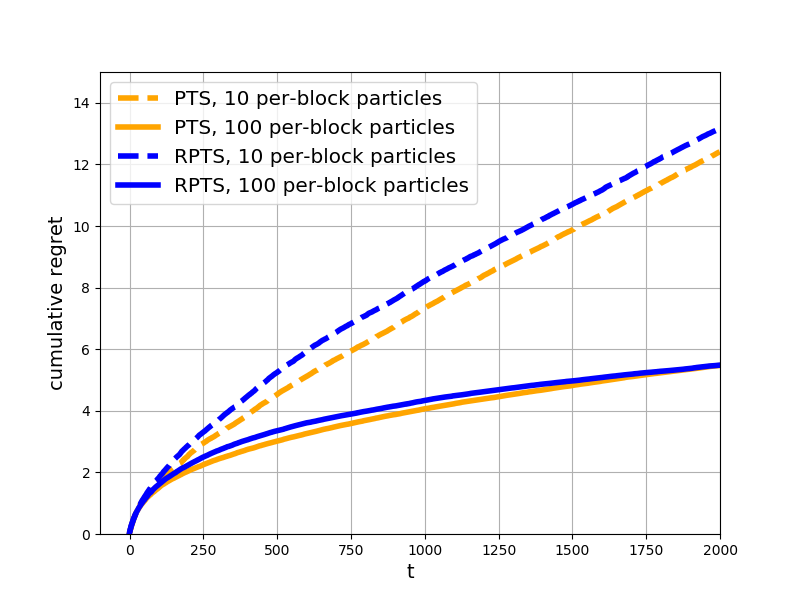}}
\caption{Simulation for network slicing.}
\label{fig:application|simulation|model2_simulation}
\end{center}
\vskip -0.3in
\end{figure}

\section{Conclusions and Future Work} \label{sec:conclusions}

This paper provides a practical variation of Thomson sampling. An analysis of PTS for general stochastic bandit problems is provided, by which we show that fit particles survive and unfit particles decay. We propose RPTS to improve PTS based on a simple heuristic that periodically deletes essentially inactive particles and regenerate new particles in the vicinity of survivors. We show empirically that RPTS significantly outperforms PTS in a set of representative bandit problems. Finally, we show an application of PTS and RPTS to network slicing and demonstrate through simulations that the algorithms are effective. 

Some directions for future work are as follows. First, the necessary survival condition in Proposition \ref{proposition:sample_path_necessary_survival_condition} may be further explored to provide insight on which particles can survive for some specific bandit problems. Second, while the particle regeneration strategy we used in PRTS is simple and effective, there may be other and more principle-guided  strategies that have some theoretical guarantees.

\bibliographystyle{plain}
\bibliography{references}

\appendix

\section{Proofs of Proposition \ref{proposition:sample_path_necessary_survival_condition} and Corollary \ref{corollary:PTS_for_K_arm_Bern_bandit|number_of_surviving_particles_is_no_more_than_K} } \label{appendix:analysis_of_PTS}

This section contains the proofs of Proposition \ref{proposition:sample_path_necessary_survival_condition} and Corollary \ref{corollary:PTS_for_K_arm_Bern_bandit|number_of_surviving_particles_is_no_more_than_K}. 

Let $L_{t,i} \triangleq \ln \widetilde{w}_{t,i} - \ln \widetilde{w}_{t-1,i}$. Assumption  \ref{assumption:PTS_for_general_stochastic_bandit|sample_path_asymptotic_particle_behavior|sample_path_assumptions} has two additional assumptions:
\begin{itemize}
	\item[(c)]  $\abs{\frac{1}{t} \sum_{\tau=1}^t \indicator_{\{I_\tau=i\}} - \frac{1}{t}\sum_{\tau=0}^{t-1} w_{\tau,i}} \rightarrow 0$ as $t \rightarrow \infty$ for any $i \in [N]$.
	\item[(d)] For any $i \in [N]$ that is used infinitely many times, $\frac{1}{M}\sum_{m=1}^M L_{t_i(m)} \rightarrow D_i$ as $M \rightarrow \infty$, where $t_i(m)$ is the $m$th time particle $i$ is chosen and $D_i$ is the $i$th row of the drift matrix $D$.  
\end{itemize}

In Assumption \ref{assumption:PTS_for_general_stochastic_bandit|sample_path_asymptotic_particle_behavior|sample_path_assumptions}(c), $\indicator_{\{I_\tau = i\}}$ is a Bernoulli random variable with mean $w_{\tau-1,i}$ for each $\tau$. Therefore it holds with probability one by the Azuma-Hoeffding inequality. Assumption \ref{assumption:PTS_for_general_stochastic_bandit|sample_path_asymptotic_particle_behavior|sample_path_assumptions}(d) holds with probability one by the definition of $D$ and the strong law of large numbers. 

The proof of Proposition \ref{proposition:sample_path_necessary_survival_condition} starts with the following lemma. All the lemmas in the rest of this proof deal with a sample path under Assumption \ref{assumption:PTS_for_general_stochastic_bandit|sample_path_asymptotic_particle_behavior|sample_path_assumptions}.

\begin{lemma}
	\label{lemma:PTS_for_general_stochastic_bandit|sample_path_asymptotic_particle_behavior|pi_is_well_defined}
	The probability vector $\pi$ is well defined. In addition, $\supp(\pi) = S$. That is, if $i \not\in S$, then $\pi_i = 0$; if $i \in S$, then $\pi_i > 0$. 
\end{lemma}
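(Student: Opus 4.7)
The plan is to treat the two cases $i \notin S$ and $i \in S$ separately, handling the former by direct exponential decay and the latter by expressing $w_{t,i}$ as a bounded continuous function of $G_t$ plus a vanishing error and then invoking Assumption \ref{assumption:PTS_for_general_stochastic_bandit|sample_path_asymptotic_particle_behavior|sample_path_assumptions}(b).

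For $i \notin S$, since $S$ is non-empty, I pick any $j \in S$. The bound $w_{t,i} \leq \widetilde{w}_{t,i}/\widetilde{w}_{t,j}$ combined with Assumption \ref{assumption:PTS_for_general_stochastic_bandit|sample_path_asymptotic_particle_behavior|sample_path_assumptions}(a) gives $w_{t,i} \leq e^{-\alpha t + o(t)}$ for some $\alpha > 0$. Thus $w_{t,i} \to 0$ exponentially fast, and therefore the running average $\frac{1}{t+1}\sum_{\tau=0}^{t} w_{\tau,i}$ also tends to $0$, so $\pi_i$ exists and equals $0$.

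For $i \in S$, I divide the numerator and denominator of $w_{t,i} = \widetilde{w}_{t,i}/\sum_k \widetilde{w}_{t,k}$ by $\widetilde{w}_{t,i}$, obtaining
\[
w_{t,i} \;=\; \frac{1}{\sum_{k \in S} e^{G_t(k,i)} + \delta_t(i)}, \qquad \delta_t(i) \;:=\; \sum_{k \notin S} \frac{\widetilde{w}_{t,k}}{\widetilde{w}_{t,i}}.
\]
By Assumption \ref{assumption:PTS_for_general_stochastic_bandit|sample_path_asymptotic_particle_behavior|sample_path_assumptions}(a), each term in $\delta_t(i)$ decays exponentially, and since $[N]\setminus S$ is finite, $\delta_t(i) \to 0$. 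I then define $f_i:\sR^{|S|\times|S|} \to (0,1]$ by $f_i(G) = \bigl(\sum_{k \in S} e^{G(k,i)}\bigr)^{-1}$. Because $G_t(i,i) = 0$, the denominator of $f_i$ is bounded below by $1$, so $f_i$ is bounded and continuous, and continuity of $x \mapsto 1/x$ on $[1,\infty)$ gives $w_{t,i} - f_i(G_t) \to 0$. Hence the running averages of $w_{t,i}$ and of $f_i(G_t)$ share the same limit.

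Applying Assumption \ref{assumption:PTS_for_general_stochastic_bandit|sample_path_asymptotic_particle_behavior|sample_path_assumptions}(b) to the bounded continuous function $f_i$ yields $\frac{1}{t+1}\sum_{\tau=0}^{t} f_i(G_\tau) \to \E_{\mu_G}[f_i]$, so $\pi_i = \E_{\mu_G}[f_i]$ exists. Since $f_i > 0$ everywhere, the only way $\E_{\mu_G}[f_i]$ could vanish is if $f_i = 0$ holds $\mu_G$-a.e., which contradicts $\mu_G$ being a probability measure; thus $\pi_i > 0$ for every $i \in S$. Together with the previous case this gives $\supp(\pi) = S$, and the identity $\sum_{i \in S} f_i \equiv 1$ confirms that $\pi$ is a probability vector. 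The main subtlety is verifying that $f_i$ is bounded and continuous (which hinges on $G_t(i,i)=0$) and that the exponential decay of $\delta_t(i)$ is inherited from Assumption (a) when summed over the finitely many $k \notin S$; once these are in place, the transfer from pointwise convergence to convergence of the running averages is routine.
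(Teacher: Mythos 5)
Your proposal is correct and follows essentially the same route as the paper: the case $i \notin S$ is handled by the same ratio bound with Assumption (a), and for $i \in S$ your function $f_i(G_t) = \bigl(\sum_{k \in S} e^{G_t(k,i)}\bigr)^{-1}$ is exactly the paper's auxiliary quantity $w'_{t,i} = w_{t,i}/\sum_{j\in S} w_{t,j}$, to which Assumption (b) is applied, with positivity and normalization concluded the same way. The only cosmetic difference is that you absorb the non-surviving particles into the error term $\delta_t(i)$ inside the denominator, whereas the paper shows $w'_{t,i} - w_{t,i} \to 0$ directly; these are equivalent.
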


\begin{proof}
	For $i \not\in S$, 
	\begin{equation*}
		w_{t,i} = \frac{\widetilde{w}_{t,i}}{\sum_{j=1}^N \widetilde{w}_{t,j}} = \frac{e^{\ln \widetilde{w}_{t,i}}}{\sum_{j=1}^N e^{\ln \widetilde{w}_{t,j}}} \leq \frac{e^{\ln \widetilde{w}_{t,i}}}{e^{\ln \widetilde{w}_{t,j_0}}} 
	\end{equation*}
	for any $j_0 \in S$. By Assumption \ref{assumption:PTS_for_general_stochastic_bandit|sample_path_asymptotic_particle_behavior|sample_path_assumptions}(a), $w_{t,i} \rightarrow 0$. Hence $\pi_i = \lim_{t \rightarrow \infty} \frac{1}{t+1} \sum_{\tau=0}^t w_{t,i} = 0$.

	Next, define
	\begin{equation*}
		w'_{t,i} \triangleq \left\{\begin{array}{cccc}
			0 & if & i \notin S \\
			\frac{w_{t,i}}{\sum_{j \in S} w_{t,j}} & if & i \in S 
		\end{array} \right. \, . 
	\end{equation*}
	
	Fix $i \in S$. 
	\begin{equation*}
    \begin{aligned}
		w'_{t,i} - w_{t,i} &= w_{t,i}\left(\frac{1}{\sum_{j \in S} w_{t,j} } - 1 \right) = w_{t,i} \frac{\sum_{j \not\in S}  w_{t,j} }{\sum_{j \in S} w_{t,j} } =  w_{t,i} \frac{\sum_{j \not\in S}  w_{t,j} }{1 - \sum_{j \not\in S} w_{t,j}} \, .
	\end{aligned}
	\end{equation*}
	
	Since the set $[N] \backslash S$ is finite, $\sum_{j \not\in S} w_{t,j} \rightarrow 0$. It follows that $w'_{t,i} - w_{t,i} \rightarrow 0$. Hence 
	\begin{equation}
		\label{eq:PTS_for_general_stochastic_bandit|sample_path_asymptotic_particle_behavior|pi_is_well_defined_lemma|eq1}
		\frac{1}{t+1}\sum_{\tau=0}^t w'_{\tau,i} - \frac{1}{t+1}\sum_{\tau=0}^t w_{\tau,i} \rightarrow 0 \, . 
	\end{equation}
	
	Now, observe that $w'_{t,i}$ can be determined from $\{\ln \widetilde{w}_{t,j}\}_{j \in S}$ by $w'_{t,i} = \frac{e^{\ln \widetilde{w}_{t,i}}}{\sum_{j \in S} e^{\ln \widetilde{w}_{t,j}}}$. Therefore, $w'_{t,i}$ is a continuous and bounded function of $\{\ln \widetilde{w}_{t,j} \}_{j \in S}$, and hence of $G_t$. We write this as $w'_{t,i} = w'_i(G_t)$. According to Assumption \ref{assumption:PTS_for_general_stochastic_bandit|sample_path_asymptotic_particle_behavior|sample_path_assumptions}(b),  
	\begin{equation}
		\label{eq:PTS_for_general_stochastic_bandit|sample_path_asymptotic_particle_behavior|pi_is_well_defined_lemma|eq2}
		\frac{1}{t+1} \sum_{\tau=0}^t w'_{\tau,i} \rightarrow \E_{\mu_G}[w'_i]  \, . 
	\end{equation}
	
	Combining (\ref{eq:PTS_for_general_stochastic_bandit|sample_path_asymptotic_particle_behavior|pi_is_well_defined_lemma|eq1}) and $(\ref{eq:PTS_for_general_stochastic_bandit|sample_path_asymptotic_particle_behavior|pi_is_well_defined_lemma|eq2})$, we obtain $\pi_i = \E_{\mu_G}[w'_i]$. Since $w_i'$ is a positive function and $\mu_G$ is a distribution, we conclude that $\pi_i > 0$ for $i \in S$. 
	
	Finally, 
	\begin{equation*}
	\begin{aligned}
		\sum_{i \in [N]} \pi_i &= \sum_{i \in [N]} \lim_{t \rightarrow \infty} \frac{1}{t+1} \sum_{\tau=0}^t w_{\tau, i} \overset{(i)}{=} \lim_{t \rightarrow \infty} \sum_{i \in [N]} \frac{1}{t+1} \sum_{\tau=0}^t w_{\tau,i} = \lim_{t \rightarrow \infty} \frac{1}{t+1} \sum_{\tau=0}^t \sum_{i \in [N]} w_{\tau,i} = \lim_{t \rightarrow \infty} 1 = 1 \, ,
	\end{aligned}
	\end{equation*}
	where in step $(i)$ we switch the limit and summation because all summands are non-negative and $N$ is finite. Thus $\pi$ is well defined. 
\end{proof}

\begin{lemma}
	\label{lemma:PTS_for_general_stochastic_bandit|sample_path_asymptotic_particle_behavior|lemma2}
	$\frac{1}{t}\sum_{\tau=1}^t L_\tau \rightarrow \pi D$ as $t \rightarrow \infty$. 
\end{lemma}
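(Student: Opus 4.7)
The plan is to decompose the average $\frac{1}{t}\sum_{\tau=1}^t L_\tau$ according to which particle was selected at each time step. Let $N_i(t) \triangleq \sum_{\tau=1}^t \indicator_{\{I_\tau = i\}}$ count the number of times particle $i$ is chosen up to time $t$, and set $\bar{L}_i(t) \triangleq \frac{1}{N_i(t)} \sum_{m=1}^{N_i(t)} L_{t_i(m)}$ whenever $N_i(t) > 0$ (and $\bar L_i(t)$ can be taken to be $0$ otherwise, since the corresponding summand will vanish). Grouping by index,
\begin{equation*}
\frac{1}{t}\sum_{\tau=1}^t L_\tau = \sum_{i=1}^N \frac{N_i(t)}{t}\,\bar{L}_i(t),
\end{equation*}
so it suffices to show that the $i$-th summand tends to $\pi_i D_i$ for each $i \in [N]$, and then to add up the $N$ contributions to obtain $\sum_{i} \pi_i D_i = \pi D$.

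The frequency factor is controlled in the same way for every $i$: Assumption \ref{assumption:PTS_for_general_stochastic_bandit|sample_path_asymptotic_particle_behavior|sample_path_assumptions}(c) combined with the well-definedness of $\pi$ established in Lemma \ref{lemma:PTS_for_general_stochastic_bandit|sample_path_asymptotic_particle_behavior|pi_is_well_defined} implies $N_i(t)/t \to \pi_i$. For a surviving particle $i \in S$, Lemma \ref{lemma:PTS_for_general_stochastic_bandit|sample_path_asymptotic_particle_behavior|pi_is_well_defined} gives $\pi_i > 0$, which in turn forces $N_i(t) \to \infty$; hence Assumption \ref{assumption:PTS_for_general_stochastic_bandit|sample_path_asymptotic_particle_behavior|sample_path_assumptions}(d) applies and yields $\bar L_i(t) \to D_i$. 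The product of the two limits is $\pi_i D_i$, as required.

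The delicate case is $i \notin S$, where $\pi_i = 0$ and the product has the indeterminate $0 \cdot \infty$ shape, since the individual increments $L_{\tau,j} = \ln P_{\theta^{(j)}}(Y_\tau | A_\tau)$ need not be uniformly bounded. I would resolve this by splitting into two subcases. If $N_i(t)$ stays bounded, the summand is a fixed finite combination of almost-surely-finite log-likelihoods (finiteness follows from Assumption \ref{assumption:PTS_for_general_stochastic_bandit|sample_path_asymptotic_particle_behavior|boundedness_of_observation_likelihood}, which bounds likelihood ratios away from $0$ and $\infty$) divided by $t$, so it vanishes. If instead $N_i(t) \to \infty$, Assumption \ref{assumption:PTS_for_general_stochastic_bandit|sample_path_asymptotic_particle_behavior|sample_path_assumptions}(d) still applies and gives $\bar L_i(t) \to D_i$, a finite vector; combined with $N_i(t)/t \to 0$ this again forces the summand to $0$. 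Either way, non-surviving particles contribute nothing in the limit, and adding the surviving contributions produces $\pi D$. The only genuine obstacle in this argument is the $0 \cdot \infty$ resolution for the non-surviving indices, which is disarmed exactly because Assumption \ref{assumption:PTS_for_general_stochastic_bandit|sample_path_asymptotic_particle_behavior|sample_path_assumptions}(d) supplies a finite limit whenever that case is operative.
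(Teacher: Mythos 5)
Your proof is correct and follows essentially the same route as the paper: the same decomposition $\frac{1}{t}\sum_\tau L_\tau = \sum_i \frac{N_i(t)}{t}\bar L_i(t)$, with Assumption \ref{assumption:PTS_for_general_stochastic_bandit|sample_path_asymptotic_particle_behavior|sample_path_assumptions}(c) giving $N_i(t)/t \to \pi_i$, Assumption \ref{assumption:PTS_for_general_stochastic_bandit|sample_path_asymptotic_particle_behavior|sample_path_assumptions}(d) giving $\bar L_i(t) \to D_i$ for infinitely-played particles, and the finitely-played case handled by the trivial $1/t$ decay. Your extra organization by membership in $S$ is a harmless refinement of the paper's finite-versus-infinite play-count split, not a different argument.
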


\begin{proof}
	Let $M_i(t)$ be the number of times particle $i$ has been played up to time $t$. Let $\tau_i(m)$ be the $m$th time that particle $i$ is played. Then 
	\begin{equation*}
    \begin{aligned}
		\frac{1}{t}\sum_{\tau=1}^t L_\tau &= \frac{1}{t}\sum_{i=1}^N \sum_{m=1}^{M_i(t)} L_{\tau_i(m)} = \sum_{i=1}^N \frac{M_i(t)}{t} \frac{1}{M_i(t)} \sum_{m=1}^{M_i(t)} L_{\tau_i(m)} \, . 
	\end{aligned}
	\end{equation*}
	Since $M_i(t) = \sum_{\tau=1}^t \indicator_{\{I_\tau = i \}}$, by Assumption \ref{assumption:PTS_for_general_stochastic_bandit|sample_path_asymptotic_particle_behavior|sample_path_assumptions}(c) and the definition of $\pi_i$, $\frac{M_i(t)}{t} \rightarrow \pi_i$ for all $i \in [N]$. If particle $i$ is played infinitely many times in the sample path, then $\frac{1}{M_i(t)} \sum_{m=1}^{M_i(t)} L_{\tau_i(m)} \rightarrow D_i$ as $t \rightarrow \infty$ by Assumption \ref{assumption:PTS_for_general_stochastic_bandit|sample_path_asymptotic_particle_behavior|sample_path_assumptions}(d). If particle $i$ is played finitely many times, thus $M_i(t) \leq C$ for some constant $C$ for all $t$, then $\frac{M_i(t)}{t} \rightarrow 0$ and $\lim_{t \rightarrow \infty} \frac{1}{M_i(t)} \sum_{m=1}^{M_i(t)} L_{\tau_i(m)} < \infty$. Either case, we have 
	\begin{equation*}
		\frac{M_i(t)}{t} \frac{1}{M_i(t)} \sum_{m=1}^{M_i(t)} L_{\tau_i(m)} \rightarrow \pi_i D_i \quad \text{as} \quad t \rightarrow \infty \, . 
	\end{equation*}
	It follows that 
	\begin{equation*}
		\frac{1}{t} \sum_{\tau=1}^t L_\tau \rightarrow \sum_{i=1}^N \pi_i D_i = \pi D \quad \text{as} \quad t \rightarrow \infty \, . 
	\end{equation*}
\end{proof}

\begin{lemma}
	\label{lemma:PTS_for_general_stochastic_bandit|sample_path_asymptotic_particle_behavior|lemma3}
	If a real-valued sequence $\{x_t\}_{t \geq 1}$ satisfies
	\begin{enumerate}
		\item[(1)] $\{x_t\}$ has a limiting distribution $\mu$.
		\item[(2)] $\{x_t\}$ is $B$-Lipschitz: there exists some constant $B$ such that $\abs{x_t - x_s} \leq B \abs{t-s}$ for all $t, s \in \sN^+$. 
	\end{enumerate}
	Then $\lim_{t \rightarrow} \frac{1}{t} x_t = 0$. 
\end{lemma}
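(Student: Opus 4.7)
The plan is to argue by contradiction, leveraging the tightness of $\mu$ as a probability measure together with the Lipschitz bound. Suppose $x_t / t \not\to 0$. Then there exist $\epsilon > 0$ and a subsequence $t_k \to \infty$ with $\abs{x_{t_k}} \geq \epsilon t_k$ for all $k$. Setting $\delta := \epsilon/(2B)$ and applying the Lipschitz hypothesis, one gets $\abs{x_s} \geq \epsilon t_k / 2$ for every integer $s$ with $\abs{s - t_k} \leq \delta t_k$. So each spike at $t_k$ automatically drags up a whole block of indices of length on the order of $2\delta t_k$ on which $\abs{x_s}$ is at least $\epsilon t_k / 2$.

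Next, for each $M > 0$ I would choose a bounded continuous ``bump'' $h_M : \sR \to [0,1]$ with $h_M(x) = 1$ for $\abs{x} \geq M+1$ and $h_M(x) = 0$ for $\abs{x} \leq M$ (a trapezoidal interpolation works). Evaluating the Ces\`aro average at $T_k = \lceil (1+\delta) t_k \rceil$: once $k$ is large enough that $\epsilon t_k / 2 > M+1$, every $s$ in the block above satisfies $h_M(x_s) = 1$, so
\begin{equation*}
\frac{1}{T_k} \sum_{\tau=1}^{T_k} h_M(x_\tau) \;\geq\; \frac{2 \delta t_k - O(1)}{(1+\delta) t_k} \;\longrightarrow\; \frac{2 \delta}{1+\delta} > 0 .
\end{equation*}
The limiting-distribution hypothesis forces the left side to converge to $\E_{\mu}[h_M]$, so $\E_{\mu}[h_M] \geq \frac{2\delta}{1+\delta}$ for every $M$. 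But $\E_{\mu}[h_M] \leq \mu(\{\abs{x} \geq M\}) \to 0$ as $M \to \infty$ by countable additivity, since $\mu$ is a probability measure on $\sR$. This contradiction completes the argument.

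The routine pieces are the Lipschitz arithmetic fixing the block length and the explicit choice of $h_M$. The conceptual crux, and the step I expect to phrase most carefully, is the bridge between the \emph{Ces\`aro} nature of the limiting-distribution hypothesis and the \emph{pointwise} failure of $x_t/t \to 0$ along a subsequence: it is precisely the Lipschitz hypothesis that converts a single oversized value $x_{t_k}$ into a linearly-long block of oversized values, which is what supplies the non-vanishing lower bound on the Ces\`aro average and thereby clashes with the tightness of $\mu$.
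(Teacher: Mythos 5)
Your proof is correct and follows essentially the same route as the paper's: a contradiction argument in which the Lipschitz bound turns a single oversized value $x_{t_k}$ into a block of length proportional to $t_k$ of large values, whose Ces\`aro frequency clashes with the tightness of the limiting distribution $\mu$. The only cosmetic differences are that the paper argues one-sidedly with $\limsup$ and a fixed threshold $c$ (evaluating the average at $t_n$ itself), whereas you use a two-sided block evaluated at $\lceil (1+\delta) t_k \rceil$ and an explicit family of continuous bumps $h_M$ with $M \to \infty$, which also makes explicit the indicator-versus-continuous-test-function step the paper leaves implicit.
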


\begin{proof}
	We show $\limsup_{t \rightarrow \infty} \frac{1}{t}x_t \leq \delta$ for any $\delta > 0$. Suppose there exists $\delta > 0$ such that $\limsup_{t \rightarrow \infty} \frac{1}{t} x_t > \delta$. Condition $(1)$ implies that, there exists $c \in \sR$ such that
	\begin{equation}
		\label{eq:K_arm_bern_bandit|asymptotic_particle_weights|lemma2_condition_i_implication}
		\frac{1}{t} \sum_{\tau = 1}^t \indicator_{\{x_\tau \geq c \}} \leq \frac{\delta}{2B} \quad \text{for all} \; t \; \text{sufficiently large} \, . 
	\end{equation}
	Let $\{t_1, t_2, \cdots, t_n, \cdots \}$ be a sequence of positive integers such that $\lim_{n \rightarrow \infty} t_n = \infty$ and $\frac{1}{t_n} x_{t_n} \geq \delta$ for all $n$. Thus $x_{t_n} \geq \delta t_n$ for all $n$. Since $\{x_t\}$ is $B$-Lipschitz, for any $t \in [1, t_n]$, 
	\begin{equation*}
		x_t \geq x_{t_{n}} - B(t_n - t) \geq \delta t_n - B(t_n - t) = Bt - (B - \delta) t_n \, . 
	\end{equation*}
	It follows that, if $t \geq \frac{c}{B} + \left(1-\frac{\delta}{B}\right)t_n$, then $x_t \geq c$. Therefore, for $t_n > \frac{2c}{\delta}$, 
	\begin{equation*}
	\begin{aligned}
		\frac{1}{t_n} \sum_{\tau=1}^{t_n} \indicator_{\{\tau \geq c\}} &\geq  \frac{1}{t_n} \sum_{\tau=1}^{t_n} \indicator_{\left\{\tau \geq \frac{c}{L} + \left(1-\frac{\delta}{L}\right) t_n\right\}} = \frac{1}{t_n} \left[t_n - \left(\frac{c}{B}+\left(1-\frac{\delta}{B}\right) t_n \right)\right] = \frac{\delta}{B} - \frac{c}{B t_n} >\frac{\delta}{2B} \, ,
	\end{aligned}
	\end{equation*}
	which contradicts (\ref{eq:K_arm_bern_bandit|asymptotic_particle_weights|lemma2_condition_i_implication}). Therefore, $\limsup_t \frac{1}{t}x_t \leq \delta$ for any $\delta > 0$. Similarly, we can show that $\liminf_{t \rightarrow \infty} \frac{1}{t} x_t \geq -\delta$ for any $\delta > 0$. We conclude that $\lim_{t \rightarrow \infty} \frac{1}{t} x_t = 0$.
\end{proof}

\begin{lemma}
	\label{lemma:PTS_for_general_stochastic_bandit|sample_path_asymptotic_particle_behavior|lemma4}
	If $i,j \in S$, then $(\pi D)_i = (\pi D)_j$.
\end{lemma}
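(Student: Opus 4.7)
The plan is to apply Lemma \ref{lemma:PTS_for_general_stochastic_bandit|sample_path_asymptotic_particle_behavior|lemma3} to the scalar sample-path sequence $X_t := G_{t,ij} = \ln \widetilde{w}_{t,i} - \ln \widetilde{w}_{t,j}$ to deduce $\frac{1}{t} X_t \to 0$, then to compare that limit with what Lemma \ref{lemma:PTS_for_general_stochastic_bandit|sample_path_asymptotic_particle_behavior|lemma2} gives for the same quantity. Since $\widetilde{w}_{0,k} = 1/N$ for every $k$, telescoping yields $X_t = \sum_{\tau=1}^t (L_{\tau,i} - L_{\tau,j})$, so Lemma \ref{lemma:PTS_for_general_stochastic_bandit|sample_path_asymptotic_particle_behavior|lemma2} gives $\frac{1}{t} X_t \to (\pi D)_i - (\pi D)_j$. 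Proving $\frac{1}{t} X_t \to 0$ will therefore force $(\pi D)_i = (\pi D)_j$, which is the claim.

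To apply Lemma \ref{lemma:PTS_for_general_stochastic_bandit|sample_path_asymptotic_particle_behavior|lemma3}, I need to verify its two hypotheses for $\{X_t\}$. The limiting-distribution hypothesis follows from Assumption \ref{assumption:PTS_for_general_stochastic_bandit|sample_path_asymptotic_particle_behavior|sample_path_assumptions}(b): for any bounded continuous $g: \sR \to \sR$, the function $h(G) := g(G_{ij})$ is bounded and continuous on $\sR^{\abs{S}\times\abs{S}}$, and applying Assumption (b) to this $h$ yields $\frac{1}{t}\sum_{\tau=0}^t g(X_\tau) = \frac{1}{t}\sum_{\tau=0}^t h(G_\tau) \to \E_{\mu_G}[h]$, so $\{X_t\}$ has a limiting empirical distribution (the $(i,j)$-marginal of $\mu_G$). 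The Lipschitz hypothesis holds because
\begin{equation*}
    X_{t+1} - X_t \;=\; L_{t+1,i} - L_{t+1,j} \;=\; \ln \frac{P_{\theta^{(i)}}(Y_{t+1} \mid A_{t+1})}{P_{\theta^{(j)}}(Y_{t+1} \mid A_{t+1})} ,
\end{equation*}
which by Assumption \ref{assumption:PTS_for_general_stochastic_bandit|sample_path_asymptotic_particle_behavior|boundedness_of_observation_likelihood} is bounded in absolute value by $B := \max(\abs{\ln b_0},\abs{\ln B_0})$ uniformly in $t$, so iterating gives $\abs{X_t - X_s} \le B \abs{t - s}$.

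Lemma \ref{lemma:PTS_for_general_stochastic_bandit|sample_path_asymptotic_particle_behavior|lemma3} then yields $\frac{1}{t} X_t \to 0$, and combining with the first paragraph finishes the proof. I do not anticipate a serious obstacle: the only mildly subtle point is that Assumption (b) supplies a limiting empirical distribution for the matrix-valued process $G_t$ while Lemma \ref{lemma:PTS_for_general_stochastic_bandit|sample_path_asymptotic_particle_behavior|lemma3} is stated for scalar sequences, but the passage is routine since a scalar test function of $G_{ij}$ lifts to a bounded continuous test function of the whole matrix $G$.
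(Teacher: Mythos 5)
Your proposal is correct and follows essentially the same route as the paper's proof: telescope $L_{\tau,i}-L_{\tau,j}$ to get $\frac{1}{t}G_t(i,j)$, identify the limit via Lemma \ref{lemma:PTS_for_general_stochastic_bandit|sample_path_asymptotic_particle_behavior|lemma2}, and apply Lemma \ref{lemma:PTS_for_general_stochastic_bandit|sample_path_asymptotic_particle_behavior|lemma3} using Assumption \ref{assumption:PTS_for_general_stochastic_bandit|sample_path_asymptotic_particle_behavior|sample_path_assumptions}(b) for the limiting distribution and Assumption \ref{assumption:PTS_for_general_stochastic_bandit|sample_path_asymptotic_particle_behavior|boundedness_of_observation_likelihood} for the Lipschitz bound. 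Your explicit remark on lifting the scalar test function of $G_{ij}$ to the matrix-valued $G_t$ is a fine point the paper leaves implicit, but it does not change the argument.
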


\begin{proof}
	Consider $i, j \in S$. Then 
	\begin{equation*}
		\begin{aligned}
			\frac{1}{t} \sum_{\tau=1}^t L_{\tau,i} - \frac{1}{t} \sum_{\tau=1}^t L_{\tau,j} &= \frac{1}{t} \sum_{\tau=1}^t \left(L_{\tau,i} - L_{\tau,j}\right) \\
			&=  \frac{1}{t}\sum_{\tau=1}^t \left[\left(\ln \widetilde{w}_{\tau,i} - \ln \widetilde{w}_{\tau-1,i}\right) - \left(\ln \widetilde{w}_{\tau,j} - \ln \widetilde{w}_{\tau-1,j} \right) \right] \\
			&= \frac{1}{t} \left[ \left(\ln \widetilde{w}_{t,i} - \ln \widetilde{w}_{0,i}\right) - \left( \ln \widetilde{w}_{t,j} - \ln \widetilde{w}_{0,j} \right) \right] \\
			&= \frac{1}{t} \left(\ln \widetilde{w}_{t,i} - \ln \widetilde{w}_{t,j} \right) = \frac{1}{t} G_t(i,j) \, . 
		\end{aligned}
	\end{equation*}
	The third equality above used $\ln \widetilde{w}_{0,i} = \ln \widetilde{w}_{0,j} = 0$ by initialization (although that is not important, as long as the difference is finite). By the dynamics of the weights $\{w_{t,i}\}$ and $\{w_{t,j} \}$, we have that
	\begin{equation*}
		G_{t+1}(i,j) = G_t(i,j) + \ln \frac{P_{\theta^{(i)}}(Y_{t+1}|A_{t+1})}{P_{\theta^{(j)}}(Y_{t+1}|A_{t+1})} \, . 
	\end{equation*}
	
	By Assumption \ref{assumption:PTS_for_general_stochastic_bandit|sample_path_asymptotic_particle_behavior|boundedness_of_observation_likelihood}, $\abs{G_{t+1}(i,j) - G_t(i,j)} \leq B$, where $B = \max\{\abs{\ln b_0}, \abs{\ln B_0}\}$. Thus $\{G_t(i,j)\}_{t \geq 1}$ is an $B$-Lipschitz sequence. Therefore 
	\begin{equation*}
    \begin{aligned}
		(\pi D)_i - (\pi D)_j &\overset{(i)}{=} \lim_{t \rightarrow \infty} \left(\frac{1}{t} \sum_{\tau=1}^t L_{\tau,i} - \frac{1}{t} \sum_{\tau=1}^t L_{\tau,j} \right) = \lim_{t \rightarrow \infty} \frac{1}{t} G_t(i,j) \overset{(ii)}{=} 0 \, ,
	\end{aligned}
	\end{equation*}
	where equality $(i)$ is due to Lemma \ref{lemma:PTS_for_general_stochastic_bandit|sample_path_asymptotic_particle_behavior|lemma2} and equality $(ii)$ equality is due to Lemma \ref{lemma:PTS_for_general_stochastic_bandit|sample_path_asymptotic_particle_behavior|lemma3} and Assumption \ref{assumption:PTS_for_general_stochastic_bandit|sample_path_asymptotic_particle_behavior|sample_path_assumptions}(b).
\end{proof}

\begin{lemma}
	\label{lemma:PTS_for_general_stochastic_bandit|sample_path_asymptotic_particle_behavior|lemma5}
	If $i \not\in S$ and $j \in S$, then $(\pi D)_i < (\pi D)_j$.  
\end{lemma}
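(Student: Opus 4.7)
The plan is to imitate the computation already carried out in Lemma~\ref{lemma:PTS_for_general_stochastic_bandit|sample_path_asymptotic_particle_behavior|lemma4}, but use Assumption~\ref{assumption:PTS_for_general_stochastic_bandit|sample_path_asymptotic_particle_behavior|sample_path_assumptions}(a) instead of Lemma~\ref{lemma:PTS_for_general_stochastic_bandit|sample_path_asymptotic_particle_behavior|lemma3} at the very end. The point is that Assumption~\ref{assumption:PTS_for_general_stochastic_bandit|sample_path_asymptotic_particle_behavior|sample_path_assumptions}(a) already gives a strictly negative upper bound on $\tfrac{1}{t}(\ln\widetilde{w}_{t,i}-\ln\widetilde{w}_{t,j})$, which is exactly the quantity that telescoping relates to $(\pi D)_i - (\pi D)_j$.

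Concretely, first I would repeat the telescoping identity from Lemma~\ref{lemma:PTS_for_general_stochastic_bandit|sample_path_asymptotic_particle_behavior|lemma4}:
\[
\frac{1}{t}\sum_{\tau=1}^{t}L_{\tau,i} - \frac{1}{t}\sum_{\tau=1}^{t}L_{\tau,j}
= \frac{1}{t}\bigl(\ln\widetilde{w}_{t,i} - \ln\widetilde{w}_{t,j}\bigr),
\]
using $\ln\widetilde{w}_{0,i} = \ln\widetilde{w}_{0,j} = 0$. Second, by Lemma~\ref{lemma:PTS_for_general_stochastic_bandit|sample_path_asymptotic_particle_behavior|lemma2} the left-hand side has a limit as $t\to\infty$, equal to $(\pi D)_i - (\pi D)_j$. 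Hence the right-hand side also has a limit, and this limit coincides with its $\limsup$.

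Third, apply Assumption~\ref{assumption:PTS_for_general_stochastic_bandit|sample_path_asymptotic_particle_behavior|sample_path_assumptions}(a) to the right-hand side: since $i \notin S$ and $j \in S$,
\[
\limsup_{t\to\infty}\frac{1}{t}\bigl(\ln\widetilde{w}_{t,i} - \ln\widetilde{w}_{t,j}\bigr) < 0.
\]
Combining the two displays yields $(\pi D)_i - (\pi D)_j < 0$, which is the desired conclusion.

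There is essentially no obstacle here; the only thing to be slightly careful about is noting that the existence of the limit on the left (guaranteed by Lemma~\ref{lemma:PTS_for_general_stochastic_bandit|sample_path_asymptotic_particle_behavior|lemma2}) upgrades the one-sided $\limsup$ bound from Assumption~\ref{assumption:PTS_for_general_stochastic_bandit|sample_path_asymptotic_particle_behavior|sample_path_assumptions}(a) into a genuine strict inequality between the two limits, so no separate argument (e.g.\ a Lipschitz-type lemma) is required, in contrast with the equality case $i,j\in S$ handled in Lemma~\ref{lemma:PTS_for_general_stochastic_bandit|sample_path_asymptotic_particle_behavior|lemma4}.
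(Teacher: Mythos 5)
Your proposal is correct and follows essentially the same route as the paper: the telescoping identity, Lemma \ref{lemma:PTS_for_general_stochastic_bandit|sample_path_asymptotic_particle_behavior|lemma2} for the convergence of the left-hand side to $(\pi D)_i - (\pi D)_j$, and Assumption \ref{assumption:PTS_for_general_stochastic_bandit|sample_path_asymptotic_particle_behavior|sample_path_assumptions}(a) for strict negativity of the right-hand side. Your explicit remark that the existing limit upgrades the $\limsup$ bound to a strict inequality between limits is a small point the paper glosses over (it also cites Lemma \ref{lemma:PTS_for_general_stochastic_bandit|sample_path_asymptotic_particle_behavior|pi_is_well_defined} where Lemma \ref{lemma:PTS_for_general_stochastic_bandit|sample_path_asymptotic_particle_behavior|lemma2} is the one actually needed), but the argument is the same.
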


\begin{proof}
	Similar to the proof of Lemma \ref{lemma:PTS_for_general_stochastic_bandit|sample_path_asymptotic_particle_behavior|lemma4}, we have
	\begin{equation*}
		\frac{1}{t}\sum_{\tau=1}^t L_{\tau,i} - \frac{1}{t}\sum_{\tau=1}^t L_{t,j} = \frac{1}{t} \left(\ln \widetilde{w}_{t,i} - \ln \widetilde{w}_{t,j} \right)
	\end{equation*}
	The LHS converges to $(\pi D)_i - (\pi D)_j$ as $t \rightarrow \infty$ by Lemma \ref{lemma:PTS_for_general_stochastic_bandit|sample_path_asymptotic_particle_behavior|pi_is_well_defined}. The RHS converges to a strictly negative value as $t \rightarrow \infty$ by Assumption \ref{assumption:PTS_for_general_stochastic_bandit|sample_path_asymptotic_particle_behavior|sample_path_assumptions}(a). Thus $(\pi D)_i < (\pi D)_j$.
\end{proof}

\begin{proof}[Proof of Proposition \ref{proposition:sample_path_necessary_survival_condition}]

Lemma \ref{lemma:PTS_for_general_stochastic_bandit|sample_path_asymptotic_particle_behavior|pi_is_well_defined} shows $\supp(\pi) = S$. Lemma \ref{lemma:PTS_for_general_stochastic_bandit|sample_path_asymptotic_particle_behavior|lemma4} and Lemma \ref{lemma:PTS_for_general_stochastic_bandit|sample_path_asymptotic_particle_behavior|lemma5} show $\arg \max (\pi D) = S$. Proposition \ref{proposition:sample_path_necessary_survival_condition} is thus proved. 

\end{proof}

\begin{proof}[Proof of Corollary \ref{corollary:PTS_for_K_arm_Bern_bandit|number_of_surviving_particles_is_no_more_than_K}]
	If $N \leq K$, then $\abs{\supp(\pi)} \leq N \leq K$ trivially. Let $N > K$. The observation model of a Bernoulli bandit problem satisfies Assumption \ref{assumption:PTS_for_general_stochastic_bandit|sample_path_asymptotic_particle_behavior|boundedness_of_observation_likelihood} trivially. By Proposition \ref{proposition:sample_path_necessary_survival_condition}, with probability one, for any sample path, the probability vector $\pi$ is well-defined and $\pi$ and $S$ satisfy $\arg \max (\pi D) = \supp(\pi) = S$, which implies the following constraints on $\pi$:
	
	\begin{equation}
		\label{eq:PTS_for_general_stochastic_bandit|PTS_for_K_arm_Bern_bandit|number_of_surviving_particles_is_no_more_than_K_proposition|proof|eq_a}
		\begin{aligned}
			&\; \pi_i = 0 \; \text{for} \; i \not\in S \, , \\
			&\; (\pi D)_i = (\pi D)_j  \; \text{for all} \; i, j \in S \, , \\
		\end{aligned}
	\end{equation}  
	where $S$ is the subset of $[N]$ in Assumption \ref{assumption:PTS_for_general_stochastic_bandit|sample_path_asymptotic_particle_behavior|sample_path_assumptions}. Suppose $|S| > K$. The remainder of the proof shows that, with probability one, any $\pi$ that satisfies (\ref{eq:PTS_for_general_stochastic_bandit|PTS_for_K_arm_Bern_bandit|number_of_surviving_particles_is_no_more_than_K_proposition|proof|eq_a}) is the all-zero vector (thus $\pi$ cannot be a probability vector). This leads to a contradiction with $\abs{S} > K$ and therefore we  conclude that $|S| \leq K$.
	
	We construct a matrix $\widetilde{D} \in \sR^{K \times N}$ and a probability (row) vector $\widetilde{\pi} \in [0,1]^K$ from $D$ and $\pi$, as follows. 
	
	Recall that, row $i_1$ and row $i_2$ of $D$ are the same if $A(i_1) = A(i_2)$. Since there are $K$ arms, there can be at most $K$ unique rows in $D$. Let $\widetilde{D}$ be $D$ reduced to its unique $K$ rows. That is, $\widetilde{D}_k =  \E[L_t | A_t = k]$ (which is independent of $t$) for $k \in [K]$.  
	
	For $k \in [K]$, let $\widetilde{\pi}_k = \sum_{i:i \in S, A(i) = k} \pi_i$. That is, $\widetilde{\pi}_k$ is the sum of the asymptotic weights of surviving particles with the optimal arm $k$. If no $i \in S$ satisfies $A(i) = k$, then $\widetilde{\pi}_k = 0$. It is easy to verify that $\widetilde{\pi}_1 + \cdots  + \widetilde{\pi}_K = 1$. 
	
	Now, observe that, 
	\begin{equation*}
		\begin{aligned}
			\pi D &= \sum_{i = 1}^N \pi_i D_i = \sum_{i \in S} \pi_i D_i = \sum_{k=1}^K \sum_{i: i \in S, A(i) = k} \pi_i D_i = \sum_{k=1}^K \sum_{i: i \in S, A(i) = k} \pi_i \widetilde{D}_k \\
			&=  \sum_{k=1}^K \left(\sum_{i: i \in S, A(i) = k} \pi_i\right) \widetilde{D}_k = \sum_{k=1}^K \widetilde{\pi}_k \widetilde{D}_k = \widetilde{\pi} \widetilde{D} \, . 
		\end{aligned}
	\end{equation*}
	
	Therefore, the constraints (\ref{eq:PTS_for_general_stochastic_bandit|PTS_for_K_arm_Bern_bandit|number_of_surviving_particles_is_no_more_than_K_proposition|proof|eq_a}) on $\pi$ imply the following constraints on $\widetilde{\pi}$:
	\begin{equation}
		\label{eq:PTS_for_general_stochastic_bandit|PTS_for_K_arm_Bern_bandit|number_of_surviving_particles_is_no_more_than_K_proposition|proof|eq_b}
		\begin{aligned}
			&\; (\widetilde{\pi} \widetilde{D})_i = (\widetilde{\pi} \widetilde{D})_j \; \text{for all} \; i, j \in S \, . 
		\end{aligned}
	\end{equation}
	
	Let $\widetilde{D}_i$ be the $i$th column of $\widetilde{D}$. Then $(\widetilde{\pi}\widetilde{D})_i = \angles{\widetilde{\pi}, \widetilde{D}_i}$. Constraints (\ref{eq:PTS_for_general_stochastic_bandit|PTS_for_K_arm_Bern_bandit|number_of_surviving_particles_is_no_more_than_K_proposition|proof|eq_b}) can thus be re-written as 
	\begin{equation}
		\label{eq:PTS_for_general_stochastic_bandit|PTS_for_K_arm_Bern_bandit|number_of_surviving_particles_is_no_more_than_K_proposition|proof|eq_c}
		\angles{\widetilde{\pi}, \widetilde{D}_i - \widetilde{D}_j} = 0 \; \text{for all} \; i, j \in S \, . 
	\end{equation}
	
	For a Bernouli bandit problem, the entries in $\widetilde{D} = [\widetilde{D}_{kj}]_{1\leq k \leq K, 1 \leq j \leq N}$ are in the form $\widetilde{D}_{kj} = -d(\theta^*_k || \theta^{(j)}_k)$, where $d(x||y) = x \ln \frac{x}{y} + (1-x) \ln \frac{1-x}{1-y}$ for $x,y \in [0,1]$ and $\theta^{(j)}_k$ is uniformly distributed in $[0,1]$ and is independent across $k \in  [K]$ and $j \in [N]$. Therefore, since $|S| > K$, with probability one, the set of vectors $\{\widetilde{D}_i - \widetilde{D}_j: i, j \in S \}$ spans $\sR^K$, in which case the only $\widetilde{\pi} \in \sR^K$ that satisfies (\ref{eq:PTS_for_general_stochastic_bandit|PTS_for_K_arm_Bern_bandit|number_of_surviving_particles_is_no_more_than_K_proposition|proof|eq_c}) is the all-zero vector. By construction of $\widetilde{\pi}$, with probability one, the only vector $\pi \in \sR^N$ that satisfies (\ref{eq:PTS_for_general_stochastic_bandit|PTS_for_K_arm_Bern_bandit|number_of_surviving_particles_is_no_more_than_K_proposition|proof|eq_a}) is the all-zero vector. 
\end{proof}

\section{Analysis of PTS for Two-Arm Bernoulli Bandit} \label{appendix:PTS_for_two_arm_Bernoulli_bandit}

This section considers perhaps the most simple bandit problem in more depth than Proposition \ref{proposition:sample_path_necessary_survival_condition}.   The results provide further intuition about PTS and about the assumptions and conclusions of Proposition 1 and its corollary.
Specifically, we analyze PTS for the two-arm Bernoulli bandit problem. 

The section is organized as follows. Subsection \ref{subsec:PTS_for_two_arm_Bernoulli_bandit|N_given_particles_weight_dynamics} provides a general analysis of the weight dynamics for $N$ given particles. Subsection \ref{subsec:PTS_for_two_arm_Bernoulli_bandit|two_given_particles} takes a closer look at the case of two given particles, including, in particular, the counter-reinforcing pair and the self-reinforcing pair. Subsection \ref{subsec:PTS_for_two_arm_Bernoulli_bandit|N_given_particles_asymptotic_behavior} discusses the asymptotic behavior of $N$ given particles. Subsection \ref{subsec:PTS_for_two_arm_Bernoulli_bandit|N_random_particles} discusses the performance of PTS for $N$ randomly generated particles, including two ways of generation: coordinate-wise and whole-particle. Subsection \ref{subsec:PTS_for_two_arm_Bernoulli_bandit|summary} summarizes the results in this section. Subsection \ref{subsec:PTS_for_two_arm_Bernoulli_bandit|useful_drift_implied_bounds} includes for reference two known bounds that are used in this section.

For a two-arm Bernoulli bandit problem, $\calA = \{1,2\}, \calY = \{0,1\}, \Theta = [0,1]^2, R(y) = y$.  PTS (Algorithm \ref{alg:PTS_stochastic_bandit}) is then reduced to Algorithm \ref{alg:PTS_for_two_arm_Bernoulli_bandit} below. 

\begin{algorithm}
	\caption{PTS for two-arm Bernoulli bandit}
	\label{alg:PTS_for_two_arm_Bernoulli_bandit}
	\textbf{Input}: $\theta^*, \calP_N$ \\
	\textbf{Initialization}: weights $w_0 \gets \left(\frac{1}{N}, \cdots, \frac{1}{N}\right)$, unnormalized weights $\widetilde{w}_0 \gets \left(1, \cdots, 1\right)$.
	\begin{algorithmic}[1]  
		\For{$t = 1,2,\cdots$} 
		\State Generate $\theta_t$ from $\calP_N$ according to weights $w_{t-1}$
		\State Play $A_t \gets \arg \max_{a \in \{1,2\}} \theta_{t,a}$
		\State Observe reward $R_t \sim \Bernoulli(\theta^*_{A_t})$
		\For{$i \in \left\{1,2,\cdots, N\right\}$}
		\State 		\begin{equation}
		\begin{aligned}	\label{eq:PTS_for_two_arm_Bern_bandit|posterior_weights_update}
			\widetilde{w}_{t,i} &= \widetilde{w}_{t-1,i} P_{\theta^{(i)}_{A_t}}(R_t) = \left\{\begin{array}{cccc}
				\widetilde{w}_{t-1,i}  \theta^{(i)}_{A_t} & if & R_t = 1 \\
				\widetilde{w}_{t-1,i}  (1-\theta^{(i)}_{A_t}) & if & R_t = 0 \\
			\end{array}\right. .
		\end{aligned}
		\end{equation}
		\EndFor
		\State $w_t \gets \text{normalize} \; \widetilde{w}_t$
		\EndFor
	\end{algorithmic}
\end{algorithm}

Notation: Let $w_{t,i}, \widetilde{w}_{t,i}, \bar{w}_{t,i}$ be the normalized, unnormalized, and running-average weight of particle $i \in [N$ at time $t$, respectively. Let $w_t = (w_{t,1}, \cdots, w_{t,N})$. let $I_t \in [N]$ be the index of the particle chosen at time $t$; $I_t \sim w_{t-1}$. Let $q_{t,i}$ be the fraction of time particle $i$ has been played up to time $t$, i.e., $q_{t,i} = \frac{1}{t} \sum_{\tau=1}^t \indicator_{\{I_t = i\}}$. Let $A_t \in \calA = \{1,2\}$ be the action/arm taken at time $t$. Let $A: [0,1]^2 \rightarrow \{1,2\}$ be the function mapping from a particle to the corresponding best action/arm, defined by $A(\theta) = \arg\max_{a \in \{1,2\}} \theta_a$. In the case $\theta_1 = \theta_2$, we let $A(\theta)$ equal to either $\theta_1$ or $\theta_2$ deterministically. With a slight abuse of notation, we sometimes abbreviate $A(\theta^{(i)})$ by $A(i)$. Thus $A_t = A(I_t)$. Let $r_t \in [0,1]$ be the usage frequency of arm 1 at time $t$, namely, the fraction of time that arm 1 has been pulled up to and including time $t$. It follows that $1-r_t$ is the usage frequency of arm 2 at time $t$. Let $d(x||y) \triangleq x \ln \frac{x}{y} + (1-x) \ln \frac{1-x}{1-y}$ denote the KL-divergence between two Bernoulli distributions parameterized by $x$ and $y$ respectively. Let $D_i(r) \triangleq rd(\theta^*_1 || \theta^{(i)}_1) + (1-r)d(\theta^*_2 || \theta^{(i)}_2)$ denote the convex combination of the KL divergences between $\theta^*$ and $\theta^{(i)}$ at the two arms, with weight $r$ on arm 1 and weight $1-r$ on arm 2, for some $r \in [0,1]$. For brevity, we shall call $D_i(r)$ the \emph{divergence of particle $i$ at $r$}. Let an instance of a two-arm Bernouli bandit problem with parameter $\theta^*$ be denoted as $\BernoulliBandit(K=2, \theta^*)$.

\subsection{N given particles, weight dynamics} \label{subsec:PTS_for_two_arm_Bernoulli_bandit|N_given_particles_weight_dynamics}

We start with some informal analysis to provide some high-level intuition. Consider the process in Algorithm \ref{alg:PTS_for_two_arm_Bernoulli_bandit}. Consider a given particle $\theta^{(i)} \in \calP_N$. By (\ref{eq:PTS_for_two_arm_Bern_bandit|posterior_weights_update}), the unnormalized weight of particle $i$ at time $t$ can be written as
\begin{equation*}
\begin{aligned}
	\widetilde{w}_{t,i} &= \prod_{\tau=1}^t P_{\theta^{(i)}_{A_\tau}}(R_\tau) = \exp \left(\sum_{\tau=1}^t \ln P_{\theta^{(i)}_{A_\tau}}(R_\tau) \right) = \exp \left(\sum_{\tau \in \calT_1} \ln P_{\theta^{(i)}_1}(R_\tau) + \sum_{\tau \in \calT_2} \ln P_{\theta^{(i)}_2}(R_\tau) \right) \, ,
\end{aligned}
\end{equation*}
where $\calT_a \triangleq \{\tau \in \{1,\cdots,t\}: A_\tau = a\}$ for $a = 1,2$, i.e., $\calT_a$ is the set of time instances up to time $t$ at which arm $a$ is played. By the definition of $r_t$, $\abs{\calT_1} = t r_t$ and $\abs{\calT_2} = t(1-r_t)$.  Suppose both $\abs{\calT_1}$ and $\abs{\calT_2}$ are non-zero and grow with $t$. For large $t$, we have
\begin{equation*}
	\begin{aligned}
		 \frac{1}{t} \ln \widetilde{w}_{t,i} &=  r_t \frac{1}{t r_t} \sum_{\tau \in \calT_1} \ln P_{ \theta^{(i)}_1}(R_\tau) + (1-r_r)\frac{1}{t(1-r_t)} \sum_{\tau \in \calT_2} \ln P_{\theta^{(i)}_2}(R_\tau) \\
		&\approx r_t \E_{\theta^*} \left[\ln P_{\theta^{(i)}_1}(R_1) \right] + (1-r_t) \E_{\theta^*} \left[\ln P_{\theta^{(i)}_2}(R_1) \right] \\
		&= r_t \left(-d(\theta^*_1 || \theta^{(i)}_1) - H(\theta^*_1)\right) + (1-r_t)\left(-d(\theta^*_2 || \theta^{(i)}_2) - H(\theta^*_2) \right) \\
		&= -D_i(r_t) - \left(r_t H(\theta^*_1) + (1-r_t) H(\theta^*_2) \right) \, .
	\end{aligned}
\end{equation*} 
The term $r_t H(\theta^*_1) + (1-r_t) H(\theta^*_2)$ doesn't depend on $i$. Therefore, for large $t$, $\widetilde{w}_{t,i} \appropto e^{-t D_i(r_t)}$. 
The above discussion can be made formal by the following proposition.  

\begin{proposition}
	\label{proposition:PTS_for_two_arm_Bern_bandit|weight_dynamics_of_N_particles|unnormalized_weight_dynamics}
	Given a problem $\text{BernoulliBandit}(K=2, \theta^*)$ and a particle set $\calP_N \subset [0,1]^2$. Consider the process of running $\text{PTS}(\calP_N)$ as in Algorithm \ref{alg:PTS_for_two_arm_Bernoulli_bandit}. 
	For any $i \in \{1,\cdots,N \}$ and $t \geq 1$, 
	\begin{equation}
		\label{eq:PTS_for_two_arm_Bern_bandit|weight_dynamics_of_N_particles|weight_dynamics_proposition|statement|unnormalized_weight}
		\frac{1}{t} \ln \widetilde{w}_{t,i} = -D_i(r_t) + \epsilon_{t,i} + C(r_t) \, , 
	\end{equation}
	where $C(r_t)$ is a given function on $r_t$ that does not depend on $i$, and $\{\epsilon_{t,i}\}_{t\geq 1}$ is a random sequence that converges to zero in probability.\footnote{It can be further shown that this convergence is almost sure by using the Borel-Cantelli lemma. We state the convergence in probability result here because it will be used later.} More specifically, for some positive constant $B_{\theta^{(i)}}$ depending on $\theta^{(i)}$, 
	\begin{equation}
		\label{eq:PTS_for_two_arm_Bern_bandit|weight_dynamics_of_N_particles|weight_dynamics_proposition|statement|error_term_in_exponent_converges_in_prob}
		P \left\{\abs{\epsilon_{t,i}} > \delta \right\} \leq 4t e^{-B_{\theta^{(i)}} \delta^2 t} 
	\end{equation}
	for any $\delta > 0$ and $t \geq 1$.
\end{proposition}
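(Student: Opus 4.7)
The plan is to decompose $\ln\widetilde{w}_{t,i}$ into per-arm contributions and then separate each contribution into a deterministic part (which will give $-D_i(r_t) + C(r_t)$) and a concentration-type error (which will give $\epsilon_{t,i}$). Iterating \eqref{eq:PTS_for_two_arm_Bern_bandit|posterior_weights_update} from $\widetilde{w}_{0,i}=1$ yields
\begin{equation*}
\ln\widetilde{w}_{t,i} \;=\; \sum_{\tau\in\calT_1}\ln P_{\theta^{(i)}_1}(R_\tau) \;+\; \sum_{\tau\in\calT_2}\ln P_{\theta^{(i)}_2}(R_\tau) \;\triangleq\; S_{t,1,i}+S_{t,2,i}.
\end{equation*}
A direct calculation shows $\mu_{a,i}\triangleq \E_{R\sim\Bern(\theta^*_a)}[\ln P_{\theta^{(i)}_a}(R)] = -d(\theta^*_a\|\theta^{(i)}_a) - H(\theta^*_a)$, where $H(\cdot)$ is the binary entropy. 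So if we write $\epsilon_{t,i} \triangleq \tfrac{1}{t}\bigl[(S_{t,1,i} - N_{t,1}\mu_{1,i}) + (S_{t,2,i} - N_{t,2}\mu_{2,i})\bigr]$ with $N_{t,a}=|\calT_a|$, then
\begin{equation*}
\frac{1}{t}\ln\widetilde{w}_{t,i} \;=\; \epsilon_{t,i} + r_t\mu_{1,i} + (1-r_t)\mu_{2,i} \;=\; -D_i(r_t) + \epsilon_{t,i} + C(r_t),
\end{equation*}
with $C(r_t)\triangleq -r_tH(\theta^*_1) - (1-r_t)H(\theta^*_2)$, which is independent of $i$. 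This already establishes \eqref{eq:PTS_for_two_arm_Bern_bandit|weight_dynamics_of_N_particles|weight_dynamics_proposition|statement|unnormalized_weight}.

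The remaining (and main) task is to prove the concentration bound \eqref{eq:PTS_for_two_arm_Bern_bandit|weight_dynamics_of_N_particles|weight_dynamics_proposition|statement|error_term_in_exponent_converges_in_prob}. By a union bound, it suffices to show each arm's contribution $\tfrac{1}{t}|S_{t,a,i}-N_{t,a}\mu_{a,i}|$ exceeds $\delta/2$ with probability at most $2t\exp(-B_{\theta^{(i)}}\delta^2 t)$. The key structural observation is that, conditional on the action sequence $A_1,\dots,A_t$, the rewards $\{R_\tau:\tau\in\calT_a\}$ are i.i.d.\ $\Bern(\theta^*_a)$ (since $R_\tau$ depends on the past only through $A_\tau$), hence $\{\ln P_{\theta^{(i)}_a}(R_\tau):\tau\in\calT_a\}$ are i.i.d.\ bounded with mean $\mu_{a,i}$ and range at most $B\triangleq \max_{a}|\ln(\theta^{(i)}_a/(1-\theta^{(i)}_a))|$. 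Conditioning on $N_{t,a}=n\ge 1$ and applying the standard Hoeffding inequality to a fixed sum of $n$ bounded i.i.d.\ variables gives
\begin{equation*}
\sP\!\left(|S_{t,a,i}-n\mu_{a,i}|>t\delta/2 \,\big|\, N_{t,a}=n\right) \;\le\; 2\exp\!\left(-\frac{t^2\delta^2}{2nB^2}\right) \;\le\; 2\exp\!\left(-\frac{\delta^2 t}{2B^2}\right),
\end{equation*}
where the second inequality uses $n\le t$. Summing over $n\in\{1,\dots,t\}$ (the $n=0$ case contributes zero since $S_{t,a,i}=0$ there) yields $2t\exp(-\delta^2t/(2B^2))$, and combining the two arms gives \eqref{eq:PTS_for_two_arm_Bern_bandit|weight_dynamics_of_N_particles|weight_dynamics_proposition|statement|error_term_in_exponent_converges_in_prob} with $B_{\theta^{(i)}}=1/(2B^2)$.

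The main obstacle is handling the randomness of the per-arm play counts $N_{t,a}$ and of the random index set $\calT_a$, which prevents a naive one-shot application of Hoeffding to a fixed-length sum. The cleanest fix is the conditioning argument above, which replaces $\calT_a$ by a deterministic count once we condition on the action history; this trades a sharp tail (what a direct Azuma–Hoeffding on the martingale-difference sequence $\{\ln P_{\theta^{(i)}_{A_\tau}}(R_\tau)-\mu_{A_\tau,i}\}$ would give) for the extra union-bound factor of $t$, matching the statement. I would also note in passing that because $\sum_t 4t\exp(-B_{\theta^{(i)}}\delta^2 t)<\infty$ for every $\delta>0$, Borel–Cantelli upgrades the in-probability conclusion to almost-sure convergence $\epsilon_{t,i}\to 0$, which is used later in the section.
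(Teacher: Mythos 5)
Your decomposition of $\ln\widetilde{w}_{t,i}$ into per-arm sums, the identification of the means $-d(\theta^*_a\|\theta^{(i)}_a)-H(\theta^*_a)$, and the resulting form $-D_i(r_t)+\epsilon_{t,i}+C(r_t)$ are exactly as in the paper and are fine. The gap is in your ``key structural observation'': it is not true that, conditional on the action sequence $A_1,\dots,A_t$ (or on $N_{t,a}=n$), the rewards $\{R_\tau:\tau\in\calT_a\}$ are i.i.d.\ $\Bern(\theta^*_a)$. The reason is that later actions are functions of earlier rewards (the weights $w_\tau$, hence the particle draw $I_{\tau+1}$ and the action $A_{\tau+1}$, depend on $R_\tau$), so conditioning on future actions or on the play count biases the past rewards. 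Concretely, with two particles $\theta^{(1)}=(0.9,0.1)$, $\theta^{(2)}=(0.1,0.9)$ and $\theta^*=(0.5,0.5)$, one has $\sP\{R_1=1\mid A_1=1,A_2=1\}=0.9\neq 0.5$: observing that arm $1$ was played again reveals that $R_1$ was likely a success. Hence your conditional application of Hoeffding to ``a fixed sum of $n$ bounded i.i.d.\ variables'' given $N_{t,a}=n$ is unjustified, and this is precisely the subtlety the factor-of-$t$ union bound is supposed to tame, not a step one can wave through.

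The paper repairs this with a coupling you do not use: pre-generate a stack of rewards $Z_a(k)\overset{i.i.d.}{\sim}\Bern(\theta^*_a)$ for each arm and each $k\ge 1$ before the game, and let the $m$th pull of arm $a$ return $Z_a(m)$. Then $E_a(n)=\sum_{k=1}^{n}\bigl(\ln P_{\theta^{(i)}_a}(Z_a(k))-\mu_{a,i}\bigr)$ is a genuinely fixed-length sum of i.i.d.\ bounded variables for every $n$, independent of how the algorithm behaves, and the bound $\sP\{|E_a(N_{t,a})|>\gamma\}\le\sum_{n=1}^{t}\sP\{|E_a(n)|>\gamma\}$ follows from $\sP(A\cap B)\le \sP(A)$ alone, with no conditional-distribution claim; unconditional Hoeffding then gives the stated tail with the same constant $B_{\theta^{(i)}}$ you obtain. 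Alternatively, the route you mention only in passing --- applying Azuma--Hoeffding directly to the bounded martingale-difference sequence $\ln P_{\theta^{(i)}_{A_\tau}}(R_\tau)-\mu_{A_\tau,i}$, which has conditional mean zero given the history and $A_\tau$ --- is itself a complete and correct proof, and in fact yields $\sP\{|\epsilon_{t,i}|>\delta\}\le 2e^{-\delta^2 t/(2B^2)}$ without the extra factor of $t$. Either of these should replace the conditioning argument; as written, that step does not hold.
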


\begin{proof}
	Let $N_{t,a}$ be the number of times action $a$ has been played up to time $t$, $a \in \{1,2\}$. $N_{t,1} + N_{t,2} = t$. Consider the following alternative construction of the reward generation process. Before the game starts, we generate a value $Z_a(k)$ for each action $a \in \{1,2\}$ and each time $k=1,2,\cdots$ independently according to the distribution $\Bernoulli(\theta^*_a)$. At each step $t$, playing action $A_t = a$ yields reward $R_t = Z_a(N_{t,a})$. That is, step 4 of Algorithm \ref{alg:PTS_for_two_arm_Bernoulli_bandit} becomes $R_t = Z_{A_t}(N_{t, A_t})$. It is easy to see that the distributions of any given sample path seen by the algorithm in both constructions are identical. Therefore, we can equivalently work with the alternative construction whenever it is more convenient.

	We have
	\begin{equation*}
		\begin{aligned}
			\widetilde{w}_{t,i} &= \exp \left(\sum_{\tau=1}^t \ln P_{\theta^{(i)}_{A_\tau}} (R_\tau)  \right) = \exp \left(\sum_{a \in \{1,2\}} \sum_{\tau=1}^t \indicator_{\{A_\tau = a\}} \ln P_{\theta^{(i)}_a} (R_\tau) \right)  \\
			&= \exp \left(\sum_{a \in \{1,2\}} \sum_{\tau=1}^t \indicator_{\{A_\tau = a\}} \ln P_{\theta^{(i)}_a} (Z_a(N_{\tau,a})) \right)  = \exp \left(\sum_{a \in \{1,2\}} \sum_{k=1}^{N_{t,a}} \ln P_{\theta_a^{(i)}} (Z_a(k)) \right)  \\
		\end{aligned} 
	\end{equation*}
	for any time $t$ and particle $i \in \{1,\cdots, N\}$. The values in $\left\{\ln P_{\theta^{(i)}_1} (Z_1(k)) \right\}_{k=1}^{N_{t,1}}$ are i.i.d. random variables, each equals to $\ln \theta^{(i)}_1$ with probability $\theta^*_1$ or $\ln(1-\theta^{(i)}_1)$ with probability $1-\theta^*_1$, with mean $-d(\theta^*_1 || \theta^{(i)}_1) - H(\theta^*_1)$. Similarly, values in $\left\{\ln P_{\theta^{(i)}_2} (Z_2(k)) \right\}_{k=1}^{N_{t,2}}$ are i.i.d. random variables with mean $-d(\theta^*_2 || \theta^{(i)}_2) - H(\theta^*_2)$. It follows after some simple algebraic re-arrangements that 
	\begin{equation*}
		\begin{aligned}
		    \frac{1}{t} \ln \widetilde{w}_{t,i} &= \frac{1}{t} \left(\sum_{k=1}^{N_{t,1}} \ln P_{\theta^{(i)}_1}(Z_1(k)) + \sum_{k=1}^{N_{t,2}} \ln P_{\theta^{(i)}_2} (Z_2(k)) \right) \\
			&= -D_i(r_t) + \epsilon_{t,i} \underbrace{-r_t H(\theta^*_1) - (1-r_t) H(\theta^*_2)}_{\triangleq C(r_t)} \, ,
		\end{aligned}
	\end{equation*}
	where 
	\begin{equation*}
	\begin{aligned}
		\epsilon_{t,i} &= \frac{1}{t}  \underbrace{\left(\sum_{k=1}^{N_{t,1}} \ln P_{\theta^{(i)}_1}(Z_1(k)) - \left(-d(\theta^*_1 || \theta^{(i)}_1) - H(\theta^*_1)\right)\right)}_{\triangleq E_1(N_{t,1})} \\
		 &+ \frac{1}{t} \underbrace{\left(\sum_{k=1}^{N_{t,2}} \ln P_{\theta^{(i)}_2} (Z_2(k)) - \left(-d(\theta^*_2 || \theta^{(i)}_2) - H(\theta^*_2)\right)\right)}_{\triangleq E_2(N_{t,2})}  \, .
	\end{aligned}
	\end{equation*}
	$E_1(N_{t,1})$ is the sum of $N_{t,1}$ i.i.d. random variables, each has mean zero and is contained in an interval with length $\abs{\ln \theta^{(i)}_1 - \ln (1-\theta^{(i)}_1)}$. $N_{t,1}$ is a random variable that takes values in $\{1, \cdots, t\}$. Therefore, for any $\gamma > 0$, 
	\begin{equation}
		\label{eq:PTS_for_two_arm_Bern_bandit|weight_dynamics_of_N_particles|weight_dynamics_proposition|proof|control_E1_by_Azuma_Hoeffding}
		\begin{aligned}
			P\left\{\abs{E_1(N_{t,1})} > \gamma \right\} &= \sum_{n=1}^t P\left\{\abs{E_1(n)} > \gamma | N_{t,1} = n \right\} P\left\{N_{t,1} = n \right\} \\
			&\leq \sum_{n=1}^t P\left\{\abs{E_1(n)} > \gamma \right\} \\
			&\leq \sum_{n=1}^t 2 \exp \left(- \frac{2\gamma^2}{n\left(\ln \theta^{(i)}_1 - \ln (1-\theta^{(i)}_1) \right)^2} \right) \\
			&\leq \sum_{n=1}^t 2 \exp \left(- \frac{2\gamma^2}{t\left(\ln \theta^{(i)}_1 - \ln (1-\theta^{(i)}_1) \right)^2} \right) \\
			&= 2 t \exp \left(- \frac{2\gamma^2}{t\left(\ln \theta^{(i)}_1 - \ln (1-\theta^{(i)}_1) \right)^2} \right) \, . 
		\end{aligned}
	\end{equation}
	The second inequality above is due to the Azuma-Hoeffding inequality. Similarly, 
	\begin{equation}
		\label{eq:PTS_for_two_arm_Bern_bandit|weight_dynamics_of_N_particles|weight_dynamics_proposition|proof|control_E2_by_Azuma_Hoeffding}
		\begin{aligned}
			&\; P\left\{\abs{E_2(N_{t,2})} > \gamma \right\} \leq 2 t \exp \left(- \frac{2\gamma^2}{t\left(\ln \theta^{(i)}_2 - \ln (1-\theta^{(i)}_2) \right)^2} \right) \, . 
		\end{aligned}
	\end{equation}
	Using (\ref{eq:PTS_for_two_arm_Bern_bandit|weight_dynamics_of_N_particles|weight_dynamics_proposition|proof|control_E1_by_Azuma_Hoeffding}) and (\ref{eq:PTS_for_two_arm_Bern_bandit|weight_dynamics_of_N_particles|weight_dynamics_proposition|proof|control_E2_by_Azuma_Hoeffding}), we have 
	\begin{equation*}
		\begin{aligned}
			P\left\{\abs{\epsilon_{t,i}} \geq \delta \right\} \leq \sum_{a \in \{1,2\}}  P\left\{\abs{E_a(N_{t,a})} \geq \frac{t \delta}{2} \right\} \leq \sum_{a \in \{1,2\} } 2t \exp \left(-\frac{\delta^2 t}{2 \left(\ln \frac{\theta^{(i)}_a}{1-\theta^{(i)}_a} \right)^2} \right) \leq 4te^{-B_{\theta^{(i)}} \delta^2 t} \, ,
		\end{aligned}
	\end{equation*}
	where $B_{\theta^{(i)}} = \frac{1}{2}\min \left\{ \left(\ln \frac{\theta^{(i)}_1}{1-\theta^{(i)}_1} \right)^{-2},  \left(\ln \frac{\theta^{(i)}_2}{1-\theta^{(i)}_2} \right)^{-2}  \right\} $.
\end{proof}

Let us discuss the implication of Proposition \ref{proposition:PTS_for_two_arm_Bern_bandit|weight_dynamics_of_N_particles|unnormalized_weight_dynamics}. Since $C(r_t)$ does not depend on $i$, it follows from (\ref{eq:PTS_for_two_arm_Bern_bandit|weight_dynamics_of_N_particles|weight_dynamics_proposition|statement|unnormalized_weight}) that $\widetilde{w}_{t,i} \propto \exp\left(-t(D_i(r_t) + \epsilon_{t,i})\right)$. We make two observations here:
\begin{itemize}
	\item For large $t$, the term $\epsilon_{t,i}$ becomes insignificant.  The particle $i$ with the lowest $D_i(r_t)$ at time $t$ is more likely to have the largest normalized weight. In this sense, the divergence $D_i(r_t)$ reflects the fitness of particle $i$ for survival:  the smaller $D_i(r_t)$ is, the more fit particle $i$. However, we cannot simply say one particle is more fit than another without mentioning $r_t$, which is a random process. It is not clear at this point how $r_t$ evolves.   
	\item Obviously, $r_t$ is affected by the history of the particles' weights $\{\widetilde{w}_{\tau,i}: 1 \leq \tau \leq t-1, 1 \leq i \leq N\}$. 
\end{itemize} 

To investigate the interplay between the particles' weights $w_t$ (or $\widetilde{w}_{t}$) and their usage frequencies $(r_t, 1-r_t)$, we take a look at the simplest case: two given particles.

\subsection{Two given particles} \label{subsec:PTS_for_two_arm_Bernoulli_bandit|two_given_particles}

Before we discuss possible configurations of two given particles, we introduce a helpful graphical tool called the \emph{divergence diagram}. A divergence diagram example is drawn in Figure  \ref{fig:PTS_for_two_arm_Bern_bandit|two_fixed_particles|divergence_diagram_example}, with the divergence of a particle $i$, $D_i(r)$ for $0 \leq r \leq 1$, represented by a line segment. The right (respectively, left) endpoint of the line segment is highlighted by a dot if $A(\theta^{(i)}) = 1$ (respectively, if $A(\theta^{(i)}) = 2$), that is, arm 1 (respectively, arm 2) is the optimal arm if $\theta^{(i)}$ is the true parameter. Informally speaking, the closer the line segment is to the bottom, the more fit the corresponding particle is. A line segment that coincides with the bottom line segment represents $\theta^*$ itself, because the KL divergences on both arms are zero.  Note that, not every line segment in the diagram corresponds to a unique particle in $[0,1]^2$, because in general it is possible to have $d(x||y_1) = d(x||y_2)$ with $y_1 \not= y_2$.

\begin{figure}[h]
\vskip 0.2in
\begin{center}
\centerline{\includegraphics[width=0.5\columnwidth]{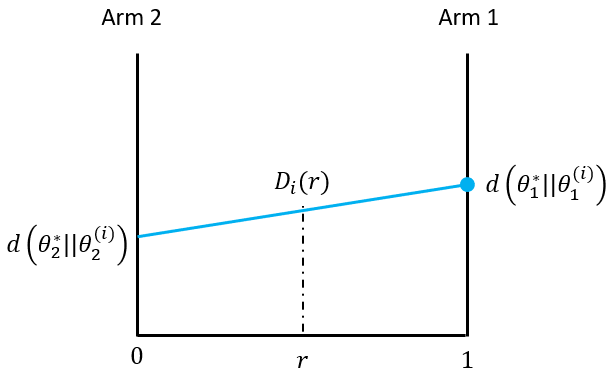}}
\caption{A divergence diagram example.}
\label{fig:PTS_for_two_arm_Bern_bandit|two_fixed_particles|divergence_diagram_example}
\end{center}
\vskip -0.2in
\end{figure}

\begin{figure}[h]
\vskip 0.2in
\begin{center}
\centerline{\includegraphics[width=0.7\columnwidth]{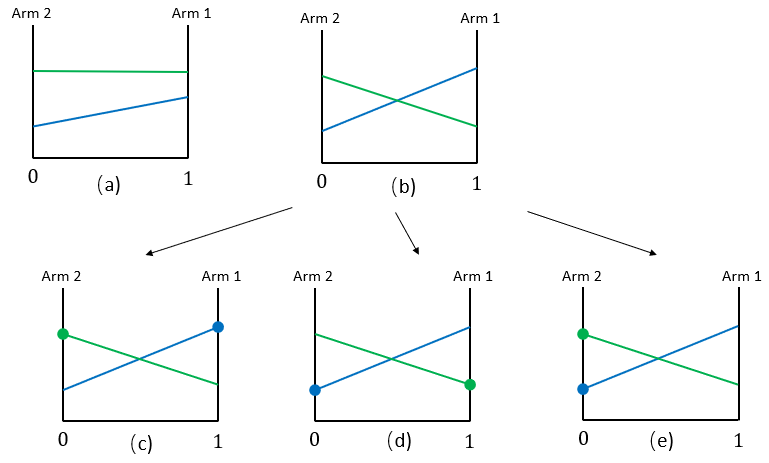}}
\caption{Possible two-particle configurations in the divergence diagram.}
\label{fig:PTS_for_two_arm_Bern_bandit|two_fixed_particles|all_possible_configurations}
\end{center}
\vskip -0.2in
\end{figure}

Consider the possible configurations of two particles in terms of their relative positions in the divergence diagram. See Figure \ref{fig:PTS_for_two_arm_Bern_bandit|two_fixed_particles|all_possible_configurations}.  
\begin{itemize}
	\item In case (a), The line segment of one particle is completely below the other particle. In this case, with probability one, the lower particle will gain all the weight. This is a trivial case. 
	\item In case (b), the line segments of two particles cross each other. This case can be further divided into three sub-cases, shown in (c), (d) and (e) respectively, depending on the optimal arm for each particle. In case (e), the optimal arm for both particles is the same. The problem essentially degenerates to a one-arm Bernoulli bandit problem, which is not so interesting. We will take a closer look at the remaining two cases: (c) counter-reinforcing pair and (d) self-reinforcing pair. 
\end{itemize}

\subsubsection{Counter-reinforcing pair} \label{subsection:PTS_for_two_arm_Bern_bandit|two_fixed_particles|CR_pair}

\begin{definition}(Counter-reinforcing pair)
	For a given $\BernoulliBandit(K=2, \theta^*)$ problem, we say that two particles $\{\theta^{(1)}, \theta^{(2)}\} \subset [0,1]^2$ form a \emph{counter-reinforcing pair (CR pair)} if they can be re-labeled such that the following conditions hold:
	\begin{equation}
	\label{eq:PTS_for_two_arm_Bern_bandit|two_fixed_particles|CR_pair|defining_conditions}
    \begin{aligned}
		d(\theta^*_1 || \theta^{(1)}_1) > d(\theta^*_1 || \theta^{(2)}_1), \; d(\theta^*_2 || \theta^{(1)}_2) < d(\theta^*_2 || \theta^{(2)}_2), A(1) = \{1\}, \;  A(2) = \{2\} \, . 
	\end{aligned}
	\end{equation}
\end{definition}

Note: The only way to re-label the two particles is to switch their labels. Without loss of generality, in the rest of this section, when we say $\{\theta^{(1)}, \theta^{(2)}\}$ form a CR pair, we mean that they have already been properly re-labeled to meet the conditions (\ref{eq:PTS_for_two_arm_Bern_bandit|two_fixed_particles|CR_pair|defining_conditions}).

\begin{figure}[h]
    \centering
    \subfloat[\centering Particle positions.]{{\includegraphics[width=0.4\columnwidth]{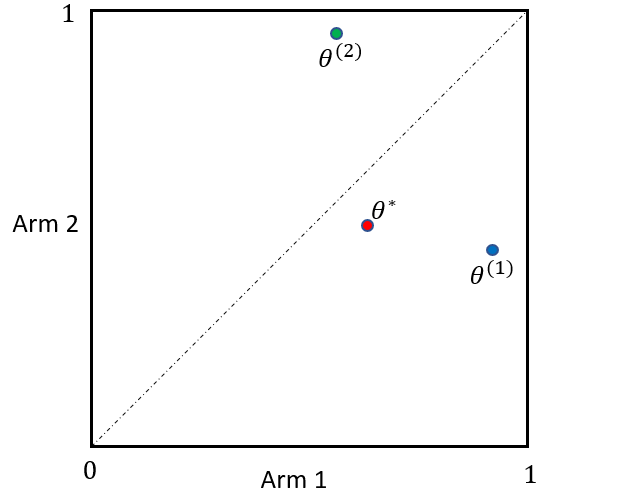} }}%
    \subfloat[\centering Divergences.]{{\includegraphics[width=0.55\columnwidth]{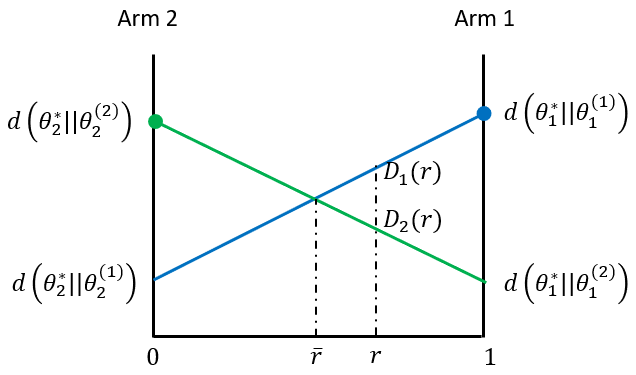} }}%
    \caption{A counter-reinforcing pair example.}%
    \label{fig:PTS_for_two_arm_Bern_bandit|two_fixed_particles|CR_pair|example}%
\end{figure}

A CR pair example is shown in Figure \ref{fig:PTS_for_two_arm_Bern_bandit|two_fixed_particles|CR_pair|example}. Figure \ref{fig:PTS_for_two_arm_Bern_bandit|two_fixed_particles|CR_pair|example}(a) depicts the positions of $\theta^*, \theta^{(1)}$ and $\theta^{(2)}$ in $[0,1]^2$. Figure \ref{fig:PTS_for_two_arm_Bern_bandit|two_fixed_particles|CR_pair|example}(b) depicts the divergences of the two particles. Let $\bar{r} \in (0,1)$ be such that $D_1(\bar{r}) = D_2(\bar{r})$, i.e., the point at which these two lines intersect. The definition of a CR pair guarantees that $\bar{r}$ exists and is unique. 

Consider a large time $t$. Suppose $r_t > \bar{r}$. Since $\widetilde{w}_{t,i} \appropto e^{-t D_i(r_t)}$ and $D_2(r_t) < D_1(r_t)$, we expect $w_{t,2}$ to be larger than $w_{t,1}$, thus particle 2 will be selected more often, which causes arm 2 to be pulled more often. But pulling arm 2 will make $r_t$ decrease. If $r_t$ decreases to a value less than $\bar{r}$, then by a similar argument we expect $w_{t,1}$ to become larger than $w_{t,2}$. Then particle 1 will be selected more often, which makes arm 1 to be pulled more often and $r_t$ to increase. Therefore, these two particles are \emph{counter-reinforcing} each other: selecting one particle will likely increase the weight of the other particle and vice versa. 

We expect to observe that $r_t$ cannot stay too far away either above or below $\bar{r}$. The drift of $r_t$ is always toward $\bar{r}$. However, we also observe through simulations that the weights of the two particles keep oscillating. The random oscillations are so strong that the drift does not make weights converge, that is, weights bounce around too much to converge, but are stochastically bounded. The above observations are formally stated in the following proposition. 

\begin{proposition}
	\label{proposition:PTS_for_two_arm_Bern_bandit|two_fixed_particles|CR_pair|almost_sure_convergence}
	Given a $\BernoulliBandit(K=2, \theta^*)$ problem and suppose a given particle set $\calP_2 = \{\theta^{(1)}, \theta^{(2)}\}$ form a CR pair for the problem. Consider the process of running $\PTS(\calP_2)$ as in Algorithm \ref{alg:PTS_for_two_arm_Bernoulli_bandit}. Let $\bar{r} \in (0,1)$ be the solution to $D_1(r) = D_2(r)$. Then, $r_t \rightarrow \bar{r}$ almost surely. Also, $q_t \rightarrow (\bar{r}, 1-\bar{r})$ and $\bar{w}_t \rightarrow (\bar{r},1-\bar{r})$ almost surely. 
\end{proposition}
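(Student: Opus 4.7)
The idea is to reduce this two-particle CR problem to a one-dimensional self-correcting drift argument, powered by Proposition~\ref{proposition:PTS_for_two_arm_Bern_bandit|weight_dynamics_of_N_particles|unnormalized_weight_dynamics}. First, the exponential bound~(\ref{eq:PTS_for_two_arm_Bern_bandit|weight_dynamics_of_N_particles|weight_dynamics_proposition|statement|error_term_in_exponent_converges_in_prob}) is summable in $t$ for any fixed threshold, so Borel--Cantelli upgrades Proposition~\ref{proposition:PTS_for_two_arm_Bern_bandit|weight_dynamics_of_N_particles|unnormalized_weight_dynamics} to $\epsilon_{t,i}\to 0$ almost surely. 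Writing $a_i=d(\theta^*_1\|\theta^{(i)}_1)$ and $b_i=d(\theta^*_2\|\theta^{(i)}_2)$, the CR conditions~(\ref{eq:PTS_for_two_arm_Bern_bandit|two_fixed_particles|CR_pair|defining_conditions}) give $a_1>a_2$ and $b_2>b_1$, so $D_1(r)-D_2(r)=\alpha(r-\bar r)$ with $\alpha\triangleq(a_1-a_2)+(b_2-b_1)>0$, and $\bar r$ is exactly the root. Subtracting the $i=1$ and $i=2$ instances of~(\ref{eq:PTS_for_two_arm_Bern_bandit|weight_dynamics_of_N_particles|weight_dynamics_proposition|statement|unnormalized_weight}) (the particle-independent term $C(r_t)$ cancels, and the log of the common normalizer drops out of $w_{t,1}/w_{t,2}$) yields the key identity
\begin{equation*}
\frac{1}{t}\ln\frac{w_{t,1}}{w_{t,2}} \;=\; -\alpha(r_t-\bar r)+\eta_t, \qquad \eta_t\triangleq \epsilon_{t,1}-\epsilon_{t,2}\to 0 \text{ a.s.}
\end{equation*}

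The heart of the argument is $r_t\to\bar r$ a.s.\ from this identity. It suffices to prove $\limsup_t r_t\le\bar r+\delta$ a.s.\ for each fixed $\delta>0$; the $\liminf$ direction is symmetric (using $w_{t,2}$ and $A(2)=2$ in place of $w_{t,1}$ and $A(1)=1$), and intersecting over $\delta=1/n$ completes convergence. Work on the full-probability event where $\eta_t\to 0$, and choose $T_0(\omega)<\infty$ with $|\eta_t|\le\alpha\delta/2$ for all $t\ge T_0$. The identity then forces $w_{t,1}\le e^{-\alpha\delta t/2}$ whenever $t\ge T_0$ and $r_t\ge\bar r+\delta$. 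Since $A(1)=1$ in the CR case, $P(A_{t+1}=1\mid\calH_t)=w_{t,1}$, so almost surely
\begin{equation*}
\sum_{t\ge 1}P\bigl(A_{t+1}=1,\,r_t\ge\bar r+\delta \,\big|\,\calH_t\bigr) \;\le\; T_0+\sum_{t\ge T_0}e^{-\alpha\delta t/2}<\infty .
\end{equation*}
L\'evy's conditional Borel--Cantelli lemma then yields a finite random $T_1(\omega)$ past which the joint event $\{A_{t+1}=1,\,r_t\ge\bar r+\delta\}$ never recurs. For $t\ge T_1$: if $r_t\ge\bar r+\delta$, then $A_{t+1}=2$ and hence $r_{t+1}=tr_t/(t+1)<r_t$, so $r$ strictly decreases; if $r_t<\bar r+\delta$, the deterministic one-step bound $r_{t+1}-r_t\le 1/(t+1)$ gives $r_{t+1}\le\bar r+\delta+1/(t+1)$. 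Combining, $\limsup_t r_t\le\bar r+\delta$.

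The remaining two convergences are routine from $r_t\to\bar r$. Since $A(i)=i$ for a CR pair, $q_{t,1}=r_t\to\bar r$ and $q_{t,2}=1-r_t\to 1-\bar r$. For the running-average weights, apply Azuma--Hoeffding to the bounded-increment martingale $M_t=\sum_{\tau=1}^t(\indicator_{\{I_\tau=1\}}-w_{\tau-1,1})$ to obtain $M_t/t\to 0$ a.s., i.e.\ $r_t-\bar w_{t-1,1}\to 0$; together with $r_t\to\bar r$ this gives $\bar w_t\to(\bar r,1-\bar r)$ a.s.

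The delicate step is clearly the drift argument above. The bound $w_{t,1}\le e^{-\alpha\delta t/2}$ activates only once $r_t\ge\bar r+\delta$, so in principle $r_t$ can overshoot the threshold by a single arm-1 play while still below it. What rescues the argument is the universal per-step bound $|r_{t+1}-r_t|\le 1/(t+1)$, which caps any such overshoot at a vanishing amount; this, combined with the ``only finitely often'' conclusion from conditional Borel--Cantelli, is exactly what forces $\limsup r_t\le\bar r+\delta$ rather than allowing a macroscopic overshoot.
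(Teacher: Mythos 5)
Your proof is correct, and its central step takes a genuinely different route from the paper's. Both arguments share the same backbone, namely the decomposition of Proposition \ref{proposition:PTS_for_two_arm_Bern_bandit|weight_dynamics_of_N_particles|unnormalized_weight_dynamics}, which gives $\frac{1}{t}\ln\frac{w_{t,1}}{w_{t,2}} = -\alpha(r_t-\bar r)+\eta_t$, and the final passage from $r_t\to\bar r$ to $q_t$ and $\bar w_t$ is identical (Azuma--Hoeffding plus Borel--Cantelli). Where you diverge is in how this identity is converted into $r_t\to\bar r$: the paper first proves that $X_t=\ln\frac{\widetilde w_{t,1}}{\widetilde w_{t,2}}$ is stochastically bounded uniformly in $t$ with exponential tails (Lemma \ref{lemma:PTS_for_two_arm_Bern_bandit|two_fixed_particles|CR_pair|Xt_is_stochastically_bounded}), by verifying the conditions of the drift-implied bound of \cite{Hajek1982} (Proposition \ref{proposition:appendix|Hajek_drift_bound}) using the CR drift structure, and then combines this with the tail bound on $\epsilon_{t,i}$ to obtain the quantitative estimate $\Pr\{|r_t-\bar r|\ge\delta\}\le A t e^{-B\delta^2 t}$, which is summable and yields almost sure convergence. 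You avoid stochastic boundedness of $X_t$ altogether: you work pathwise on the almost sure event $\eta_t\to 0$, observe that above the threshold $\bar r+\delta$ the weight $w_{t,1}$ is exponentially small so that arm $1$ is played there with summable conditional probability, and invoke L\'evy's conditional Borel--Cantelli plus the closed-loop facts $\Pr(A_{t+1}=1\mid\calH_t)=w_{t,1}$, $r_{t+1}=tr_t/(t+1)$ after an arm-$2$ play, and $|r_{t+1}-r_t|\le 1/(t+1)$. Your route is more elementary (no Foster--Lyapunov-type machinery) but purely asymptotic: it yields no rate and no control of the persistent weight oscillations, whereas the paper's route delivers both the uniform exponential tail on $X_t$ (of independent interest for the CR phenomenology) and an explicit concentration bound on $r_t$. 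One micro-step you should make explicit in the ``combining'' sentence: after $T_1$ the process cannot remain above $\bar r+\delta$ forever, since sustained arm-$2$ plays give $r_{t+n}=r_t\,t/(t+n)\to 0<\bar r+\delta$; hence $r_t$ returns below the threshold infinitely often, the last sub-threshold time $\tau(t)$ tends to infinity, and the overshoot bound $r_t\le\bar r+\delta+1/(\tau(t)+1)$ then gives $\limsup_t r_t\le\bar r+\delta$. With that line added, the argument is complete.
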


The remainder of this section is dedicated to the proof of Proposition \ref{proposition:PTS_for_two_arm_Bern_bandit|two_fixed_particles|CR_pair|almost_sure_convergence}. The proof starts with constructing a sequence $\{X_t\}$, defined by $X_t \triangleq \ln \frac{\widetilde{w}_{t,1}}{\widetilde{w}_{t,2}} = \ln \frac{w_{t,1}}{w_{t,2}}$. Recall that, for $i = 1,2$, 
\begin{equation*}
\begin{aligned}
	\widetilde{w}_{t+1,i} &= \widetilde{w}_{t,i} P_{\theta^{(i)}_{A_{t+1}}}(R_{t+1}) = \left\{\begin{array}{cccc}
		\widetilde{w}_{t,i} \theta^{(i)}_{A_{t+1}} & if & R_{t+1} = 1 \\
		\widetilde{w}_{t,i} (1-\theta^{(i)}_{A_{t+1}}) & if & R_{t+1} = 0 \\
	\end{array} \right. .
\end{aligned}
\end{equation*}
By the conditions in (\ref{eq:PTS_for_two_arm_Bern_bandit|two_fixed_particles|CR_pair|defining_conditions}) that $A(1) = \{1\}$ and $A(2) = \{2\}$, $A_{t+1} = i$ iff particle $\theta^{(i)}$ is selected at time $t+1$, which occurs with probability $w_{t,i}$. So for $i = 1,2$, 
\begin{equation*}
	\widetilde{w}_{t+1,i} = \left\{\begin{array}{cccc}
		\widetilde{w}_{t,i} \theta^{(i)}_1 & w.p. & w_{t,1} \theta^*_1 \\
		\widetilde{w}_{t,i} (1-\theta^{(i)}_1) & w.p. & w_{t,1} (1-\theta^*_1) \\
		\widetilde{w}_{t,i} \theta^{(i)}_2 & w.p. & w_{t,2} \theta^*_2 \\
		\widetilde{w}_{t,i} (1-\theta^{(i)}_2) & w.p. & w_{t,2} (1-\theta^*_2) \\ 
	\end{array} \right. \, .  
\end{equation*}
Since $w_{t,1} + w_{t,2} = 1$, if we are given that $x = \ln \frac{\widetilde{w}_{t,1}}{\widetilde{w}_{t,2}} = \ln \frac{w_{t,1}}{w_{t,2}}$, then $w_{t,1} = \frac{e^x}{1+e^x}$ and $w_{t,2} = \frac{1}{1+e^x}$. It follows that
\begin{equation}
	\label{eq:PTS_for_two_arm_Bern_bandit|two_fixed_particles|CR_pair|Xt_expression}
	X_{t+1} = X_t + \left\{\begin{array}{cccc}
		\ln \frac{\theta^{(1)}_1}{\theta^{(2)}_1} & w.p. & \frac{e^{X_t}}{1+e^{X_t}} \theta^*_1 \\
		\ln \frac{(1-\theta^{(1)}_1)}{(1-\theta^{(2)}_1)} & w.p. & \frac{e^{X_t}}{1+e^{X_t}} (1-\theta^*_1) \\
		\ln \frac{\theta^{(1)}_2}{\theta^{(2)}_2} & w.p. & \frac{1}{1+e^{X_t}} \theta^*_2 \\
		\ln \frac{(1-\theta^{(1)}_2)}{(1-\theta^{(2)}_2)} & w.p. & \frac{1}{1+e^{X_t}} (1-\theta^*_2)  \\
	\end{array} \right. \, . 
\end{equation}
Note that $X_0 = 0$ since $w_{0,1} = w_{0,2} = \frac{1}{2}$. $\{X_t\}_{t \geq 0}$ is a time-homogeneous Markov process living in a state space of infinite cardinality. Note that (\ref{eq:PTS_for_two_arm_Bern_bandit|two_fixed_particles|CR_pair|Xt_expression}) is derived using only the conditions $A(1) = \{1\}$ and $A(2) = \{2\}$ in (\ref{eq:PTS_for_two_arm_Bern_bandit|two_fixed_particles|CR_pair|defining_conditions}), therefore it holds even if the two particles do not form a CR pair. The dynamics of $X_t$ in (\ref{eq:PTS_for_two_arm_Bern_bandit|two_fixed_particles|CR_pair|Xt_expression}) will be used again in the next section in the case of a self-reinforcing pair.

In the next lemma, we show that $\{X_t\}$ is stochastically bounded given the CR pair conditions. 

\begin{lemma}
	\label{lemma:PTS_for_two_arm_Bern_bandit|two_fixed_particles|CR_pair|Xt_is_stochastically_bounded}
	Consider the process described in Proposition \ref{proposition:PTS_for_two_arm_Bern_bandit|two_fixed_particles|CR_pair|almost_sure_convergence}. Let $X_t \triangleq \ln \frac{\widetilde{w}_{t,1}}{\widetilde{w}_{t,2}} = \ln \frac{w_{t,1}}{w_{t,2}}$. Then, for some constants $A_0$ and $B_0$ depending on $\theta^*$ and $\calP_2 = \{\theta^{(1)}, \theta^{(2)}\}$,
	\begin{equation*}
		P \left\{\abs{X_t} \geq x \right\} \leq A_0 e^{-B_0 x} \quad \forall t \geq 1 \; \text{and} \; x > 0 \, . 
	\end{equation*}
\end{lemma}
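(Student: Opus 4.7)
The plan is to exploit the mean-reverting structure of $\{X_t\}$, which is a time-homogeneous Markov chain on $\sR$ with one-step increments $\Delta_t \triangleq X_{t+1}-X_t$ uniformly bounded by a constant $M$ depending only on $\theta^{(1)},\theta^{(2)}$. A direct computation from (\ref{eq:PTS_for_two_arm_Bern_bandit|two_fixed_particles|CR_pair|Xt_expression}) gives the conditional drift
\[
\mu(x) \;\triangleq\; \E[\Delta_t \mid X_t = x] \;=\; \frac{e^{x}}{1+e^{x}}\,\delta_1 \;+\; \frac{1}{1+e^{x}}\,\delta_2,
\]
where $\delta_1 = d(\theta^*_1 \Vert \theta^{(2)}_1) - d(\theta^*_1\Vert \theta^{(1)}_1)$ and $\delta_2 = d(\theta^*_2\Vert \theta^{(2)}_2) - d(\theta^*_2\Vert \theta^{(1)}_2)$. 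The CR-pair conditions in (\ref{eq:PTS_for_two_arm_Bern_bandit|two_fixed_particles|CR_pair|defining_conditions}) give $\delta_1 < 0$ and $\delta_2 > 0$; hence there exist constants $x_0 > 0$ and $\eta > 0$ with $\mu(x) \le -\eta$ for $x \ge x_0$ and $\mu(x) \ge \eta$ for $x \le -x_0$.

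Next I would set up a geometric Foster--Lyapunov drift with the Lyapunov function $V(x) = \cosh(\alpha x)$ for a small $\alpha > 0$; using $\cosh$ rather than $e^{\alpha|x|}$ avoids the non-smoothness at the origin while keeping $V(x) \ge \tfrac{1}{2}e^{\alpha|x|}$. Expanding
\[
V(X_{t+1}) = \cosh(\alpha X_t)\cosh(\alpha\Delta_t) + \sinh(\alpha X_t)\sinh(\alpha\Delta_t),
\]
taking conditional expectations, and Taylor-expanding $\cosh(\alpha\Delta_t)$ and $\sinh(\alpha\Delta_t)$ using $|\Delta_t| \le M$, one obtains for $x \ge x_0$
\[
\E[V(X_{t+1})\mid X_t = x] \;\le\; V(x)\bigl(1 + C\alpha^2\bigr) + \alpha\sinh(\alpha x)\,\mu(x),
\]
for some constant $C = C(M)$. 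Since $\sinh(\alpha x) \ge \tanh(\alpha x_0)\,V(x)$ for $x \ge x_0$ and $\mu(x) \le -\eta$, this is at most $\bigl(1 - \alpha\eta\tanh(\alpha x_0) + C\alpha^2\bigr) V(x)$, and the symmetric bound holds for $x \le -x_0$. Choosing $\alpha$ small enough that $\alpha\eta\tanh(\alpha x_0) > 2C\alpha^2$ yields a $\rho \in (0,1)$ with $\E[V(X_{t+1})\mid X_t] \le \rho V(X_t)$ whenever $|X_t| > x_0$. On the compact set $\{|x| \le x_0\}$, $V$ is bounded by some $K_0$, so overall
\[
\E[V(X_{t+1})\mid X_t] \;\le\; \rho V(X_t) + K_0.
\]

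Iterating this recursion from $V(X_0) = \cosh(0) = 1$ gives $\sup_{t \ge 0}\E[V(X_t)] \le 1 + K_0/(1-\rho) \triangleq A$, uniformly in $t$. Markov's inequality then produces
\[
P\{|X_t| \ge x\} \;=\; P\{V(X_t) \ge \cosh(\alpha x)\} \;\le\; \frac{A}{\cosh(\alpha x)} \;\le\; 2A\,e^{-\alpha x},
\]
which is the claimed bound with $A_0 = 2A$ and $B_0 = \alpha$. The main technical obstacle is verifying the geometric drift inequality cleanly --- the smooth surrogate $\cosh(\alpha x)$ sidesteps the kink at the origin, and one must choose $\alpha$ small enough that the $O(\alpha^2)$ remainder from the Taylor expansion is dominated by the $O(\alpha\eta)$ mean-reverting term. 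Both amount only to routine constant-tracking given the bounded-increment property and the explicit form of $\mu$.
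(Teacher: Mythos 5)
Your route is genuinely different from the paper's. The paper verifies the two conditions of a ready-made drift-implied bound (Theorem 2.3 of Hajek 1982, restated as Proposition \ref{proposition:appendix|Hajek_drift_bound}): negative drift of $X_t$ above a level $a$ and of $-X_t$ above a level $a'$, plus increments with exponential moments, and then applies that bound separately to $X_t$ and $-X_t$. You instead build a self-contained two-sided geometric Foster--Lyapunov argument with $V(x)=\cosh(\alpha x)$, get $\sup_t \E[V(X_t)]<\infty$, and finish with Markov's inequality. Your drift computation is correct ($\delta_1<0$, $\delta_2>0$ under the CR conditions, $\mu$ monotone with limits $\delta_2$ and $\delta_1$), the increments are indeed uniformly bounded, and the $\cosh$ expansion, the bound $\sinh(\alpha x)\ge\tanh(\alpha x_0)V(x)$ for $x\ge x_0$, the iteration $\E[V(X_{t+1})]\le\rho\E[V(X_t)]+K_0$, and the final Markov step are all sound. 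This is a legitimate alternative that avoids citing an external drift bound, at the cost of having to track the second-order terms yourself.

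There is, however, one concrete slip in the constant-choosing step: you claim to choose $\alpha$ small enough that $\alpha\eta\tanh(\alpha x_0)>2C\alpha^2$, but since $\tanh(\alpha x_0)\sim \alpha x_0$ as $\alpha\to 0$, both sides are of order $\alpha^2$; shrinking $\alpha$ alone only drives the condition to (roughly) $\eta x_0>2C$, which need not hold for the $x_0$ you fixed from the drift condition. The worst case in your contraction estimate is $|x|=x_0$, where the useful drift contribution is only $O(\alpha^2 x_0\eta)$ against an $O(\alpha^2 M^2)$ remainder --- this is exactly the price of the symmetric $\cosh$ Lyapunov function, whereas the paper's one-sided exponential argument gets a drift benefit of order $\eta$ against a remainder of order $\eta^2$ uniformly on $\{x\ge a\}$, so "take $\eta$ small" suffices there. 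The fix is immediate and free: since $\mu(x)\to\delta_1<0$ and $\mu(x)\to\delta_2>0$ monotonically, the drift conditions hold with the same $\eta$ for every sufficiently large $x_0$, so first enlarge $x_0$ until $\eta x_0$ dominates the constant coming from the second-order Taylor remainder (enlarging only the compact set and hence $K_0$), and then take $\alpha$ small (say $\alpha x_0\le 1$, $\alpha M\le 1$) so the higher-order terms are controlled. With that one-line repair your proof is complete and yields the stated bound with $B_0=\alpha$ and $A_0=2\bigl(1+K_0/(1-\rho)\bigr)$.
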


\begin{proof}
	The proof essentially relies on a drift implied bound in \cite{Hajek1982} (copied as Proposition \ref{proposition:appendix|Hajek_drift_bound} in Section \ref{section:appendix|Hajek_drift_bound} for reference).  We check the two conditions of Proposition \ref{proposition:appendix|Hajek_drift_bound} for $\{X_t\}$. 
	
	By (\ref{eq:PTS_for_two_arm_Bern_bandit|two_fixed_particles|CR_pair|Xt_expression}), the drift of the process $\{X_t\}$ at time $t$ is 
	\begin{equation*}
		\begin{aligned}
			&\; \E[X_{t+1} - X_t | X_t = x] \\
			=&\; \frac{e^x}{1+e^x} \theta^*_1 \ln \frac{\theta^{(1)}_1}{\theta^{(2)}_1} + \frac{e^x}{1+e^x} (1-\theta^*_1) \ln \frac{1-\theta^{(1)}_1}{1-\theta^{(2)}_1} + \frac{1}{1+e^x} \theta^*_2 \ln \frac{\theta^{(1)}_2}{\theta^{(2)}_2} + \frac{1}{1+e^x} (1-\theta^*_2) \ln \frac{1-\theta^{(1)}_2}{1-\theta^{(2)}_2} \\
			=& \; \left(\frac{e^x}{1+e^x}d(\theta^*_1 || \theta^{(2)}_1) + \frac{1}{1+e^x} d(\theta^*_2 || \theta^{(2)}_2) \right) - \left(\frac{e^x}{1+e^x}d(\theta^*_1 || \theta^{(1)}_1) + \frac{1}{1+e^x} d(\theta^*_2 || \theta^{(1)}_2) \right) \\
			=& \; D_2 \left(\frac{e^x}{1+e^x}\right) - D_1\left(\frac{e^x}{1+e^x}\right) \triangleq h(x) \, . 
		\end{aligned}
	\end{equation*}
	Let $f(r) \triangleq D_2(r) - D_1(r)$. Then $h(x) = f(\frac{e^x}{1+e^x})$. $f(r)$ is a linear function in $r$: $f(r) = \alpha r + \beta$, where
	\begin{equation}
		\label{eq:PTS_for_two_arm_Bern_bandit|two_fixed_particles|CR_pair|Xt_is_stochastically_bounded_lemma|definition_alpha_beta}
	\begin{aligned}
		\alpha &= \left(d(\theta^*_1 || \theta^{(2)}_1) - d(\theta^*_2 || \theta^{(2)}_2)\right) - \left(d(\theta^*_1 || \theta^{(1)}_1) - d(\theta^*_2 || \theta^{(1)}_2)\right) \, , \; \beta = d(\theta^*_2 || \theta^{(2)}_2) - d(\theta^*_2 || \theta^{(1)}_2) \, . 
	\end{aligned}
	\end{equation}
	Since the two particles form a CR pair, $\alpha < 0$ and $\beta > 0$. Let $\bar{r} = -\frac{\beta}{\alpha}$, which is the solution to $f(r) = 0$. 
	It can be verified that Condition C1 of Proposition \ref{proposition:appendix|Hajek_drift_bound} is satisfied with $a = \ln \frac{1+\bar{r}}{1-\bar{r}}$ and $\epsilon_0 = \frac{1}{2} \left(d(\theta^*_1 || \theta^{(1)}_1) - d(\theta^*_1 || \theta^{(2)}_1)\right)$. This corresponds to solving $\frac{e^a}{1+e^a} = \frac{\bar{r}+1}{2}$, so $h(a) = f(\frac{\bar{r}+1}{2}) = \frac{1}{2} (f(\bar{r}) + f(1)) = \frac{1}{2}f(1) = \epsilon_0$. Note that $a > 0$.
	
	To check Condition C2 of Proposition \ref{proposition:appendix|Hajek_drift_bound}, let $x^* \triangleq \max \left\{\abs{\ln \frac{\theta^{(1)}_1}{\theta^{(2)}_1}}, \abs{\ln \frac{(1-\theta^{(1)}_1)}{(1-\theta^{(2)}_1)}}, \abs{\ln \frac{\theta^{(1)}_2}{\theta^{(2)}_2}}, \abs{\ln \frac{(1-\theta^{(1)}_2)}{(1-\theta^{(2)}_2)}} \right\}$, and let random variable $Z = x^*$ with probability $1$. Then obviously $(\abs{X_{t+1} - X_t} | X_t) \prec Z$. Choose $\lambda = 1$ (any positive value works), then 
	\begin{equation}
		\label{eq:PTS_for_two_arm_Bern_bandit|two_fixed_particles|CR_pair|Xt_is_stochastically_bounded_lemma|constant_D}
	\begin{aligned}
		D &= \E[e^{\lambda Z}] = e^{x^*} = \max \left\{\frac{\theta^{(1)}_1}{\theta^{(2)}_1}, \frac{\theta^{(2)}_1}{\theta^{(1)}_1}, \frac{1-\theta^{(1)}_1}{1-\theta^{(2)}_1}, \frac{1-\theta^{(2)}_1}{1-\theta^{(1)}_1}, \frac{\theta^{(1)}_2}{\theta^{(2)}_2}, \frac{\theta^{(2)}_2}{\theta^{(1)}_2}, \frac{1-\theta^{(1)}_2}{1-\theta^{(2)}_2}, \frac{1-\theta^{(2)}_2}{1-\theta^{(1)}_2}  \right\} \, .
	\end{aligned}
	\end{equation}
	Note that $D > 1$. Condition C2 of Proposition \ref{proposition:appendix|Hajek_drift_bound} is satisfied. 
	
	Since $c \geq \frac{\E[e^{\lambda Z}] - (1+ \E[Z])}{\lambda^2} = D - 1 - x^*$, we can choose the following constants: $c = D$, $\eta = \min\left(1, \frac{\epsilon_0}{2c}\right)$, $\rho = 1- \frac{1}{2} \eta \epsilon_0$. 
	Note that $0 = X_0 \leq a$. Applying Proposition \ref{proposition:appendix|Hajek_drift_bound}, we have
	\begin{equation}
		\label{eq:Xt_is_stochastically_bounded_positive_part}
		P \left\{X_t \geq x \right\} \leq \frac{D}{1-\rho} e^{-\eta(x-a)} = A_1 e^{-B_1 x} \quad  \forall t, x > 0 ,
	\end{equation}
	where $A_1 = \frac{D}{1-\rho} e^{\eta a} = \frac{2D}{\eta \epsilon_0} e^{\eta a} = \frac{2D}{\eta \epsilon_0} \left(\frac{1+\bar{r}}{1-\bar{r}}\right)^\eta$ and $B_1 = \eta$. 
	
	Apply the same analysis to the sequence $\{-X_t\}_{t \geq 0}$ with the following constants: $a' = \ln \frac{2-\bar{r}}{\bar{r}}$, $\epsilon_0' = \frac{1}{2} \left(d(\theta^*_2 || \theta^{(2)}_2) - d(\theta^*_2 || \theta^{(1)}_2) \right)$, $\lambda = 1$, $D$ as in (\ref{eq:PTS_for_two_arm_Bern_bandit|two_fixed_particles|CR_pair|Xt_is_stochastically_bounded_lemma|constant_D}), $c = D$, $\eta' = \min\left(\lambda, \frac{\epsilon_0'}{2c}\right)$ and $\rho' = 1-\frac{1}{2} \eta' \epsilon_0'$, we get
	\begin{equation}
		\label{eq:Xt_is_stochastically_bounded_negative_part}
		P\{-X_t \geq x \} \leq \frac{D}{1-\rho'} e^{-\eta'(x-a')} = A_2 e^{-B_2 x} \; \forall t, x > 0 ,
	\end{equation}
	where $A_2 = \frac{D}{1-\rho'} e^{\eta' a'} = \frac{2D}{\eta' \epsilon_0'} e^{\eta' a'} = \frac{2D}{\eta' \epsilon_0'} \left(\frac{2-\bar{r}}{\bar{r}}\right)^{\eta'}$ and $B_2 = \eta'$. 
	
	Let $A_0 = 2\max\{A_1, A_2\}$ and $B_0 = \min\{B_1, B_2\}$ and combine (\ref{eq:Xt_is_stochastically_bounded_positive_part}) and (\ref{eq:Xt_is_stochastically_bounded_negative_part}), we get 
	\begin{equation*}
		P \left\{\abs{X_t} \geq x \right\} \leq A_0 e^{-B_0 x} \quad \forall t \; \text{and} \; x > 0 \, . 
	\end{equation*}
\end{proof}

We are now ready to prove Proposition \ref{proposition:PTS_for_two_arm_Bern_bandit|two_fixed_particles|CR_pair|almost_sure_convergence}. Roughly speaking, since $\ln \widetilde{w}_{t,i} \approx -t D_i(r_t)$, $X_t = \ln \frac{\widetilde{w}_{t,1}}{\widetilde{w}_{t,2}} \approx t(D_2(r_t) - D_1(r_t))$. The stochastic boundedness of $X_t$ then implies the stochastic boundedness of $t\abs{D_2(r_t) - D_1(r_t)}$. So for large $t$, $D_2(r_t) - D_1(r_t)$ is close to zero and hence $r_t$ is close to $\bar{r}$. We show that $r_t$ converges to $\bar{r}$ in probability, which combined with the Borel-Contelli lemma leads to convergence almost surely. The convergence of $q_t$ and $\bar{w}_t$ naturally follows.

\begin{proof}[Proof of Proposition \ref{proposition:PTS_for_two_arm_Bern_bandit|two_fixed_particles|CR_pair|almost_sure_convergence}]
	
	\sloppy
	Recall that $f(r) = D_2(r) - D_1(r) = \alpha r + \beta$ for $\alpha$ and $\beta$ given in (\ref{eq:PTS_for_two_arm_Bern_bandit|two_fixed_particles|CR_pair|Xt_is_stochastically_bounded_lemma|definition_alpha_beta}) and $f(\bar{r}) = 0$. So $\abs{f(r_t)} = \abs{f(r_t) - f(\bar{r})} = \abs{(\alpha r_t + \beta) - (\alpha \bar{r} + \beta)} = \abs{\alpha} \abs{r_t - \bar{r}}$. Therefore, for any $\delta > 0$,
	\begin{equation*}
		\begin{aligned}
			 P\left\{\abs{r_t - \bar{r}} \geq \delta \right\} &=  P\left\{\abs{f(r_t)} \geq \abs{\alpha} \delta \right\} \\ &\leq P\left\{\abs{f(r_t) + \epsilon_{t,1} - \epsilon_{t,2}}  \geq \frac{\abs{\alpha}\delta}{3} \right\} + P \left\{\abs{\epsilon_{t,1}} \geq \frac{\abs{\alpha}\delta}{3} \right\} + P \left\{\abs{\epsilon_{t,2}} \geq \frac{\abs{\alpha}\delta}{3} \right\} \, . 
		\end{aligned}
	\end{equation*} 

	But 
	\begin{equation*}
		\begin{aligned}
			f(r_t) + \epsilon_{t,1} - \epsilon_{t,2} &= D_2(r_t) - D_1(r_t) + \epsilon_{t,1} - \epsilon_{t,2} \\
			&= \left(-D_1(r_t) + \epsilon_{t,1} + C(r_t)\right) - \left(-D_2(r_t) + \epsilon_{t,2} + C(r_t)\right) \\
			&\overset{(i)}{=} \frac{1}{t} \ln \widetilde{w}_{t,1} - \frac{1}{t} \ln \widetilde{w}_{t,2} \\
			&= \frac{1}{t} \ln \frac{\widetilde{w}_{t,1}}{\widetilde{w}_{t,2}} = \frac{1}{t} X_t \, ,
		\end{aligned}
	\end{equation*} 
	where step $(i)$ is due to Proposition \ref{proposition:PTS_for_two_arm_Bern_bandit|weight_dynamics_of_N_particles|unnormalized_weight_dynamics}. Therefore, by Proposition \ref{proposition:PTS_for_two_arm_Bern_bandit|weight_dynamics_of_N_particles|unnormalized_weight_dynamics} and Lemma \ref{lemma:PTS_for_two_arm_Bern_bandit|two_fixed_particles|CR_pair|Xt_is_stochastically_bounded}, 
	\begin{equation*}
		\begin{aligned}
			P \left\{\abs{r_t - \bar{r}} \geq \delta \right\} &\leq P \left\{\abs{X_t} \geq \frac{\abs{\alpha}\delta t}{3} \right\} + P \left\{\abs{\epsilon_{t,1}} \geq \frac{\abs{\alpha}\delta}{3} \right\} + P \left\{\abs{\epsilon_{t,2}} \geq \frac{\abs{\alpha}\delta}{3} \right\} \\
			&\leq  A_0 e^{-\frac{B_0 \abs{\alpha} \delta t}{3}} + 4te^{-B_{\theta^{(1)}} \frac{\abs{\alpha}^2 \delta^2}{9} t} + 4te^{-B_{\theta^{(2)}} \frac{\abs{\alpha}^2 \delta^2}{9} t} \\
			&\leq  Ate^{-B\delta^2 t} \, ,
		\end{aligned}
	\end{equation*}
	where $A = 3\max\left\{A_0, 4 \right\}$ and $B =\min\left\{\frac{B_0 \abs{\alpha}}{3}, \frac{B_{\theta^{(1)}} \abs{\alpha}^2}{9}, \frac{B_{\theta^{(2)}} \abs{\alpha}^2}{9}\right\}$. It follows that
	\begin{equation*}
	\begin{aligned}
		\sum_{t=1}^\infty P\left\{\abs{r_t - \bar{r}} \geq \delta \right\} &\leq \sum_{t=1}^\infty Ate^{-B \delta^2 t} = Ae^{B\delta^2} \sum_{t=1}^\infty te^{-B\delta^2 (t-1)} = \frac{Ae^{B\delta^2}}{\left(1-e^{-B\delta^2}\right)^2} < \infty \, . 
	\end{aligned}
	\end{equation*}

	By the Borel-Cantelli Lemma, $P \left\{\abs{r_t - \bar{r}} \geq \delta \; i.o. \right\} = 0$ for any $\delta > 0$. It follows that $r_t \rightarrow \bar{r}$ almost surely as $t \rightarrow \infty$. Since arm 1 (resp. arm 2) is chosen iff particle 1 (resp. particle 2) is chosen, $q_t = (r_t, 1-r_t)$. So $q_t \rightarrow (\bar{r}, 1-\bar{r})$. Finally, since $I_t \sim w_{t-1} = (w_{t-1,1}, \cdots, w_{t-1,N})$, $\indicator_{\{I_t = i \}} \sim \Bernoulli(w_{t-1,i})$. For $i = 1,2$, by the Azuma-Hoeffding inequality, for any $\gamma > 0$, 
	\begin{equation*}
		\begin{aligned}
			\Pr\left\{\abs{q_{t,i} - \bar{w}_{t-1,i}} \geq \gamma\right\} &= \Pr \left\{\abs{\frac{1}{t}\sum_{\tau=1}^t \indicator_{\{I_t = i \}} - \frac{1}{t}\sum_{\tau=0}^{t-1} w_{\tau,i}} \geq \gamma \right\} \\
			&= \Pr \left\{\abs{\sum_{\tau=1}^t \left(\indicator_{\{I_t=i\}} - w_{t-1,i}\right)} \geq t\gamma \right\} \\
			&\leq 2\exp\left(-\frac{2(t\gamma)^2}{t}\right) \\
			&= 2e^{-2\gamma^2 t} \,  ,
		\end{aligned}
	\end{equation*}
	which is summable in $t$. Apply the Borel-Cantelli Lemma again, we get $\abs{q_t - \bar{w}_{t-1}} \rightarrow 0$ with probability one. So $\bar{w}_t \rightarrow (\bar{r}, 1-\bar{r})$. 
\end{proof}

\subsubsection{Self-reinforcing pair}

\begin{definition}(Self-reinforcing pair)
	For a given $\BernoulliBandit(K=2, \theta^*)$ problem, we say two particles $\theta^{(1)}, \theta^{(2)} \in [0,1]^2$ form a \emph{self-reinforcing pair} (SR pair) if they can be relabeled such that the following conditions hold:
	\begin{equation}
		\label{eq:PTS_for_two_arm_Bern_bandit|two_fixed_particles|SR_pair|defining_conditions}
	\begin{aligned}
		d(\theta^*_1 || \theta^{(1)}_1) < d(\theta^*_1 || \theta^{(2)}_1), \; d(\theta^*_2 || \theta^{(1)}_2) > d(\theta^*_2 || \theta^{(2)}_2), A(1) = \{1\}, \; A(2) = \{2\} \, . 
	\end{aligned}
	\end{equation}
\end{definition}

Without loss of generality, in this section when we say particles $\theta^{(1)}$ and $\theta^{(2)}$ are a SR pair, we assume they have already been properly labeled such that they satisfy (\ref{eq:PTS_for_two_arm_Bern_bandit|two_fixed_particles|SR_pair|defining_conditions}).

\begin{figure}[h]
    \centering
    \subfloat[\centering Particle positions.]{{\includegraphics[width=0.4\columnwidth]{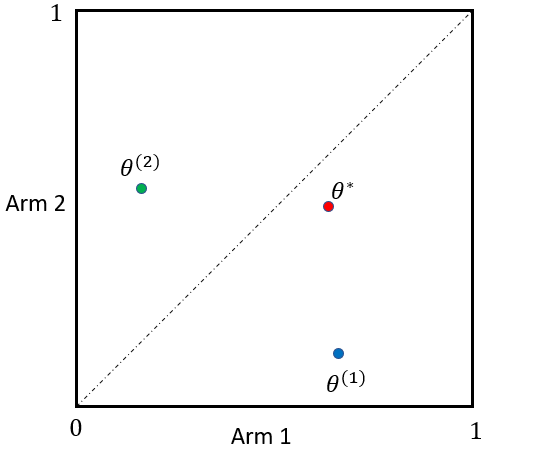}}}%
    \subfloat[\centering Divergences.]{{\includegraphics[width=0.55\columnwidth]{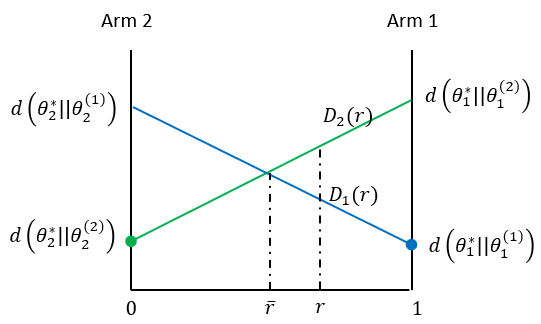}}}%
    \caption{A self-reinforcing pair example.}%
    \label{fig:PTS_for_two_arm_Bern_bandit|two_fixed_particles|SR_pair|example}%
\end{figure}

An SR pair example is drawn in Figure \ref{fig:PTS_for_two_arm_Bern_bandit|two_fixed_particles|SR_pair|example}. Consider a large time $t$. Since $\widetilde{w}_{t,i} \appropto e^{-t D_i(r_t)}$, if $r_t > \bar{r}$, with high probability particle 1 will be selected more often, which will cause $r_t$ to further increase. If $r_t < \bar{r}$, then with high probability particle 2 will be selected often, which will cause $r_t$ to further decrease. Therefore, each of the two particles is \emph{self-reinforcing}: selecting one particle will likely increase the weight of the particle itself which makes it to be selected more often. Each particle behaves like a black hole. We expect that, in the end, either particle 1 or particle 2 gain all the weight. Which of the two particles wins out in the end is random and is influenced by the initial condition. We state this observation more formally in the following proposition.  

\begin{proposition}
	\label{proposition:PTS_for_two_arm_Bern_bandit|two_fixed_particles|SR_pair|two_blackholes_convergence}
	Given a problem $\BernoulliBandit(K=2, \theta^*)$ and a particle set $\calP_2 = \{\theta^{(1)}, \theta^{(2)}\}$, suppose $\{\theta^{(1)}, \theta^{(2)}\}$ forms a SR pair for the problem. Consider the process of running $\PTS(\calP_2)$ as in Algorithm \ref{alg:PTS_for_two_arm_Bernoulli_bandit}. Let $X_t = \ln \frac{\widetilde{w}_{t,1}}{\widetilde{w}_{t,2}} = \ln \frac{w_{t,1}}{w_{t,2}}$ for $t \geq 0$. Then, with probability one, one of the following two cases happens:
	\begin{enumerate}
		\item $X_t \rightarrow \infty$, $q_t \rightarrow (1,0)$, $w_t \rightarrow (1,0)$ and $r_t \rightarrow 1$. 
		\item $X_t \rightarrow -\infty$, $q_t \rightarrow (0,1)$, $w_t \rightarrow (0,1)$ and $r_t \rightarrow 0$. 
	\end{enumerate}
\end{proposition}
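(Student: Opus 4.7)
The plan is to mirror the CR-pair proof (Proposition \ref{proposition:PTS_for_two_arm_Bern_bandit|two_fixed_particles|CR_pair|almost_sure_convergence}) while inverting its \emph{stochastic boundedness} conclusion into a \emph{divergence} conclusion. Define $X_t \triangleq \ln(\widetilde{w}_{t,1}/\widetilde{w}_{t,2})$. Since the derivation of (\ref{eq:PTS_for_two_arm_Bern_bandit|two_fixed_particles|CR_pair|Xt_expression}) only used the optimality of arm $i$ under particle $i$, the same Markov dynamics $X_{t+1}=X_t+\xi_{t+1}$ applies: conditional on $X_t=x$, the increment $\xi_{t+1}$ is supported on four fixed real values of magnitude at most some $x^*$, with conditional mean $h(x)\triangleq f(\sigma(x))$, where $\sigma(x)=e^x/(1+e^x)$ and $f(r)=\alpha r+\beta$ as in (\ref{eq:PTS_for_two_arm_Bern_bandit|two_fixed_particles|CR_pair|Xt_is_stochastically_bounded_lemma|definition_alpha_beta}). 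A direct expansion using the SR-pair inequalities (\ref{eq:PTS_for_two_arm_Bern_bandit|two_fixed_particles|SR_pair|defining_conditions}) gives $\alpha>0$, $\beta<0$, and $\alpha+\beta>0$, so $h$ is strictly increasing with $h(-\infty)=\beta$ and $h(+\infty)=\alpha+\beta$, vanishing only at $c^*=\ln(\bar r/(1-\bar r))$ with $\bar r=-\beta/\alpha\in(0,1)$. Hence there exist $M>|c^*|$ and $\epsilon>0$ with $h(x)\geq\epsilon$ on $[M,\infty)$ and $h(x)\leq-\epsilon$ on $(-\infty,-M]$: the drift pushes $X_t$ \emph{away} from $c^*$, exactly opposite the CR case.

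Next, I would convert this outward drift into a ``no-return'' bound via an exponential Lyapunov function. For $\lambda>0$ small, let $\phi(x)=e^{-\lambda x}$. A standard exponential-Chernoff calculation using $|\xi_{t+1}|\leq x^*$ yields $\E[\phi(X_{t+1})\mid X_t=x]\leq\phi(x)$ whenever $x\geq M$. Thus $\phi(X_{t\wedge\tau})$ is a nonnegative supermartingale for $\tau=\inf\{t\geq 0:X_t<M\}$, and optional stopping together with $X_\tau\geq M-x^*$ gives $P\{\tau<\infty\mid X_0=x\}\leq e^{\lambda(M+x^*-x)}$ for $x\geq M$. A symmetric argument with $\phi(x)=e^{\lambda x}$ shows that from very negative states the chain does not return above $-M$ with overwhelming probability. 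Both bounds tend to $0$ as $x\to\pm\infty$.

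To upgrade these into the dichotomy I would first show that almost surely $\{\limsup_t X_t=+\infty\}\subseteq\{X_t\to+\infty\}$ and $\{\liminf_t X_t=-\infty\}\subseteq\{X_t\to-\infty\}$. On the former event, bounded increments force infinitely many visits to the strip $[M,M+x^*]$ unless $X_t$ has already escaped; by the strong Markov property each visit carries probability bounded below of generating an escape, so a conditional Borel--Cantelli argument yields $\liminf_t X_t\geq M$ a.s., and letting $M\to\infty$ gives $X_t\to+\infty$. The symmetric statement is identical. The remaining task is to rule out bounded trajectories: on an event where $|X_t|\leq K$ for all $t$, decomposing $\xi_{s+1}=h(X_s)+(\xi_{s+1}-h(X_s))$, where the latter is a bounded martingale difference, and using the telescoping identity $X_t-X_0=\sum_{s<t}\xi_{s+1}$ forces $t^{-1}\sum_{s<t}h(X_s)\to 0$; since $h$ is bounded away from zero off any neighborhood of $c^*$, this would make $X_t$ spend asymptotically all its time in an arbitrarily small neighborhood of $c^*$, which is impossible because $\xi_{t+1}$ makes nondegenerate jumps of size bounded below at every step. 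Hence $P\{X_t\to+\infty\}+P\{X_t\to-\infty\}=1$.

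On $\{X_t\to+\infty\}$, $w_{t,1}=e^{X_t}/(1+e^{X_t})\to 1$ so $w_t\to(1,0)$, and replaying the Azuma--Hoeffding plus Borel--Cantelli step at the end of the CR-pair proof (which uses only $I_t\sim w_{t-1}$) yields $|q_t-\bar w_{t-1}|\to 0$ a.s., so $q_t\to(1,0)$ and $r_t=q_{t,1}\to 1$, since arm $1$ is pulled iff particle $1$ is drawn. The $X_t\to-\infty$ case is symmetric. The main technical hurdle I anticipate is ruling out bounded trajectories: unlike the CR case the drift is destabilizing, so Hajek's bound cannot be invoked directly, and the argument must blend the outward exponential Lyapunov estimate with a small-oscillation-around-$c^*$ analysis driven by the nondegenerate jumps of $\xi_{t+1}$.
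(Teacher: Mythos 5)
Your architecture largely mirrors the paper's: the same increment dynamics for $X_t$, the sign analysis $\alpha>0$, $\beta<0$, $\alpha+\beta>0$ giving outward drift away from $c^*$ (this is the paper's Lemma \ref{lemma:PTS_for_two_arm_Bern_bandit|two_fixed_particles|SR_pair|Xt_properties}), an exponential supermartingale/optional-stopping "no-return" estimate (the paper packages this as Proposition \ref{proposition:appendix|bounded_steps_drift_bound} and the statement that $\pm\infty$ are SAS, Lemma \ref{lemma:PTS_for_two_arm_Bern_bandit|two_fixed_particles|SR_pair|Xt_has_two_SAS_points}), and the final transfer to $w_t,q_t,r_t$ via Azuma--Hoeffding exactly as in the CR-pair proof. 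One small repair: your claim that each visit to the strip $[M,M+x^*]$ yields escape with probability bounded below does not follow from the no-return bound alone, since that bound is vacuous (at least $1$) for starting points in the strip; you need the intermediate step that, with probability at least $\left(\frac{e^{M}}{1+e^{M}}\theta^*_1\right)^k>0$, a run of $k$ consecutive positive jumps (possible because $\theta^{(1)}_1\neq\theta^{(2)}_1$ under the SR conditions) lifts $X$ to a level where the no-return probability exceeds $1/2$. This is precisely the paper's second "observation" and is easy to add.

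The genuine gap is in your exclusion of bounded trajectories, which you yourself flag as the main hurdle. From $\frac{1}{t}\sum_{s<t}h(X_s)\to 0$ you conclude that $X_t$ spends asymptotically all its time near $c^*$ "since $h$ is bounded away from zero off any neighborhood of $c^*$"; but $h$ \emph{changes sign} at $c^*$ (it is $\approx\beta<0$ far left and $\approx\alpha+\beta>0$ far right), so the Ces\`aro average can vanish by cancellation while the process spends, say, half its time well to the left and half well to the right of $c^*$. Hence the contradiction with the jump sizes never gets started, and first-moment (drift-average) information alone cannot rule out bounded oscillation. The fix is the argument the paper actually uses, and it needs only the ingredient you already invoke: all four jump values are nonzero and their conditional probabilities are bounded below on any compact set of $x$-values. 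Concretely, from any point of a bounded interval $(L_2,R_2)$, within $M=\lceil (R_2-L_2)/\ln\frac{\theta^{(1)}_1}{\theta^{(2)}_1}\rceil$ steps the chain exits above $R_2$ with probability at least $\delta=\left(\frac{e^{L_2}}{1+e^{L_2}}\theta^*_1\right)^M>0$, so by a repeated-trials (conditional Borel--Cantelli) argument the chain almost surely cannot remain in any bounded interval forever; combined with your no-return estimates, each excursion out of $(L_2,R_2)$ leads with probability at least $1/2$ to permanent escape and hence to $X_t\to\pm\infty$. Replacing your Ces\`aro step by this block-escape argument makes the rest of your proof go through.
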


The remainder of this section is dedicated to the proof of Proposition \ref{proposition:PTS_for_two_arm_Bern_bandit|two_fixed_particles|SR_pair|two_blackholes_convergence}. We first define the notion of stochastic asymptotic stability, which will be used for the proof.

\begin{definition}
	\label{def:PTS_for_two_arm_Bern_bandit|two_fixed_particles|SR_pair|stochastically_asymptotically_stable}
	Let $\{X_n\}_{n \geq 0}$ be a discrete time Markov process with state space $\sR$. 
	\begin{enumerate}
		\sloppy
		\item We say that $x \in \sR$ is \emph{stochastically asymptically stable} (SAS) for $\{X_n\}$ if for any $\epsilon > 0$, there exits $\delta > 0$ such that if $\abs{X_{n_0} - x} \leq \delta$ for some $n_0$, then $\Pr\left\{\abs{X_n - x} \leq \epsilon \; \forall n\geq n_0 | X_{n_0}\right\} \geq 1-\epsilon$ and $\Pr\left\{ \left\{\abs{X_n - x} \leq \epsilon \; \forall n \geq n_0 \right\} \backslash \left\{X_n \rightarrow x\right\} |X_{n_0} \right\} = 0$.
		\sloppy
		\item We say that $-\infty$ is SAS for $\{X_n\}$ if for any $L \in \sR$ and $\epsilon > 0$, there exists $L_0 \in \sR$ such that if $X_{n_0} \leq L_0$ for some $n_0$, then $\Pr\{X_n \leq L \; \forall n \geq n_0 |X_{n_0}\} \geq 1-\epsilon$ and ${\Pr\left\{ \left\{X_n \leq L \; \forall n \geq n_0\right\} \backslash \left\{X_n \rightarrow -\infty\right\} |X_{n_0} \right\} = 0}$.
		\sloppy
		\item We say that $+\infty$ is SAS for $\{X_n\}$ if for any $L \in \sR$ and $\epsilon > 0$, there exists $L_0 \in \sR$ such that if $X_{n_0} \geq L_0$ for some $n_0$, then $\Pr\{X_n \geq L \; \forall n\geq n_0 |X_{n_0}\} \geq 1-\epsilon$ and $\Pr\left\{\left\{X_n \geq L \; \forall n\geq n_0\right\} \backslash \left\{X_n \rightarrow \infty\right\} |X_{n_0} \right\} = 0$. 
	\end{enumerate}
	
	The second condition in the 1st (resp. 2nd or 3rd) definition above means that, given $X_{n_0}$, if $X_n$ is close to $x$ (resp. $-\infty$, $+\infty$) from $n_0$ onward, then $X_n$ converges to $x$ (resp. $-\infty$, $+\infty$). 
\end{definition}

Intuitively, a SAS point is like a black hole: if the process is close enough to the point, then with high probability it will be trapped around the point and eventually sucked to the point. 

We start the proof of Proposition \ref{proposition:PTS_for_two_arm_Bern_bandit|two_fixed_particles|SR_pair|two_blackholes_convergence} with the following lemma. 

\begin{lemma}
	\label{lemma:PTS_for_two_arm_Bern_bandit|two_fixed_particles|SR_pair|Xt_properties}
	The process $\{X_t\}$ described in Proposition  \ref{proposition:PTS_for_two_arm_Bern_bandit|two_fixed_particles|SR_pair|two_blackholes_convergence} is a Markov process. Moreover, it can be represented as: $X_{t+1} = X_t + U_{t+1}$, where the distribution of $U_{t+1}$ is determined by $X_t$ and it satisfies:
	\begin{enumerate}
		\item[(a)] $\abs{U_t} \leq C$ for all $t \geq 1$,
		\item[(b)] $\E[U_{t+1}|X_t=x] \leq -\mu_1$ whenever $x \leq C_1$,
		\item[(c)] $\E[U_{t+1}|X_t=x] \geq \mu_2$ whenever $x \geq C_2$,
	\end{enumerate}
	for some constants $\mu_1 > 0$, $\mu_2 > 0$, $C$, $C_1$ and $C_2$ that depend on $\theta^*$ and $\calP_2$. 
\end{lemma}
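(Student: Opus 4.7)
The plan is to rely on the dynamics of $X_t$ already derived in equation (\ref{eq:PTS_for_two_arm_Bern_bandit|two_fixed_particles|CR_pair|Xt_expression}) (which was valid whenever $A(1)=\{1\}$ and $A(2)=\{2\}$, i.e., in both the CR and SR cases), and to extract (a), (b), (c) directly from that formula together with the drift computation already carried out in the proof of Lemma \ref{lemma:PTS_for_two_arm_Bern_bandit|two_fixed_particles|CR_pair|Xt_is_stochastically_bounded}. The Markov property is immediate from (\ref{eq:PTS_for_two_arm_Bern_bandit|two_fixed_particles|CR_pair|Xt_expression}): the conditional distribution of $X_{t+1}$ given $X_t=x$ depends only on $x$ through $w_{t,1}=e^x/(1+e^x)$ and $w_{t,2}=1/(1+e^x)$, and the four possible increments $\ln(\theta^{(1)}_a/\theta^{(2)}_a)$, $\ln((1-\theta^{(1)}_a)/(1-\theta^{(2)}_a))$ for $a\in\{1,2\}$ are deterministic functions of the particles only. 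Setting $U_{t+1}=X_{t+1}-X_t$, part (a) follows by taking
\[
C \;=\; \max_{a\in\{1,2\}}\max\!\left\{\left|\ln\tfrac{\theta^{(1)}_a}{\theta^{(2)}_a}\right|,\;\left|\ln\tfrac{1-\theta^{(1)}_a}{1-\theta^{(2)}_a}\right|\right\},
\]
which is finite under the SR assumption that both particles lie in $(0,1)^2$.

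Next, I reuse the drift computation from the CR-pair proof verbatim: conditioned on $X_t=x$,
\[
\E[U_{t+1}\mid X_t=x] \;=\; D_2(r)-D_1(r)\;=\;f(r),\qquad r=\tfrac{e^x}{1+e^x},
\]
where $f(r)=\alpha r+\beta$ is linear with $\alpha,\beta$ given by (\ref{eq:PTS_for_two_arm_Bern_bandit|two_fixed_particles|CR_pair|Xt_is_stochastically_bounded_lemma|definition_alpha_beta}). The key difference from the CR case is the sign pattern: using the defining SR conditions (\ref{eq:PTS_for_two_arm_Bern_bandit|two_fixed_particles|SR_pair|defining_conditions}) one reads off
\[
f(1)=d(\theta^*_1\|\theta^{(2)}_1)-d(\theta^*_1\|\theta^{(1)}_1)>0,\qquad f(0)=d(\theta^*_2\|\theta^{(2)}_2)-d(\theta^*_2\|\theta^{(1)}_2)<0,
\]
so $\alpha=f(1)-f(0)>0$ and $\beta=f(0)<0$, and $\bar r:=-\beta/\alpha\in(0,1)$ is the unique zero of $f$.

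With $f$ linear and monotone increasing, parts (b) and (c) are then routine choices of thresholds. Set $\mu_1 := |f(0)|/2 = |\beta|/2$ and $\mu_2 := f(1)/2 = (\alpha+\beta)/2$, both positive. Choose $r_1\in(0,\bar r)$ small enough that $f(r_1)\leq -\mu_1$ (for instance $r_1=\bar r/2$, which gives $f(r_1)=\beta/2$); then take $C_1:=\ln\tfrac{r_1}{1-r_1}$. Since $r=e^x/(1+e^x)$ is increasing in $x$, $x\leq C_1$ implies $r\leq r_1$, hence $f(r)\leq f(r_1)\leq-\mu_1$, giving (b). Symmetrically choose $r_2\in(\bar r,1)$ with $f(r_2)\geq\mu_2$ and set $C_2:=\ln\tfrac{r_2}{1-r_2}$ to obtain (c).

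There is no real obstacle here; the only minor care is in verifying the sign of $f(0)$ and $f(1)$ from the SR inequalities and in choosing thresholds $r_1,r_2$ strictly bracketing $\bar r$ so that both $\mu_1,\mu_2$ are strictly positive. The bulk of the machinery, namely formula (\ref{eq:PTS_for_two_arm_Bern_bandit|two_fixed_particles|CR_pair|Xt_expression}) and the linearity of $f$, is already in place from the CR-pair analysis, so the SR case reduces to reading the signs and naming the constants.
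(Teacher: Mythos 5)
Your proposal is correct and follows essentially the same route as the paper: it reuses the increment dynamics (valid because only $A(1)=\{1\}$, $A(2)=\{2\}$ are needed), takes $C$ as the maximum of the four log-ratio magnitudes, writes the drift as $h(x)=f(e^x/(1+e^x))$ with $f(r)=\alpha r+\beta$ linear, reads off $\alpha>0$, $\beta<0$ from the SR conditions, and picks the midpoint thresholds $\bar r/2$ and $(\bar r+1)/2$ (equivalently $C_1=\ln\frac{\bar r}{2-\bar r}$, $C_2=\ln\frac{1+\bar r}{1-\bar r}$) with $\mu_1=|\beta|/2$, $\mu_2=(\alpha+\beta)/2$, exactly as in the paper's proof.
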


\begin{proof}
	By the recursive update formula for $\widetilde{w}_t$ in (\ref{eq:PTS_for_two_arm_Bern_bandit|posterior_weights_update}) and the conditions $A(1) = \{1\}$ and $A(2) = \{2\}$ in  (\ref{eq:PTS_for_two_arm_Bern_bandit|two_fixed_particles|SR_pair|defining_conditions}), we can obtain the same dynamics of $X_t$ as in (\ref{eq:PTS_for_two_arm_Bern_bandit|two_fixed_particles|CR_pair|Xt_expression}), such that that $X_{t+1} = X_t + U_{t+1}$, where $U_{t+1}$ is the increment of the process $\{X_t\}$ at time $t$, given by
	\begin{equation}
		\label{eq:self_reinforcing_pair_lemma1_step_distribution}
		U_{t+1} =  \left\{\begin{array}{cccc}
			\ln \frac{\theta^{(1)}_1}{\theta^{(2)}_1} & w.p. & \frac{e^{X_t}}{1+e^{X_t}} \theta^*_1 \\
			\ln \frac{(1-\theta^{(1)}_1)}{(1-\theta^{(2)}_1)} & w.p. & \frac{e^{X_t}}{1+e^{X_t}} (1-\theta^*_1) \\
			\ln \frac{\theta^{(1)}_2}{\theta^{(2)}_2} & w.p. & \frac{1}{1+e^{X_t}} \theta^*_2 \\
			\ln \frac{(1-\theta^{(1)}_2)}{(1-\theta^{(2)}_2)} & w.p. & \frac{1}{1+e^{X_t}} (1-\theta^*_2)  \\
		\end{array} \right. 
	\end{equation}
	for $t \geq 0$. Clearly, $\{X_t\}_{t \geq 0}$ is a Markov process and the distribution of $U_{t+1}$ is determined by $X_t$. Property (a) is easily satisfied by setting 		
	\begin{equation*}
    \begin{aligned}
		C \triangleq \max\left\{\abs{\ln \frac{\theta^{(1)}_1}{\theta^{(2)}_1}}, \abs{\ln \frac{(1-\theta^{(1)}_1)}{(1-\theta^{(2)}_1)}}, \abs{\ln \frac{\theta^{(1)}_2}{\theta^{(2)}_2}}, \abs{\ln \frac{(1-\theta^{(1)}_2)}{(1-\theta^{(2)}_2)}} \right\} \, . 
	\end{aligned}
	\end{equation*}
	
	Let $h(x) \triangleq \E[U_{t+1}|X_t=x]$. It can be shown that $h(x) = \alpha \frac{e^x}{1+e^x} + \beta$, where
	\begin{equation*}
	\begin{aligned}
	    \alpha = \left(d(\theta^*_1 || \theta^{(2)}_1) - d(\theta^*_1 || \theta^{(1)}_1)\right) + \left(d(\theta^*_2 || \theta^{(1)}_2) - d(\theta^*_2 || \theta^{(2)}_2)\right) \, ,
	\end{aligned}
	\end{equation*}
	and
	\begin{equation*}
	    \beta = \left(d(\theta^*_2 || \theta^{(2)}_2) - d(\theta^*_2 || \theta^{(1)}_2)\right) \, . 
	\end{equation*}

	By conditions (\ref{eq:PTS_for_two_arm_Bern_bandit|two_fixed_particles|SR_pair|defining_conditions}), $\alpha > 0$ and $\beta < 0$. Let $f(r) = \alpha r + \beta$, $0 \leq r \leq 1$. The graph of $f(r)$ is shown below:

\begin{figure}[h]
\begin{center}
\centerline{\includegraphics[width=0.6\columnwidth]{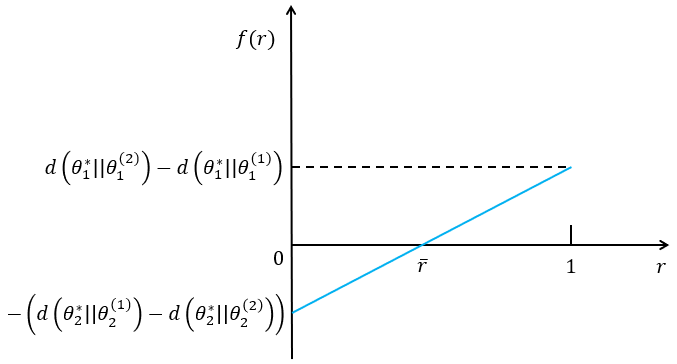}}
\end{center}
\vskip -0.2in
\end{figure}

	At $r = \bar{r} = -\frac{\beta}{\alpha}$, $f(r) = 0$. Let 
	\begin{equation*}
		\mu_1 = \frac{d(\theta^*_2 || \theta^{(1)}_2)-d(\theta^*_2||\theta^{(2)}_2)}{2} \quad \text{and} \quad \mu_2 = \frac{d(\theta^*_1 || \theta^{(2)}_1)-d(\theta^*_1||\theta^{(1)}_1)}{2} \, .
	\end{equation*}
	
	Then $f(r) \leq -\mu_1$ whenever $0 \leq r \leq \frac{\bar{r}}{2}$ and $f(r) \geq \mu_2$ whenever $\frac{\bar{r}+1}{2} \leq r \leq 1$. Let $\frac{e^{C_1}}{1+e^{C_1}} = \frac{\bar{r}}{2}$ and $\frac{e^{C_1}}{1+e^{C_2}} = \frac{\bar{r}+1}{2}$, we get 
	\begin{equation*}
		C_1 = \ln \frac{\bar{r}}{2-\bar{r}} = \ln \frac{-\beta}{2\alpha + \beta} \quad \text{and} \quad 
	    C_2 = \ln \frac{1+\bar{r}}{1-\bar{r}} = \ln \frac{\alpha-\beta}{\alpha+\beta} \, .
	\end{equation*}
	
	Since $h(x) = f(\frac{e^x}{1+e^x})$ and $h(x)$ is monotonely increasing in $x$, we have that $h(x) \leq -\mu_1$ whenever $x \leq C_1$ and $h(x) \geq \mu_2$ whenever $x \geq C_2$.
\end{proof}

\begin{lemma}
	\label{lemma:PTS_for_two_arm_Bern_bandit|two_fixed_particles|SR_pair|Xt_has_two_SAS_points}
	The process $\{X_t\}$ described in Proposition \ref{proposition:PTS_for_two_arm_Bern_bandit|two_fixed_particles|SR_pair|two_blackholes_convergence} has $+\infty$ and $-\infty$ as two SAS points. 
\end{lemma}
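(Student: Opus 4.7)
The plan is to prove that $-\infty$ is SAS for $\{X_t\}$; the argument for $+\infty$ is symmetric, using the symmetric statements in parts (a) and (c) of Lemma \ref{lemma:PTS_for_two_arm_Bern_bandit|two_fixed_particles|SR_pair|Xt_properties} applied to $\{-X_t\}$. Fix $L\in\mathbb{R}$ and $\epsilon>0$, and without loss of generality assume $L<C_1$ (otherwise replace $L$ by $\min(L,C_1-1)$). The SAS definition requires two things, which I will establish separately: (i) starting sufficiently low, the process stays below $L$ with probability at least $1-\epsilon$, and (ii) conditional on staying below $L$ forever, $X_n\to -\infty$ almost surely.

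For part (i), I would build an exponential supermartingale. Because $|U_{n+1}|\le C$ and $\E[U_{n+1}\mid X_n=x]\le -\mu_1$ whenever $x\le C_1$, a second-order Taylor expansion of $u\mapsto e^{\eta u}$ combined with boundedness gives that for all sufficiently small $\eta>0$, $\E[e^{\eta U_{n+1}}\mid X_n=x]\le 1$ on $\{x\le C_1\}$. Let $\tau=\inf\{n>n_0:X_n>L\}$. Since $X_n\le L<C_1$ on $\{n<\tau\}$, the stopped process $e^{\eta X_{n\wedge \tau}}$ is a nonnegative supermartingale. Optional stopping (or Doob's maximal inequality) then yields
\[
P\{\tau<\infty\mid X_{n_0}\}\;\le\; e^{\eta(X_{n_0}-L)}.
\]
Choosing $L_0\le L-\tfrac{1}{\eta}\ln(1/\epsilon)$ makes the right side at most $\epsilon$ whenever $X_{n_0}\le L_0$, as required.

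For part (ii), on the event $\{\tau=\infty\}$ the process satisfies $X_n\le L<C_1$ for every $n\ge n_0$, so by Lemma \ref{lemma:PTS_for_two_arm_Bern_bandit|two_fixed_particles|SR_pair|Xt_properties}(b), $\E[U_k\mid X_{k-1}]\le -\mu_1$ for every $k>n_0$. Write
\[
X_n-X_{n_0} \;=\; M_n+\sum_{k=n_0+1}^{n}\E[U_k\mid X_{k-1}], \qquad M_n=\sum_{k=n_0+1}^{n}\bigl(U_k-\E[U_k\mid X_{k-1}]\bigr),
\]
where $M_n$ is a martingale with increments bounded by $2C$. By Azuma-Hoeffding, $P\{|M_n|\ge (n-n_0)\mu_1/2\}\le 2\exp\bigl(-(n-n_0)\mu_1^2/(32C^2)\bigr)$, which is summable, so Borel-Cantelli gives $M_n/(n-n_0)\to 0$ almost surely. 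On $\{\tau=\infty\}$ this forces $X_n\le X_{n_0}-(n-n_0)\mu_1/2$ for all large $n$, hence $X_n\to -\infty$.

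The main obstacle is a subtle measurability issue in part (ii): the event $\{\tau=\infty\}$ depends on the whole future trajectory, so one cannot directly condition on it and retain the martingale structure of $M_n$. The clean fix is a coupling: define a surrogate process $\widetilde X_n$ that coincides with $X_n$ up to time $\tau$ and afterwards evolves with increments having conditional mean exactly $-\mu_1$ and the same boundedness, so that Azuma-Hoeffding applies \emph{unconditionally} to the corresponding martingale. Then $\widetilde X_n\to -\infty$ almost surely, and since $\widetilde X_n=X_n$ on $\{\tau=\infty\}$, the same holds for $X_n$ on that event, yielding $P\bigl(\{X_n\le L\ \forall n\ge n_0\}\setminus\{X_n\to -\infty\}\mid X_{n_0}\bigr)=0$. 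Combined with part (i), this establishes that $-\infty$ is SAS; the argument for $+\infty$ is identical up to sign changes.
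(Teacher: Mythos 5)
Your proposal is correct and follows essentially the same route as the paper: the paper likewise introduces the crossing time, bounds the probability of ever exceeding $L$ via an exponential-supermartingale (Hoeffding-lemma) maximal inequality with the same constant $L_0 = L - \tfrac{C^2}{2\mu_1}\ln\tfrac{1}{\epsilon}$, and handles the conditioning issue with exactly the coupling you describe, namely a surrogate process $\widetilde X_t$ that agrees with $X_t$ up to the crossing time and thereafter drifts down by $\mu_1$, so that it tends to $-\infty$ unconditionally and coincides with $X_t$ on the event of never crossing. Your use of Azuma--Hoeffding plus Borel--Cantelli to get $\widetilde X_n \to -\infty$ is just a more explicit version of the paper's appeal to the law of large numbers, and the symmetric argument for $+\infty$ matches the paper as well.
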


\begin{proof}
	First, we show that $-\infty$ is SAS for $\{X_t\}$. Consider any given $L \in \sR$ and $\epsilon > 0$. Without loss of generality, we can assume $L \leq C_1$ and choose $L_0 = L - \frac{C^2}{2\mu_1} \ln \frac{1}{\epsilon}$, where $C_1$ and $C$ are given in Lemma \ref{lemma:PTS_for_two_arm_Bern_bandit|two_fixed_particles|SR_pair|Xt_properties}.\footnote{If $L > C_1$, we can choose $L_0 = C_1 - \frac{C^2}{2\mu_1} \ln \frac{1}{\epsilon}$. Then by the same argument in this proof, we can show that $\Pr\{X_t \leq C_1 \; \forall t | X_0\} \geq 1-\epsilon$, which still implies $\Pr\{X_t \leq L \; \forall t |X_0 \} \geq 1-\epsilon$.} Define
	\begin{equation*}
		T \triangleq \min\left\{t > 0: X_t > L \right\}
	\end{equation*}
	to be the crossing time, the first time the process $\{X_t\}$ crosses above the threshold $L$. By convention, if $\{X_t > L\}$ never happens, $T=\infty$. Define a random sequence $\{\widetilde{X}_t\}_{t \geq 0}$ by $\widetilde{X}_0 = X_0$ and 
	\begin{equation*}
		\widetilde{X}_t = \left\{
		\begin{array}{cccc}
			X_t & if &  1\leq t \leq T \\
			\widetilde{X}_{t-1} - \mu_1 & if & t > T \, . \\
		\end{array}
		\right.
	\end{equation*}
	
	Let $\widetilde{U}_{t+1} = \widetilde{X}_{t+1}-\widetilde{X}_t$, then 
	\begin{equation*}
		\widetilde{U}_t = \left\{\begin{array}{cccc}
			U_t & if & 1 \leq t \leq T \\
			-\mu_1 & if & t > T
		\end{array} \right.
	\end{equation*}
	
	By Lemma \ref{lemma:PTS_for_two_arm_Bern_bandit|two_fixed_particles|SR_pair|Xt_properties} and the above construction, $\E[\widetilde{U}_{t+1}|\widetilde{X}_t] \leq -\mu_1 < 0$ and $\abs{\widetilde{U}_t} \leq C$ for all $t$. It immediately follows from LLN that $\widetilde{X}_t \rightarrow -\infty$ with probability one. Also, if $\widetilde{X}_0 \leq L_0$, then 
	\begin{equation*}
		\begin{aligned}
			\Pr\left\{\widetilde{X}_t \leq L \; \forall t \, \Big| \, \widetilde{X}_0 \right\} &= \Pr\left\{\max_{t \geq 0} \widetilde{X}_t \leq L \, \Big| \, \widetilde{X}_0 \right\}  \\
			&= \Pr \left\{\max_{t\geq 0} (\widetilde{X}_t - L_0) \leq L - L_0 \, \Big| \, \widetilde{X}_0 \right\} \\
			&=  \Pr \left\{\max_{t\geq 0} (\widetilde{X}_t - L_0) \leq \frac{C^2}{2\mu_1} \ln \frac{1}{\epsilon} \, \Big| \, \widetilde{X}_0 \right\} \\
			&\overset{(i)}{\geq} 1 - \exp\left\{-\frac{2\mu_1}{C^2} \frac{C^2}{2\mu_1} \ln \frac{1}{\epsilon} \right\} \\
			&= 1-\epsilon \, ,
		\end{aligned}
	\end{equation*}
	where inequality $(i)$ is due to Proposition \ref{proposition:appendix|bounded_steps_drift_bound} (see Appendix \ref{section:appendix|bounded_steps_drift_bound}).
	
	Note that, $\{X_t \leq L \; \forall t\} = \{\widetilde{X}_t \leq L \, \forall t\}$, and under such event, $\{X_t\}_{t\geq 0} = \{\widetilde{X}_t\}_{t\geq 0}$. It follows that 
	\begin{equation*}
		\Pr\left\{X_t \leq L \, \forall t \, \Big| \, X_0 \right\} = \Pr\left\{\widetilde{X}_t \leq L \; \forall t \, \Big| \, \widetilde{X}_0 \right\} \geq 1-\epsilon 
	\end{equation*} 
	and
	\begin{equation*}
		\begin{aligned}
			& \; \Pr \left\{\left\{X_t \leq L \, \forall t\right\} \backslash \left\{X_t \rightarrow -\infty\right\} \, \Big| \, X_0 \right\} \\
			=& \; \Pr \left\{\left\{X_t \leq L \, \forall t \right\} \cap \left\{X_t \not\rightarrow -\infty \right\} \, \Big| \, X_0 \right\} \\
			=& \; \Pr \left\{\left\{X_t \leq L \, \forall t\right\} \cap \left\{\widetilde{X}_t \not\rightarrow -\infty \right\} \, \Big| \, \widetilde{X}_0 \right\} \\
			\leq& \; \Pr \left\{ \widetilde{X}_t \not\rightarrow -\infty \, \Big| \, \widetilde{X}_0 \right\} = 0 \, . 
		\end{aligned}
	\end{equation*}
	We conclude that $-\infty$ is SAS for $\{X_t\}$.
	
	By a similar argument, using properties (a) and (c) of Lemma \ref{lemma:PTS_for_two_arm_Bern_bandit|two_fixed_particles|SR_pair|Xt_properties} and Corollary \ref{corollary:appendix|bounded_steps_drift_bound} (see Appendix \ref{section:appendix|bounded_steps_drift_bound}), we can show that $+\infty$ is SAS for $\{X_t\}$. 
\end{proof}

We are now ready to prove Proposition \ref{proposition:PTS_for_two_arm_Bern_bandit|two_fixed_particles|SR_pair|two_blackholes_convergence}.

\begin{proof}[Proof of Proposition \ref{proposition:PTS_for_two_arm_Bern_bandit|two_fixed_particles|SR_pair|two_blackholes_convergence}]
	Fix $\epsilon = 0.5$ (any positive $\epsilon$ will do) and some $L_1, R_1 \in \sR$ such that $L_1 \leq C_1 \leq C_2 \leq R_1$. By Lemma \ref{lemma:PTS_for_two_arm_Bern_bandit|two_fixed_particles|SR_pair|Xt_has_two_SAS_points}, there exists $L_2 < L_1$ and $R_2 > R_1$ such that 
	\begin{enumerate}
	    \item[(1)] If $X_{t_0} \leq L_2$ for some $t_0$, then $\Pr\left\{X_t \leq L_1 \; \forall t \geq t_0 \, \Big| \, X_{t_0} \right\} \geq 0.5$ and $X_t \leq L_1$ $\forall t \geq t_0$ implies $X_t \rightarrow -\infty$, and
	    \item[(2)] If $X_{t_0} \geq R_2$ for some $t_0$, then  $\Pr\left\{X_t \geq R_1 \; \forall t \geq t_0 \, \Big| \, X_{t_0} \right\} \geq 0.5$ and $X_t \geq R_1$ $\forall t \geq t_0$ implies $X_t \rightarrow \infty$.  
	\end{enumerate}

	For a better illustration, see the figure below:
	
\begin{figure}[h]
\begin{center}
\centerline{\includegraphics[width=0.7\columnwidth]{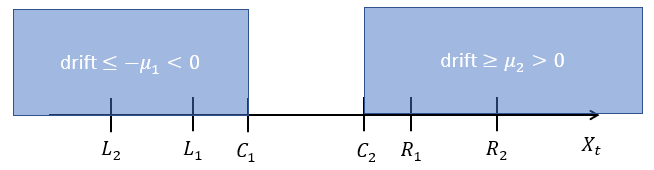}}
\end{center}
\vskip -0.2in
\end{figure}

	Two observations:
	\begin{itemize}
		\item If $X_{t_0}$ ever moves outside of the interval $(L_2, R_2)$ for some $t_0$, then with probability at least $0.5$, $X_t$ stays $\leq L_1$ or $\geq R_1$ for all $t \geq t_0$ and converges to $-\infty$ or $\infty$. 
		\item If $X_{t_0}$ is inside the interval $(L_2, R_2)$ for some $t_0$, then within a fixed $M$ number of the following steps, with a strictly positive probability $\delta$, $X_t$ will move outside of $[L_2, R_2]$. To see this, consider the following. Since the two particles form a SR pair, $\theta^{(1)}_1 \not= \theta^{(2)}_1$. We can assume without loss of generality that $\theta^{(1)}_1 > \theta^{(2)}_1$. By the form of the distribution of the step $U_{t+1}$ in (\ref{eq:self_reinforcing_pair_lemma1_step_distribution}), if $X_t \in (L_2, R_2)$, then within the next $M = \ceil{\frac{R_2 - L_2}{\ln \frac{\theta^{(1)}_1}{\theta^{(2)}_1}}}$ steps, with probability at least $\delta = \left(\frac{e^{L_2}}{1+e^{L_2}}\theta^*_1\right)^{M} > 0$, $X_t$ will become $\geq R_2$.
	\end{itemize}
	
	Consider the following:
	\begin{itemize}
		\item[(a)] Observe the process $\{X_t\}$ from $t = 0$. If $X_t$ always stays below $L_1$ or above $R_1$, then it will converge to $\infty$ or $-\infty$. 
		\item[(b)] If $X_t$ ever moves into the interval $(L_1, R_1)$, it is also in the interval $(L_2, R_2)$, then we start the following trial: \textit{observe whether $X_t$ will become $\leq L_2$ or $\geq R_2$ within the next $M$ steps, and if it does, observe whether it will stay $\leq L_1$ or $\geq R_1$ onward forever}. The trial fails if $X_t$ doesn't become $\leq L_2$ or $\geq R_2$ within the next $M$ steps, or it does, but after that it enters the interval $(L_1, R_1)$ at some time. By the above two observations, this trial is successful with probability at least $0.5 \delta > 0$. The failure of the trial, if it ever happens, can be detected in a finite number of steps. 
		\item[(c)] If the above trial fails, we start the next trial, same as the one in (b), which is also successful with probability at least $0.5 \delta$. Repeat this trial process whenever a trial fails. 
		\item[(d)] Since $0.5\delta > 0$, one trial will eventually be successful with probability one. 
	\end{itemize}
	
	We conclude that $X_t$ converges either to $-\infty$ or $\infty$ with probability one. In either case, the convergences of $q_t$, $w_t$ and $r_t$ are obvious. 
\end{proof}

\subsection{N given particles: asymptotic behavior} \label{subsec:PTS_for_two_arm_Bernoulli_bandit|N_given_particles_asymptotic_behavior}

We now turn to the case of $N$ given particles. The question is: which particles can survive? Let us start with a discussion of a representative example of a four-particle configuration in Figure \ref{fig:PTS_for_two_arm_Bern_bandit|N_fixed_particles|four_particles_example}. We discuss how the weights of the particles change based on our understanding of the case of two particles in the previous section.

\begin{figure}[h]
\begin{center}
\centerline{\includegraphics[width=0.5\columnwidth]{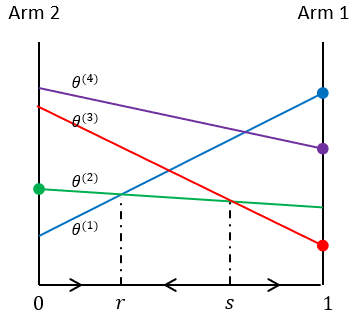}}
\caption{An example of four particles.}
\label{fig:PTS_for_two_arm_Bern_bandit|N_fixed_particles|four_particles_example}
\end{center}
\vskip -0.2in
\end{figure}

In the divergence diagram in Figure \ref{fig:PTS_for_two_arm_Bern_bandit|N_fixed_particles|four_particles_example}, we divide the bottom interval $[0,1]$ into three intervals, $[0,r]$, $[r,s]$ and $[s,1]$, based on the intersections of the line segments of particles 1, 2 and 3 (it will be soon clear why we ignore particle 4). Recall Proposition \ref{proposition:PTS_for_two_arm_Bern_bandit|weight_dynamics_of_N_particles|unnormalized_weight_dynamics} again, we have $\widetilde{w}_{t,i} \appropto e^{-tD_i(r_t)}$. For large $t$, if $r_t \in (0, r)$, particle 1 will tend to dominate, and $r_t$ will drift to the right; if $r_t \in (r,s)$, particle 2 will tend to dominate, and $r_t$ will drift to the left; if $r_t \in (s,1)$, particle 3 will tend to dominate, and $r_t$ will drift to the right. 

\begin{itemize}
	\item If $r_t$ stays around $r$ for a long time, then weights of particles 3 and 4 will eventually become negligible. The system essentially reduces to particles 1 and 2, which form a CR pair. By the discussion and results in Section \ref{subsection:PTS_for_two_arm_Bern_bandit|two_fixed_particles|CR_pair}, we expect that $\ln \frac{w_{t,1}}{w_{t,2}}$ oscillates but is stochastically bounded, $\ln \frac{w_{t,1}}{w_{t,3}} \rightarrow \infty$ and $\ln \frac{w_{t,1}}{w_{t,4}} \rightarrow \infty$. Also, we expect that $q_t \rightarrow (r,1-r,0,0)$, $\bar{w} \rightarrow (r, 1-r, 0,0)$ and $r_t \rightarrow r$.
	\item If $r_t$ stays close to $1$ for a long time, then weights of particles 1, 2 and 4 become negligible and the system essentially reduces to a single particle 3. Thus, when $r_t > s$, particle 3 is self-reinforcing. We expect that $q_t \rightarrow (0,0,1,0)$, $w_t \rightarrow (0,0,1,0)$ and $r_t \rightarrow 1$. 
\end{itemize}

Therefore, we expect that $r_t$ converges to either $r$ or $1$. In either case, we expect only two or one particle will survive in the end. 

We now state the ideas in the above discussion more formally for general $N$ fixed particles. Consider a two-arm Bernoulli bandit problem with parameter $\theta^*$ and a given set of $N$ particles $\calP_N$. Define $D^o(r) \triangleq \min_{i \in \{1, \cdots, N\}} D_i(r)$. Let $D^o$ be an abbreviation of the curve $\{D^o(r): r \in [0,1] \}$ and let $D_i$ be an abbreviation of the line segment $\{D_i(r): r \in [0,1]\}$.  Graphically, $D^o$ is the bottom piece-wise linear curve formed by the line segments of involved particles in the divergence diagram. We make the following assumptions about the particles. 

\begin{assumption}
	\label{assumption:PTS_for_two_arm_Bern_bandit|N_fixed_particles|no_more_than_two_particles_intersecting}
	Assume that $\theta^* \in [0,1]^2$ and $\calP_N \subset [0,1]^2$ satisfy: 
	\begin{enumerate}
		\item There do not exist two different particles $i, j$ such that $D_i = D_j$. 
		\item $\abs{\left\{i : D_i(r) = D^o(r) \right\}} \leq 2$ for all $r \in (0,1)$. 
	\end{enumerate}
	
\end{assumption}

The first assumption above means that each line segment in the divergence diagram represents one unique particle. The second assumption means that no point on the curve $D^o$ is shared by more than two particles, except possibly at the boundaries. Both assumptions hold with probability one if the $N$ particles are generated uniformly at random. For the rest of this section, we assume Assumption \ref{assumption:PTS_for_two_arm_Bern_bandit|N_fixed_particles|no_more_than_two_particles_intersecting} holds.\footnote{Even if Assumption \ref{assumption:PTS_for_two_arm_Bern_bandit|N_fixed_particles|no_more_than_two_particles_intersecting} do not hold, i.e., if two different particles have the same line segment or if more than two particles intersect at some point on $D^o$, we expect that Conjecture \ref{conjecture:PTS_for_two_arm_Bern_bandit|N_fixed_particles|rt_converges_to_a_point_in_script_R_with_prob_one} is still true, perhaps with some minor modifications of the related definitions. But since we don't have any rigorous results for these scenarios, and since those scenarios are not useful in practice, we deem it reasonable to proceed with Assumption \ref{assumption:PTS_for_two_arm_Bern_bandit|N_fixed_particles|no_more_than_two_particles_intersecting}.}

The breakpoints and their associated particles for $D^o$ are defined as follows. 

\begin{definition}
	A point $r \in [0,1]$ is a \emph{breakpoint} for $D^o$ if it is a boundary point (i.e., $0$ or $1$), or it is where two different particles intersect on $D^o$ (i.e., $D^o(r) = D_i(r) = D_j(r)$ for some $i \not= j$). Each breakpoint is associated with a set of one or two particles:
	\begin{itemize}
		\item If $r \in (0,1)$ is a breakpoint where $D^o(r) = D_i(r) = D_j(r)$ for some $i \not= j$, then its associated particles are $\{i,j\}$. 
		\item The breakpoint $0$ has one associated particle $i_0$, which is the particle such that there exists some $\epsilon > 0$ such that $D_{i_0}(\delta) < D_i(\delta)$ for all $i \not= i_0$ for all $\delta \in (0,\epsilon)$. 
		\item The breakpoint $1$ has one associated particle $i_1$, which is the particle such that there exists some $\epsilon > 0$ such that $D_{i_1}(1-\delta) < D_i(1-\delta)$ for all $i \not= i_1$ for all $\delta \in (0,\epsilon)$. 
	\end{itemize}
\end{definition}

\begin{definition}
	Let $\xi \in (0,1)$ be a non-breakpoint for $D^o$. The \emph{dominant particle} at $\xi$ for the process $\{r_t\}$ is a particle $i$ such that $D_i(\xi) = \min_{j \in [N]} D_j(\xi)$, i.e., $D_i(\xi) = D^o(\xi)$. If $\xi$ is contained in $(r,s)$, where $r,s$ are two neighbor breakpoints for $D^o$, we also say $i$ is the dominant particle for interval $(r,s)$ for the process $\{r_t\}$. 
\end{definition}

By Proposition \ref{proposition:PTS_for_two_arm_Bern_bandit|weight_dynamics_of_N_particles|unnormalized_weight_dynamics}, if $r_t$ stays around a non-breakpoint $\xi \in (0,1)$ for a long time, the weight of the corresponding dominant particle tends to increase exponentially. In that sense the particle dominates other particles. 

\begin{example}
	To illustrate the above definitions, see an example of six particles in the divergence diagram in Figure \ref{fig:PTS_for_two_arm_Bern_bandit|N_fixed_particles|six_particles_example}. 
	
	\begin{figure}[h]
\vskip 0.2in
\begin{center}
\centerline{\includegraphics[width=0.5\columnwidth]{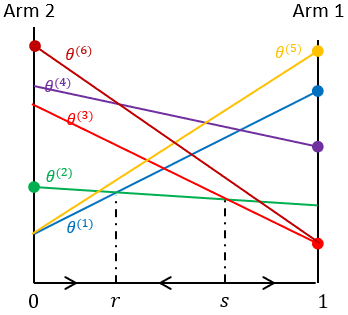}}
\caption{An example of six particles.}
\label{fig:PTS_for_two_arm_Bern_bandit|N_fixed_particles|six_particles_example}
\end{center}
\vskip -0.2in
\end{figure}
	
	In this example, the breakpoints are $\{0, r, s, 1\}$ and their associated particles are $0 \rightarrow \{1\}$, $r \rightarrow \{1,2\}$, $s \rightarrow \{2,3\}$ and $1 \rightarrow \{3\}$, respectively. The dominant particles for intervals $(0,r), (r,s), (s,1)$ are particles $1, 2, 3$, respectively. 
\end{example}

\begin{definition}
	\label{def:PTS_for_two_arm_Bern_bandit|N_fixed_particles|contraction_set_R}
	The contraction set for the $\{r_t\}$ process, denoted by $\calR$, is defined as follows. A value $r \in [0,1]$ is in $\calR$ if one of the following is true:
	\begin{enumerate}
		\item $r = 0$ and $A(i_0) = 2$, where $i_0$ is the associated particle for breakpoint $0$. 
		\item $r = 1$, and $A(i_1) = 1$, where $i_1$ is the associated particle for breakpoint $1$. 
		\item $r \in (0,1)$ is a breakpoint and particles $\{i,j\}$ form a CR pair, where ${i,j}$ are the associated particles for $r$. 
	\end{enumerate}
\end{definition}

For the example in Figure \ref{fig:PTS_for_two_arm_Bern_bandit|N_fixed_particles|six_particles_example}, $\calR = \{r, 1\}$.

\begin{remark}
	Note that once $\theta^*$ and $\calP_N$ are given, $\calR$ is determined, even before PTS runs. 
\end{remark}

\begin{conjecture}
	\label{conjecture:PTS_for_two_arm_Bern_bandit|N_fixed_particles|rt_converges_to_a_point_in_script_R_with_prob_one}
	Consider a given problem $\BernoulliBandit(K=2, \theta^*)$ and a particle set $\calP_N$ that satisfy Assumption \ref{assumption:PTS_for_two_arm_Bern_bandit|N_fixed_particles|no_more_than_two_particles_intersecting}. Consider the process of running $\PTS(\calP_N)$ as in Algorithm \ref{alg:PTS_for_two_arm_Bernoulli_bandit}. Let $\calR$ be the contraction set for the $\{r_t\}$ process. Then $\calR$ is non-empty and with probability one, $r_t \rightarrow r$ for some $r \in \calR$, and the one or two particles associated with the break point $r$ survive, while all other particles' weights converge to zero. 
\end{conjecture}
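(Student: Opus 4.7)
The plan is to combine the universal necessary condition of Proposition \ref{proposition:sample_path_necessary_survival_condition} with the geometry of the divergence diagram and to reduce the long-run dynamics to the two-particle cases already handled in Propositions \ref{proposition:PTS_for_two_arm_Bern_bandit|two_fixed_particles|CR_pair|almost_sure_convergence} and \ref{proposition:PTS_for_two_arm_Bern_bandit|two_fixed_particles|SR_pair|two_blackholes_convergence}. The first observation is that, under Assumption \ref{assumption:PTS_for_two_arm_Bern_bandit|N_fixed_particles|no_more_than_two_particles_intersecting}, the condition $\arg\max(\pi D)=\supp(\pi)=S$ of Proposition \ref{proposition:sample_path_necessary_survival_condition} translates in the two-arm Bernoulli setting into the geometric statement: there exists $r\in[0,1]$ such that every surviving particle $j\in S$ lies on the lower envelope $D^o$ at $r$, and $r=\sum_{j\in S,\,A(j)=1}\pi_j$. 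Together with Assumption \ref{assumption:PTS_for_two_arm_Bern_bandit|N_fixed_particles|no_more_than_two_particles_intersecting} this forces $|S|\le 2$, and a short case analysis shows that either $|S|=1$ with $S=\{i_0\}$ and $A(i_0)=2$ (the point $r=0$), or $S=\{i_1\}$ with $A(i_1)=1$ (the point $r=1$), or $|S|=2$ at an interior breakpoint whose two associated particles satisfy the CR-pair inequalities (\ref{eq:PTS_for_two_arm_Bern_bandit|two_fixed_particles|CR_pair|defining_conditions}). Hence any sample-path-valid $S$ corresponds to a point of $\calR$, which in particular yields non-emptiness of $\calR$ (walking across the breakpoints from $r=0$ to $r=1$, the sequence of $A$-values of the dominant particles must either end at an attracting endpoint or switch from $1$ to $2$ at some interior breakpoint).

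To then conclude $r_t\to r^\star$ for some $r^\star\in\calR$, I would couple the above necessary condition with a direct drift analysis of $\{r_t\}$. Proposition \ref{proposition:PTS_for_two_arm_Bern_bandit|weight_dynamics_of_N_particles|unnormalized_weight_dynamics} gives the uniform approximation $\tfrac{1}{t}\ln\widetilde w_{t,i}=-D_i(r_t)+\epsilon_{t,i}+C(r_t)$ with $\epsilon_{t,i}\to 0$ almost surely. Partition $[0,1]$ by the breakpoints of $D^o$ into open intervals $J_1,\dots,J_{K+1}$, each having a unique dominant particle $i^\star(J_k)$. On the event $\{r_t\in J_k\}$ the weights concentrate exponentially fast on $i^\star(J_k)$, so (up to vanishing error) the drift of $r_t$ conditional on recent history is approximately $+\tfrac{1}{t+1}(1-r_t)$ if $A(i^\star(J_k))=1$ and $-\tfrac{1}{t+1}r_t$ if $A(i^\star(J_k))=2$. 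Thus in each $J_k$ the process drifts monotonically toward exactly one endpoint, and the points of $\calR$ are precisely those breakpoints toward which both adjacent intervals drift.

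The remainder of the argument has two parts. First, near an SR-pair interior breakpoint $r^\circ$ or a non-attracting endpoint, the associated two-particle subsystem is unstable: by adapting Lemma \ref{lemma:PTS_for_two_arm_Bern_bandit|two_fixed_particles|SR_pair|Xt_properties} and the bounded-increment drift bound used in the proof of Proposition \ref{proposition:PTS_for_two_arm_Bern_bandit|two_fixed_particles|SR_pair|two_blackholes_convergence}, $r_t$ exits any sufficiently small neighborhood of $r^\circ$ in finite expected time and, once outside, is absorbed by the drift of the newly dominant interval. Second, near a point $r^\star\in\calR$ I would couple $\{r_t\}$ with the two-particle CR-pair process of Proposition \ref{proposition:PTS_for_two_arm_Bern_bandit|two_fixed_particles|CR_pair|almost_sure_convergence} (respectively the degenerate one-particle process when $r^\star\in\{0,1\}$), by showing that the non-associated particles' weights decay exponentially fast uniformly over excursions of $r_t$ within the basin of $r^\star$. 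Once such an exponential gap is established, Assumption \ref{assumption:PTS_for_general_stochastic_bandit|sample_path_asymptotic_particle_behavior|sample_path_assumptions}(a) holds on that basin and Proposition \ref{proposition:PTS_for_two_arm_Bern_bandit|two_fixed_particles|CR_pair|almost_sure_convergence} delivers $r_t\to r^\star$ together with survival of precisely the associated particle(s). A trial/Borel--Cantelli argument in the style of the proof of Proposition \ref{proposition:PTS_for_two_arm_Bern_bandit|two_fixed_particles|SR_pair|two_blackholes_convergence} then shows that $r_t$ almost surely ends up in the basin of some $r^\star\in\calR$.

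The hard part, and the reason the statement is phrased as a conjecture, is the uniform control of the non-associated particles while $r_t$ fluctuates. When $r_t$ is near a CR breakpoint $r^\star$ it still makes excursions across $r^\star$, and during such excursions the weights of non-associated particles can transiently grow. One must show that these bursts are dominated on average by the decay induced by the drift, i.e., that Assumption \ref{assumption:PTS_for_general_stochastic_bandit|sample_path_asymptotic_particle_behavior|sample_path_assumptions} holds almost surely on the event that $r_t$ remains in the basin. In particular, establishing Assumption \ref{assumption:PTS_for_general_stochastic_bandit|sample_path_asymptotic_particle_behavior|sample_path_assumptions}(b) (the existence of a survivor limiting empirical distribution) seems to require a regenerative or Harris-recurrence argument tailored to the CR two-particle log-ratio process of Lemma \ref{lemma:PTS_for_two_arm_Bern_bandit|two_fixed_particles|CR_pair|Xt_is_stochastically_bounded}, and this is the principal technical obstacle I foresee.
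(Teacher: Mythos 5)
The statement you are proving is left as an open conjecture in the paper: the authors give only the informal divergence-diagram discussion and remark that a proof "might begin with analyzing a properly defined $N-1$ dimensional Markov process about the particles' weights," so there is no proof of record to compare against, and your proposal should be judged as a strategy rather than matched to one. As a strategy it is sensible and closely tracks the paper's own heuristics (lower envelope $D^o$, drift of $r_t$ toward the endpoint favored by the locally dominant particle, reduction to the CR/SR two-particle phenomenology), and your non-emptiness argument for $\calR$ by scanning the $A$-values of the dominant particles across breakpoints is essentially correct and could be made rigorous as a purely deterministic geometric fact.

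However, the probabilistic core is not closed, and two of the steps as written would not go through. First, invoking Proposition \ref{proposition:sample_path_necessary_survival_condition} to classify the possible survivor sets is circular for the purpose of an unconditional "with probability one" statement: that proposition is a sample-path result conditional on Assumption \ref{assumption:PTS_for_general_stochastic_bandit|sample_path_asymptotic_particle_behavior|sample_path_assumptions}, and it is precisely the almost-sure validity of those assumptions (existence of a survivor set with a decaying-rate gap and an ergodic limit for $G_t$) that the conjecture requires one to establish. Second, the proposed "coupling" with Propositions \ref{proposition:PTS_for_two_arm_Bern_bandit|two_fixed_particles|CR_pair|almost_sure_convergence} and \ref{proposition:PTS_for_two_arm_Bern_bandit|two_fixed_particles|SR_pair|two_blackholes_convergence} does not yet exist: those results are proved only for $N=2$, where the whole weight dynamics collapse to the one-dimensional Markov chain $X_t$; for $N>2$ the non-associated particles perturb both the selection probabilities and the evolution of $r_t$, and the claim that their weights decay "exponentially fast uniformly over excursions" of $r_t$ near a CR breakpoint is exactly the missing estimate — Proposition \ref{proposition:PTS_for_two_arm_Bern_bandit|weight_dynamics_of_N_particles|unnormalized_weight_dynamics} controls $\frac{1}{t}\ln\widetilde{w}_{t,i}$ only through the random trajectory $r_t$, whose confinement is what you are trying to prove, so the argument as stated is again circular. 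You identify this obstruction yourself (uniform control during excursions, and a Harris-recurrence/ergodicity argument needed for Assumption \ref{assumption:PTS_for_general_stochastic_bandit|sample_path_asymptotic_particle_behavior|sample_path_assumptions}(b)), and that diagnosis agrees with the authors' stated reason for leaving the result as a conjecture; so the proposal is a reasonable roadmap, but it does not constitute a proof.
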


A proof for this conjecture might begin with analyzing a properly defined $N-1$ dimensional Markov process about the particles' weights (just like for the two-particle case we analyzed a one-dimensional Markov process). We don't have a proof for the conjecture, although its truthfulness is strongly indicated by discussion at the beginning of this section and empirical evidence.

The major take-away lesson of this section is that, with Assumption \ref{assumption:PTS_for_two_arm_Bern_bandit|N_fixed_particles|no_more_than_two_particles_intersecting}, no more than two particles can survive in the asymptotic regime, and the possible surviving particles can be found by drawing the divergence diagram, as discussed. Informally speaking, the line segments of the surviving particles should be low in the divergence diagram.

This is a special case of the sample-path necessary survival condition for general stochastic bandit problems in Section \ref{sec:analysis_of_PTS}.

\subsection{N Random particles} \label{subsec:PTS_for_two_arm_Bernoulli_bandit|N_random_particles}

Up to this point, we have been considering fixed given particles. In practice, particles are not given at the very beginning. One can use a pre-determined set of particles, or randomly generate some particles. In this section, we evaluate the performance of $\PTS$ with $N$ randomly generated particles. We will consider two different methods for particle generation. The following lemma is useful for the analysis of both cases. 

\begin{definition}
	We say that a particle $\theta \in [0,1]^2$ is \emph{action-optimal} for a given problem $\BernoulliBandit(K=2, \theta^*)$ if $A(\theta) = A(\theta^*)$. 
\end{definition}

In particular, if $\theta^*_1 = \theta^*_2$, then any $\theta \in [0,1]^2$ is action-optimal. 

\begin{lemma}
	\label{lemma:PTS_for_two_arm_Bern_bandit|N_random_particles|sufficient_condition_for_consistent_particle}
	Consider a given $\BernoulliBandit(K=2, \theta^*)$ problem and assume $\theta^*_1 \not= \theta^*_2$. There exist $\theta^*$-dependent positive constants $\bar{d}_1$ and $\bar{d}_2$ such that, if a particle $\theta \in [0,1]^2$ satisfies $d(\theta^*_1 || \theta_1) < \bar{d}_1$ and $d(\theta^*_2 || \theta_2) < \bar{d}_2$, then $\theta$ is action-optimal. In particular, $\bar{d}_1 = d\left(\theta^*_1 || \frac{\theta^*_1 + \theta^*_2}{2}\right)$ and $\bar{d}_2 = d\left(\theta^*_2 || \frac{\theta^*_1 + \theta^*_2}{2}\right)$ works.  
\end{lemma}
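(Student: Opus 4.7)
The plan is to exploit the strict convexity of the binary KL divergence in its second argument together with the fact that the midpoint $m = (\theta^*_1 + \theta^*_2)/2$ sits strictly between $\theta^*_1$ and $\theta^*_2$. Without loss of generality assume $\theta^*_1 > \theta^*_2$, so that $A(\theta^*) = 1$ and $\theta^*_2 < m < \theta^*_1$. My goal is to deduce $\theta_1 > \theta_2$ from the two divergence bounds, which gives $A(\theta) = 1 = A(\theta^*)$. The symmetric case $\theta^*_1 < \theta^*_2$ is handled by swapping the roles of the coordinates.

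For the first coordinate, I will use the fact that $y \mapsto d(\theta^*_1 \,\|\, y)$ is strictly convex on $(0,1)$, vanishes at $y = \theta^*_1$, is strictly decreasing on $(0,\theta^*_1]$, and strictly increasing on $[\theta^*_1,1)$. Consequently, for any level $c > 0$ the sub-level set $\{y : d(\theta^*_1 \,\|\, y) < c\}$ is an open interval containing $\theta^*_1$. Taking $c = \bar d_1 = d(\theta^*_1 \,\|\, m)$, this sub-level set has $m$ as its left endpoint (since $m < \theta^*_1$ and $d(\theta^*_1 \,\|\, m) = c$), so the assumption $d(\theta^*_1 \,\|\, \theta_1) < \bar d_1$ forces $\theta_1 > m$.

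For the second coordinate, the analogous argument applied to $y \mapsto d(\theta^*_2 \,\|\, y)$, which is minimized at $y = \theta^*_2 < m$, shows that the sub-level set $\{y : d(\theta^*_2 \,\|\, y) < \bar d_2\}$ with $\bar d_2 = d(\theta^*_2 \,\|\, m)$ has $m$ as its right endpoint. Hence $d(\theta^*_2 \,\|\, \theta_2) < \bar d_2$ forces $\theta_2 < m$. Combining the two inequalities gives $\theta_2 < m < \theta_1$, so $A(\theta) = \arg\max_a \theta_a = 1 = A(\theta^*)$, which is action-optimality.

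There is no real obstacle here beyond cleanly invoking the two standard properties of $d(x \,\|\, \cdot)$ (strict convexity and the location of its minimum); the only thing to be careful about is that the WLOG reduction is legitimate, which it is because swapping the labels of the two arms simultaneously swaps the roles of $\theta_1,\theta_2$ and of $\theta^*_1,\theta^*_2$, leaving both the hypotheses and the notion of action-optimality intact.
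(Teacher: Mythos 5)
Your proof is correct and follows essentially the same route as the paper: reduce WLOG to $\theta^*_1 > \theta^*_2$, use the monotonicity of $y \mapsto d(x\|y)$ on either side of its minimum at $y = x$ to conclude that the divergence bounds at level $d(\theta^*_i \| \tfrac{\theta^*_1+\theta^*_2}{2})$ force $\theta_2 < \tfrac{\theta^*_1+\theta^*_2}{2} < \theta_1$, hence $A(\theta) = A(\theta^*)$. The paper's argument is the same, just phrased as a sufficient condition for $\theta$ to lie in the rectangle $\theta_1 > \tfrac{\theta^*_1+\theta^*_2}{2}$, $\theta_2 < \tfrac{\theta^*_1+\theta^*_2}{2}$.
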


The lemma provides us with a useful divergence based sufficient condition under which a particle is action-optimal. 

\begin{proof}
	Without loss of generality, assume $\theta^*_1 > \theta^*_2$. It is clear that, if $\theta$ satisfies $\frac{\theta^*_1+\theta^*_2}{2} < \theta_1 \leq 1$ and $0 \leq \theta_2 < \frac{\theta^*_1+\theta^*_2}{2}$, then $A(\theta^*) = A(\theta)$. See the region highlighted by red in Figure \ref{fig:PTS_for_two_arm_Bern_bandit|N_random_particles|sufficient_condition_for_consistent_particle_lemma|consistent_region}. 
	
\begin{figure}[h]
\begin{center}
\centerline{\includegraphics[width=0.6\columnwidth]{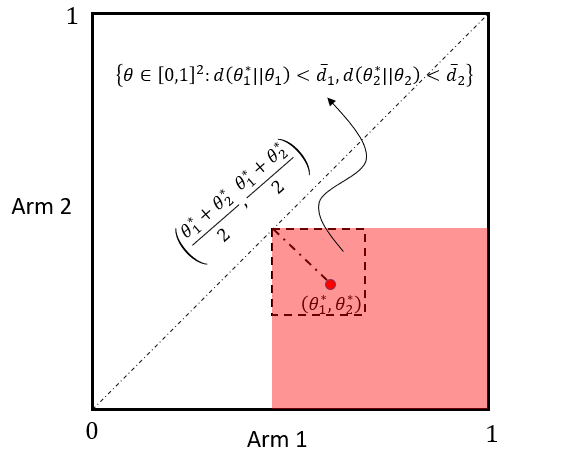}}
\caption{Any $\theta$ in the red region is consistent.}
\label{fig:PTS_for_two_arm_Bern_bandit|N_random_particles|sufficient_condition_for_consistent_particle_lemma|consistent_region}
\end{center}
\vskip -0.2in
\end{figure}

	The function $g(y) = d(x||y)$ for $x \in (0,1)$ is monotone decreasing for $y \in (0,x)$ and monotone increasing for $y \in (x, 1)$. Therefore a sufficient condition for $\frac{\theta^*_1+\theta^*_2}{2} < \theta_1 \leq 1$ is $d(\theta^*_1 || \theta_1) < d\left(\theta^*_1 || \frac{\theta^*_1 + \theta^*_2}{2}\right)$ and a sufficient condition for $0 \leq \theta_2 < \frac{\theta^*_1+\theta^*_2}{2}$ is $d(\theta^*_2 || \theta_2) < d\left(\theta^*_2 || \frac{\theta^*_1 + \theta^*_2}{2}\right)$. Let $\bar{d}_1 = d\left(\theta^*_1 || \frac{\theta^*_1 + \theta^*_2}{2}\right)$ and $\bar{d}_2 = d\left(\theta^*_2 || \frac{\theta^*_1 + \theta^*_2}{2}\right)$, the proof is done. 
\end{proof}

\subsubsection{Coordinate-wise random generation}

\emph{Method 1} (coordinate-wise random generation): Generate two sets $A$ and $B$, each contains $\sqrt{N}$ values generated independently uniformly at random from $[0,1]$. Let $\calP_N = A \times B = \left\{(a,b): a \in A, b \in B\right\}$.

\begin{figure}[h]
    \centering
    \subfloat[\centering Particles positions.]{{\includegraphics[width=0.35\columnwidth]{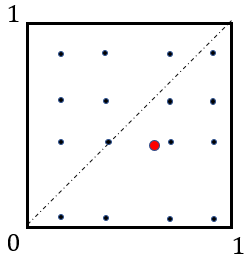}}}%
    \subfloat[\centering Divergence diagram.]{{\includegraphics[width=0.42\columnwidth]{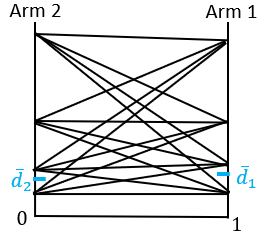}}}%
    \caption{An example of 16 particles produced by coordinate-wise random generation.}%
    \label{fig:PTS_for_two_arm_Bern_bandit|N_random_particles|coordinate_wise_generation|example_16_particles}%
\end{figure}

An example of 16 particles produced by Method 1 is shown in Figure \ref{fig:PTS_for_two_arm_Bern_bandit|N_random_particles|coordinate_wise_generation|example_16_particles}. The particles form a grid in the $[0,1]^2$ square (Fig. \ref{fig:PTS_for_two_arm_Bern_bandit|N_random_particles|coordinate_wise_generation|example_16_particles}). The line segments of the particles form a complete bipartite graph in the divergence diagram (Fig. \ref{fig:PTS_for_two_arm_Bern_bandit|N_random_particles|coordinate_wise_generation|example_16_particles}). By the discussion in Section \ref{subsec:PTS_for_two_arm_Bernoulli_bandit|N_given_particles_asymptotic_behavior}, the weight of the particle represented by the lowest line segment will converge to one with probability one. Call this the bottom particle. For particles generated by Method 1, the bottom particle always exists and is unique. The running average regret of PTS will converge to zero if and only if the bottom particle is action-optimal. If $N$ is large, we expect that with high probability, the KL divergences of the bottom particle at the two arms will be below $\bar{d}_1$ and $\bar{d}_2$ respectively and hence the bottom particle is action-optimal. 

\begin{definition}
	For a given stochastic bandit problem, we say that an algorithm is \textit{consistent} for a given sample path if the running average regret converges to zero. 
\end{definition}

In particular, for a given $\BernoulliBandit(K=2, \theta^*)$ problem, the running average regret is $\frac{1}{T}\sum_{t=1}^T \left( \max_{a \in \{1,2\}}\theta^*_a - \theta^*_{A_t}\right)$. Therefore, PTS is consistent for a given sample path if $w_{t,i} \rightarrow 1$ and $\abs{\frac{1}{T}\sum_{t=1}^T w_{t,i} - \frac{1}{T}\sum_{t=1}^T \indicator_{\{I_t = i\}} } \rightarrow 0$ for some action-optimal particle $i$.

\begin{proposition}
	\label{proposition:PTS_for_two_arm_Bern_bandit|N_random_particles|coordinate_wise_generation|consistency_whp_for_large_N}
	Let $\calP_N$ be a set of $N$ particles generated by Method 1. Consider the process of running $\PTS(\calP_N)$ for a given problem $\BernoulliBandit(K=2, \theta^*)$ as in Algorithm \ref{alg:PTS_for_two_arm_Bernoulli_bandit}. Let $E$ denote the event that the algorithm is consistent. Assume Conjecture \ref{conjecture:PTS_for_two_arm_Bern_bandit|N_fixed_particles|rt_converges_to_a_point_in_script_R_with_prob_one} is true. Then, for $N$ sufficiently large, 
	\begin{equation*}
		\Pr\left\{E\right\} \geq 1 - 2e^{-\frac{\abs{\theta^*_1-\theta^*_2} \sqrt{N}}{2}} \, . 
	\end{equation*}
\end{proposition}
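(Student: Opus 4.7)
The plan is to reduce the consistency event $E$ to a single event about the ``bottom particle'' in the divergence diagram, and then estimate that event using the product structure of $\calP_N$ under Method 1. The degenerate case $\theta^*_1 = \theta^*_2$ is trivial (the claimed lower bound is negative and the regret is identically zero), so I will assume $\theta^*_1 \neq \theta^*_2$.

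First I would identify the bottom particle. Since $\calP_N = A \times B$, the divergence of $(a,b) \in \calP_N$ at $r$ factors as $r\, d(\theta^*_1||a) + (1-r)\, d(\theta^*_2||b)$, so the coordinate-wise minimisers $a^* = \arg\min_{a \in A} d(\theta^*_1||a)$ and $b^* = \arg\min_{b \in B} d(\theta^*_2||b)$ yield a particle $\theta^{(i^*)} = (a^*, b^*)$ whose line segment a.s.\ lies strictly below every other line segment on $(0,1)$. Thus $D^o$ coincides with the line segment of $\theta^{(i^*)}$, its only breakpoints are $0$ and $1$, and $\theta^{(i^*)}$ is the unique associated particle at both. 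Assumption \ref{assumption:PTS_for_two_arm_Bern_bandit|N_fixed_particles|no_more_than_two_particles_intersecting} holds almost surely under Method 1, so Conjecture \ref{conjecture:PTS_for_two_arm_Bern_bandit|N_fixed_particles|rt_converges_to_a_point_in_script_R_with_prob_one} applies. Exactly one of $\{0,1\}$ lies in $\calR$, depending on whether $A(\theta^{(i^*)}) = 2$ or $1$, so $r_t$ converges to that endpoint, $\theta^{(i^*)}$ is the sole surviving particle with $w_{t,i^*} \to 1$, and asymptotically only arm $A(\theta^{(i^*)})$ is played. Hence the running average regret vanishes iff $A(\theta^{(i^*)}) = A(\theta^*)$, and consequently
\[
\Pr\{E\} \geq \Pr\bigl\{\theta^{(i^*)} \text{ is action-optimal}\bigr\}.
\]

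Next I would bound the right-hand side using Lemma \ref{lemma:PTS_for_two_arm_Bern_bandit|N_random_particles|sufficient_condition_for_consistent_particle}: it suffices that $d(\theta^*_1||a^*) < \bar d_1$ and $d(\theta^*_2||b^*) < \bar d_2$, where $\bar d_k = d(\theta^*_k || m)$ and $m = (\theta^*_1+\theta^*_2)/2$. The set $I_1 \triangleq \{a \in [0,1] : d(\theta^*_1||a) < \bar d_1\}$ is an open interval containing $\theta^*_1$ with $m$ on its boundary; by strict monotonicity of $d(\theta^*_1||\cdot)$ between $\min(m,\theta^*_1)$ and $\max(m,\theta^*_1)$, this open sub-interval is contained in $I_1$, so the Lebesgue measure satisfies $|I_1| \geq |\theta^*_1-\theta^*_2|/2$. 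Since $A$ consists of $\sqrt N$ i.i.d.\ $\Uniform[0,1]$ samples and $a^*$ minimises KL over $A$, the event $\{d(\theta^*_1||a^*) \geq \bar d_1\}$ is the event $\{A \cap I_1 = \emptyset\}$, and
\[
\Pr\bigl\{d(\theta^*_1||a^*) \geq \bar d_1\bigr\} = (1-|I_1|)^{\sqrt N} \leq \exp\!\left(-\tfrac{|\theta^*_1-\theta^*_2|\sqrt N}{2}\right).
\]
The identical estimate holds for the second coordinate, and because $A$ and $B$ are independent, a union bound yields $\Pr\{E^c\} \leq 2\exp(-|\theta^*_1-\theta^*_2|\sqrt N/2)$, as claimed.

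The main point requiring care is verifying that Conjecture \ref{conjecture:PTS_for_two_arm_Bern_bandit|N_fixed_particles|rt_converges_to_a_point_in_script_R_with_prob_one} genuinely applies in the Method-1 setting --- in particular, that the bottom particle is almost surely unique, that the generated KL values produce pairwise distinct line segments and a uniquely attained lower envelope on $(0,1)$ (so Assumption \ref{assumption:PTS_for_two_arm_Bern_bandit|N_fixed_particles|no_more_than_two_particles_intersecting} holds), and that $\calR$ is automatically nonempty so the conclusion of the conjecture does force $w_{t,i^*} \to 1$. Once this structural reduction is in place, the remainder is a short geometric bound on $|I_1|$ combined with a union bound.
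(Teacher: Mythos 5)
Your proposal is correct and follows essentially the same route as the paper's proof: identify the coordinate-wise KL-minimizing "bottom" particle (unique a.s.\ under Method 1), invoke Conjecture \ref{conjecture:PTS_for_two_arm_Bern_bandit|N_fixed_particles|rt_converges_to_a_point_in_script_R_with_prob_one} to reduce consistency to that particle being action-optimal, and then apply Lemma \ref{lemma:PTS_for_two_arm_Bern_bandit|N_random_particles|sufficient_condition_for_consistent_particle} with the interval-measure bound $\Pr\{d(\theta^*_k||U)\le \bar d_k\} \ge |\theta^*_1-\theta^*_2|/2$ and a union bound over the two coordinates. The only cosmetic difference is that you phrase the probability estimate via the sets $I_1, I_2$ and their Lebesgue measure, while the paper phrases it via $p_1, p_2$ for a uniform sample; the computation and the final bound are identical.
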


The above result says that with coordinate-wise random particle generation, PTS is consistent with high probability. Observe that, if $\abs{\theta^*_1 - \theta^*_2}$ is large, it is more likely for the algorithm to be consistent, or in other words, it is easier for the algorithm to identify the optimal arm. That makes sense. 

\begin{proof}
	Let $A, B \subset [0,1]$ be the two random sets of $\sqrt{N}$ values generated by Method 1. Let $a_0 = \min_{a \in A} d(\theta^*_1 || a)$ and $b_0 = \min_{b \in B} d(\theta^*_2 || b)$ and let particle $i_0 \in [N]$ be the one with $\theta^{(i_0)} = (\theta^{(i_0)}_1, \theta^{(i_0)}_2) = (a_0, b_0)$. Particle $i_0$ is the bottom particle in our previous discussion. With probability one, $a_0, b_0$ and $i_0$ are unique. By construction, the contraction set $\calR$ of the $\{r_t\}$ process contains only one point, either 0 or 1, depending on the optimal arm for particle $i_0$. By Conjecture \ref{conjecture:PTS_for_two_arm_Bern_bandit|N_fixed_particles|rt_converges_to_a_point_in_script_R_with_prob_one}, the algorithm is consistent if and only if particle $i_0$ is action-optimal. We show that particle $i_0$ is action-optimal w.h.p. 
	
	If $\theta^*_1 = \theta^*_2$, any algorithm is consistent, there is nothing to prove. Without loss of generality, assume $\theta^*_1 > \theta^*_2$. Let $X$ and $Y$ be two independent uniform random variables in $[0,1]$. Let $p_1 \triangleq \Pr\left\{d(\theta^*_1 || X) \leq \bar{d}_1 \right\}$ and $p_2 \triangleq \Pr\left\{d(\theta^*_2 || Y) \leq \bar{d}_2 \right\}$ for $\bar{d}_1 = d\left(\theta^*_1 || \frac{\theta^*_1+\theta^*_2}{2}\right), \bar{d}_2 = d\left(\theta^*_2 || \frac{\theta^*_1+\theta^*_2}{2}\right)$ as in Lemma \ref{lemma:PTS_for_two_arm_Bern_bandit|N_random_particles|sufficient_condition_for_consistent_particle}. Since a sufficient condition for $d(\theta^*_1 || X) \leq \bar{d}_1$ is $X \in \left(\frac{\theta^*_1+\theta^*_2}{2}, \theta^*_1\right)$ and a sufficient condition for $d(\theta^*_2 || Y) \leq \bar{d}_2$ is $Y \in \left(\theta^*_2, \frac{\theta^*_1+\theta^*_2}{2}\right)$, we have  
	\begin{equation*}
		p_1 \geq \Pr \left\{X \in \left(\frac{\theta^*_1 + \theta^*_2}{2}, \theta^*_1 \right) \right\} = \frac{\theta^*_1 - \theta^*_2}{2} 
	\end{equation*}
	and
	\begin{equation*}
		p_2 \geq \Pr \left\{Y \in \left(\theta^*_2, \frac{\theta^*_1+\theta^*_2}{2}\right) \right\} = \frac{\theta^*_1 - \theta^*_2}{2} \, . 
	\end{equation*}
	
	It follows that 
	\begin{equation*}
		\begin{aligned}
			\Pr\{E\} &\geq \Pr \left\{d(\theta^*_1 || \theta^{(i_0)}_1) \leq \bar{d}_1 \; \text{and} \; d(\theta^*_2 || \theta^{(i_0)}_2) \leq \bar{d}_2 \right\} \\
			&= 1 - \Pr \left\{d(\theta^*_1||\theta^{(i_0)}_1) > \bar{d}_1 \; \text{or} \; d(\theta^*_2 || \theta^{(i_0)}_2) > \bar{d}_2 \right\} \\
			&\geq 1 - \Pr \left\{d(\theta^*_1||\theta^{(i_0)}_1) > \bar{d}_1 \right\} - \Pr \left\{ d(\theta^*_2 || \theta^{(i_0)}_2) > \bar{d}_2 \right\}  \\
			&= 1 - \Pr\left\{d(\theta^*_1 || a) > \bar{d}_1 \; \forall a \in A \right\}  - \Pr\left\{d(\theta^*_2 || b) > \bar{d}_2 \; \forall b \in B \right\} \\ 
			&= 1 - (1-p_1)^{\sqrt{N}} - (1-p_2)^{\sqrt{N}} \\
			&\geq 1-2\left(1 - \frac{\theta^*_1 - \theta^*_2}{2}\right)^{\sqrt{N}} \\
			&\geq 1-2e^{-\frac{(\theta^*_1-\theta^*_2)\sqrt{N}}{2}} \, . 
		\end{aligned}
	\end{equation*}
\end{proof}

Despite the nice performance guarantee of PTS for two-arm Bernoulli bandit, coordinate-wise random particle generation has two major limitations. First, for problems in which the parameter space does not have a product topology, it is not clear how particles can be generated coordinate-wise. Second, the method does not scale well for problems with a high dimensional parameter space. For example, for the $K$-arm Bernoulli bandit problem, even if we only generate two values on each coordinate, we have $2^K$ particles, which brings concerns on computational cost. 

\subsubsection{Whole-particle random generation}

\emph{Method 2} (whole-particle random generation): Let $\calP_N$ be a set of $N$ particles generated independently and uniformly at random from $[0,1]^2$. 

Let us discuss the performance of $\PTS(\calP_N)$ on a high-level when $\calP_N$ is generated by Method 2. Suppose $\theta^*$ is given, and so are $\bar{d}_1$ and $\bar{d}_2$ in Lemma \ref{lemma:PTS_for_two_arm_Bern_bandit|N_random_particles|sufficient_condition_for_consistent_particle}. If $N$ is large enough, w.h.p. we expect that the line segment of at least one particle is low and flat enough such that its two ends are below $\bar{d}_1$ and $\bar{d}_2$ respectively, which makes the particle action-optimal. Let us call it particle 1. Without loss of generality, suppose $a(1) = 1$. See Figure \ref{fig:PTS_for_two_arm_Bern_bandit|N_random_particles|whole_particle_generation|example_particles} for an illustration. 

\begin{figure}[h]
    \centering
    \subfloat[\centering Particles positions.]{{\includegraphics[width=0.35\columnwidth]{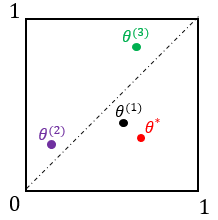}}}%
    \subfloat[\centering Divergence diagram.]{{\includegraphics[width=0.42\columnwidth]{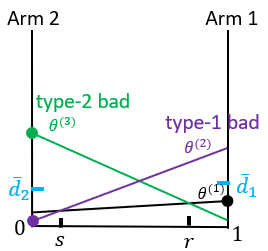}}}%
    \caption{How things could go wrong with whole-particle random generation.}%
    \label{fig:PTS_for_two_arm_Bern_bandit|N_random_particles|whole_particle_generation|example_particles}%
\end{figure}

However, unlike coordinate-wise random generation, here the existence of particle 1 does not guarantee that algorithm is consistent. Things could go wrong in two ways.
\begin{itemize}
	\item There could be a non-action-optimal particle that is close to $\theta^*$ on arm 2, but far from $\theta^*$ on arm 1. Call this the type-1 bad particle, exemplified by particle 2 in Fig \ref{fig:PTS_for_two_arm_Bern_bandit|N_random_particles|whole_particle_generation|example_particles}. Particles 1 and 2 form an SR pair, producing an interval $(0,s)$ in which the process ${r_t}$ would drift to the wrong side. 
	\item There could also be a non-action-optimal particle that is close to $\theta^*$ on arm 1, but far from $\theta^*$ on arm 2. Let us call this the type-2 bad particle, which is exemplified by particle 3 in Fig \ref{fig:PTS_for_two_arm_Bern_bandit|N_random_particles|whole_particle_generation|example_particles}. Particles 1 and 3 form a CR pair. If $r_t$ moves to anywhere in $(s,1)$, it will drift toward $r$ and stay around $1$, not converging to 1. 
\end{itemize}

In other words, for the particle configuration in Fig \ref{fig:PTS_for_two_arm_Bern_bandit|N_random_particles|whole_particle_generation|example_particles}, the process $\{r_t\}$ has contraction set $\calR = \{0, r\}$. Since $\calR$ doesn't contain $1$, PTS cannot be consistent. 

No matter how large $N$ is, the probability that there exist at least one type-1 bad particle and one type-2 bad particle like 2 and 3 in Fig \ref{fig:PTS_for_two_arm_Bern_bandit|N_random_particles|whole_particle_generation|example_particles} is non-zero. However, a bad particle of either type cannot be too flat in the divergence diagram. For example, the right end of the line segment of a type-1 bad particle cannot be below $\bar{d}_1$. Therefore, even with the existence of bad particles, a sufficiently good particle creates an interval in $[0,1]$ (e.g. $(s,t)$ in Fig \ref{fig:PTS_for_two_arm_Bern_bandit|N_random_particles|whole_particle_generation|example_particles}) in which $r_t$ always drifts to the right direction. For large $N$, we expect to have at least one good particle. And as $N$ increases, the line segment of that good particle becomes lower and flatter, making, making the aforementioned interval expand to $(0,1)$. We formally state these ideas as follows.

\begin{proposition}
	\label{proposition:PTS_for_two_arm_Bern_bandit|N_random_particles|whole_particle_generation|PTS_is_PAC}
	Consider a given $\BernoulliBandit(K=2, \theta^*)$ problem and let $\calP_N$ be a random set of $N$ particles generated by Method 2. Let $\calR$ be the contraction set for process $\{r_t\}$ defined in Definition \ref{def:PTS_for_two_arm_Bern_bandit|N_fixed_particles|contraction_set_R}. Then for sufficiently large $N$, with probability at least $1-e^{-N^{1/3}}$, the following statements are true:
	\begin{enumerate}
		\item[(a)] Any $r \in \calR$ satisfies either $r \leq s_0$ or $r \geq r_0$ for some $s_0, r_0 \in [0,1]$ satisfying $s_0 \leq C_1 N^{-\frac{1}{3}}$ and $r_0 \geq 1- C_2 N^{-\frac{1}{3}}$, where $C_1, C_2$ are some $\theta^*$-dependent constants.
		\item[(b)] For any $\xi \in (s_0,r_0)$, the corresponding dominant particle is action-optimal.
	\end{enumerate}
\end{proposition}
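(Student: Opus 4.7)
The plan is to build the proof around the existence of a single ``very fit'' particle $i^*$ falling close to $\theta^*$. This one particle will simultaneously control the curve $D^o$ uniformly in $\xi$, force every dominant particle in the interior to be action-optimal, and thereby rule out any CR pair producing a breakpoint in the interior. Part (a) will then follow from (b) by a one-line argument on the structure of CR pairs.

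First I would apply a covering argument. Assume $\theta^*_1\neq\theta^*_2$ (the equal case is vacuous) and pick a constant $\alpha>0$ so that the set $B(\theta^*,\alpha N^{-1/3})\cap[0,1]^2$ has area at least $N^{-2/3}$ for all $N$ large enough; $\alpha$ depends only on how far $\theta^*$ sits inside $[0,1]^2$. The probability that none of the $N$ uniform particles lands in this region is at most $(1-N^{-2/3})^N \leq e^{-N^{1/3}}$. On the complementary event, fix any such particle $i^*$. Using the local quadratic upper bound $d(\theta^*_a\|y) \leq C(\theta^*)(y-\theta^*_a)^2$ valid for $y$ in a neighborhood of $\theta^*_a\in(0,1)$, one gets $d(\theta^*_a\|\theta^{(i^*)}_a) \leq C_0 N^{-2/3}$ for $a=1,2$, and hence
\begin{equation*}
D^o(\xi) \;\leq\; D_{i^*}(\xi) \;=\; \xi\, d(\theta^*_1\|\theta^{(i^*)}_1) + (1-\xi)\, d(\theta^*_2\|\theta^{(i^*)}_2) \;\leq\; C_0 N^{-2/3}
\end{equation*}
uniformly in $\xi \in [0,1]$.

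Next I would set $s_0 = C_1 N^{-1/3}$ and $r_0 = 1 - C_2 N^{-1/3}$ for positive constants $C_1,C_2$ to be chosen, and prove (b) first. Fix $\xi\in(s_0,r_0)$ and let $i$ be the dominant particle at $\xi$, so $D_i(\xi) \leq D^o(\xi) \leq C_0 N^{-2/3}$. Writing $d_a \triangleq d(\theta^*_a\|\theta^{(i)}_a)$, non-negativity of each summand in $\xi d_1 + (1-\xi)d_2$ gives $d_1 \leq C_0 N^{-2/3}/\xi \leq (C_0/C_1)\,N^{-1/3}$ and similarly $d_2 \leq (C_0/C_2)\,N^{-1/3}$. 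For $N$ large enough both bounds are strictly below $\bar{d}_1$ and $\bar{d}_2$ respectively, so Lemma \ref{lemma:PTS_for_two_arm_Bern_bandit|N_random_particles|sufficient_condition_for_consistent_particle} makes $i$ action-optimal, which is (b). Finally, (a) drops out: any element of $\calR$ in $(0,1)$ is a breakpoint of $D^o$ whose two associated particles $i,j$ form a CR pair and both attain $D^o$ at that breakpoint, hence both are dominant there; if the breakpoint were in $(s_0,r_0)$, applying (b) to both would give $A(i)=A(j)=A(\theta^*)$, contradicting the CR-pair requirement $A(i)\neq A(j)$. Hence $\calR\cap(s_0,r_0)=\emptyset$.

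The main obstacle is simply quantifying the covering estimate together with the quadratic bound on $d(\cdot\|\cdot)$ near the diagonal in a way that yields the advertised $N^{-1/3}$ scaling with the probability $1-e^{-N^{1/3}}$, and carrying through the $\theta^*$-dependent constants; a minor subtlety is that the constant $\alpha$ (and hence $C_1, C_2$) depends on how close $\theta^*$ is to the boundary of $[0,1]^2$, but this only affects constants, not the $N^{-1/3}$ rate. Once the good particle $i^*$ is produced and $D^o$ is uniformly pinned at $O(N^{-2/3})$, the remainder of the argument is essentially bookkeeping on a convex combination.
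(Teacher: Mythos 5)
Your proposal is correct and follows the same skeleton as the paper's proof: with probability at least $1-e^{-N^{1/3}}$ produce one particle whose divergence line is uniformly low, conclude that every particle attaining $D^o$ on a middle interval $(s_0,r_0)$ has both coordinate divergences below $\bar{d}_1,\bar{d}_2$ and is therefore action-optimal by Lemma \ref{lemma:PTS_for_two_arm_Bern_bandit|N_random_particles|sufficient_condition_for_consistent_particle}, and deduce that $\calR$ has no interior point in $(s_0,r_0)$. The differences are in implementation. For the existence step, the paper works directly at the KL level: Lemma \ref{lemma:PTS_for_two_arm_Bern_bandit|N_random_particles|whole_particle_generation|PTS_is_PAC_proposition|lemma2} gives $\Pr\{d(x\|U)\leq\epsilon\}\geq\epsilon/2$ for \emph{any} $x\in[0,1]$, so a ``good'' particle with both divergences below $\delta(N)=2N^{-1/3}$ exists with the stated probability; you instead use a Euclidean ball of radius $\alpha N^{-1/3}$ plus a local quadratic bound on $d(\theta^*_a\|\cdot)$. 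That quadratic bound is only valid for $\theta^*_a\in(0,1)$ (at $\theta^*_a\in\{0,1\}$ the divergence is locally linear, so you only get $O(N^{-1/3})$ rather than $O(N^{-2/3})$); this does not break your argument, since an $O(N^{-1/3})$ envelope bound still makes $d_1,d_2$ bounded by constants that can be pushed below $\bar d_1,\bar d_2$ by adjusting $C_1,C_2$, but as written it is a loose end that the paper's KL-level route avoids. For the remaining steps, your algebraic bounds $d_1\leq D_i(\xi)/\xi$, $d_2\leq D_i(\xi)/(1-\xi)$ replace the paper's geometric Lemma \ref{lemma:PTS_for_two_arm_Bern_bandit|N_random_particles|whole_particle_generation|PTS_is_PAC_proposition|lemma1} with the same content, and your derivation of (a) by contradiction from Definition \ref{def:PTS_for_two_arm_Bern_bandit|N_fixed_particles|contraction_set_R} (a CR pair at an interior breakpoint of $D^o$ in $(s_0,r_0)$ would consist of two action-optimal particles with distinct optimal arms) is arguably cleaner than the paper's drift-direction phrasing. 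One small caution: the case $\theta^*_1=\theta^*_2$ is not ``vacuous'' --- part (b) is then trivial but part (a) is genuinely problematic, and both your argument and the paper's implicitly exclude this case since $\bar d_1,\bar d_2$ are only defined (and positive) when $\theta^*_1\neq\theta^*_2$; so you are at parity with the paper there, but the parenthetical justification is off.
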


An illustration of Proposition \ref{proposition:PTS_for_two_arm_Bern_bandit|N_random_particles|whole_particle_generation|PTS_is_PAC} is shown in Figure \ref{fig:PTS_for_two_arm_Bern_bandit|N_random_particles|whole_particle_generation|PTS_is_PAC_illustration}.

\begin{figure}[h]
\begin{center}
\centerline{\includegraphics[width=0.4\columnwidth]{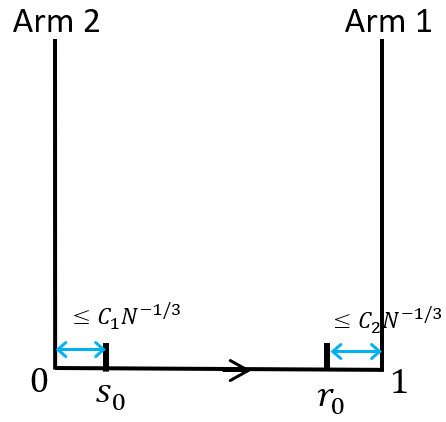}}
\caption{An illustration of Proposition \ref{proposition:PTS_for_two_arm_Bern_bandit|N_random_particles|whole_particle_generation|PTS_is_PAC}.}
\label{fig:PTS_for_two_arm_Bern_bandit|N_random_particles|whole_particle_generation|PTS_is_PAC_illustration}
\end{center}
\vskip -0.2in
\end{figure}

Before we prove this result, let us discuss its implication. Suppose without loss of generality that arm 1 is the optimal arm, i.e., $\theta^*_1 > \theta^*_2$. Let $E_1 \triangleq \left\{\lim_{t \rightarrow \infty} \overline{reg}_t \geq \left(1-\frac{C_1}{\sqrt[3]{N}}\right) \abs{\theta^*_1 - \theta^*_2} \right\}$, a bad event in which the running average regret is large. Let $E_2 \triangleq \left\{\lim_{t \rightarrow \infty} \overline{reg}_t \leq \frac{C_2}{\sqrt[3]{N}} \abs{\theta^*_1 - \theta^*_2} \right\}$, a good event where the running average is small, i.e., the algorithm is almost consistent. According to Proposition \ref{proposition:PTS_for_two_arm_Bern_bandit|N_random_particles|whole_particle_generation|PTS_is_PAC} and Conjecture \ref{conjecture:PTS_for_two_arm_Bern_bandit|N_fixed_particles|rt_converges_to_a_point_in_script_R_with_prob_one}, with high probability $r_t$ eventually converges to some $r \in [0,1]$, with either $r \leq s_0$ or $r \geq r_0$, and the former implies $E_1$ and the latter implies $E_2$. Thus 
\begin{equation}
	\label{eq:PTS_for_two_arm_Bern_bandit|N_random_particles|whole_particle_generation|PTS_is_almost_PAC}
	\Pr \left\{E_1 \cup E_2 \right\} \geq 1 - e^{-\sqrt[3]{N}} \, . 
\end{equation}
Without event $E_1$, (\ref{eq:PTS_for_two_arm_Bern_bandit|N_random_particles|whole_particle_generation|PTS_is_almost_PAC}) means that PTS is probably approximately consistent (PAC). But because we cannot exclude the possibility of $E_1$, we cannot say that PTS is PAC. However, as $N$ increases, the interval $(0, s_0]$ shrinks, we expect that the probability that $r_t$ is trapped somewhere in $[0, s_0]$ becomes smaller. That is, we expect that $\Pr\{E_1\} \rightarrow 0$ as $N\rightarrow \infty$, although we do not have a proof. If that is indeed true, then Proposition \ref{proposition:PTS_for_two_arm_Bern_bandit|N_random_particles|whole_particle_generation|PTS_is_PAC} implies that, with whole-particle random generation, PTS is PAC.

We now prove Proposition \ref{proposition:PTS_for_two_arm_Bern_bandit|N_random_particles|whole_particle_generation|PTS_is_PAC}, starting with the following lemma. 

\begin{lemma}
	\label{lemma:PTS_for_two_arm_Bern_bandit|N_random_particles|whole_particle_generation|PTS_is_PAC_proposition|lemma1}
	Let $\theta^* \in [0,1]^2$ be given. Let $\bar{d}_1$ and $\bar{d}_2$ be the constants in Lemma \ref{lemma:PTS_for_two_arm_Bern_bandit|N_random_particles|sufficient_condition_for_consistent_particle}. In the divergence diagram, let $L_1$ be the line with end points $0$ and $\bar{d}_1$ and let $L_2$ be the line with end points $1$ and $\bar{d}_2$. See Fig. \ref{fig:PTS_for_two_arm_Bern_bandit|N_random_particles|whole_particle_generation|PTS_is_PAC_proposition|lemma1_illustration}. Let $\delta_0$ be the height at which $L_1$ and $L_2$ intersects. For any $\delta \in [0,\delta_0)$, let $L = \{L(r) = \delta: 0 \leq r \leq 1 \}$ be the horizontal line of height $\delta$. Let $s_0$ be such that $L(s_0) = L_1(s_0)$ and let $r_0$ be such that $L(r_0) = L_2(r_0)$. Then $s_0 < r_0$. The following are true:
	\begin{enumerate}
		\item[(a)] If there exists a particle $i$ that satisfies $D_i(r) \leq L(r) = \delta$ for any $r \in (s_0, r_0)$ (i.e., $D_i$ intersects with the red rectangle in Fig. \ref{fig:PTS_for_two_arm_Bern_bandit|N_random_particles|whole_particle_generation|PTS_is_PAC_proposition|lemma1_illustration}), then particle $i$ must be action-optimal. 
		\item[(b)] If there exists a particle $j$ such that $D_j$ is entirely below $L$, then any $r \in \calR$ must satisfy $r \leq s_0$ or $r \geq r_0$. 
	\end{enumerate}
	\begin{figure}[h]
		\centering
		\includegraphics[scale=0.6]{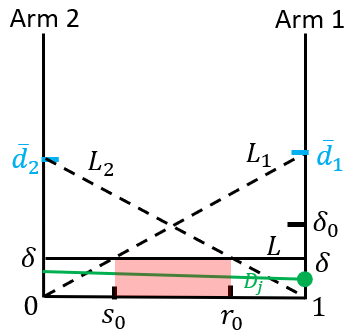}
		\caption{An illustration of Lemma \ref{lemma:PTS_for_two_arm_Bern_bandit|N_random_particles|whole_particle_generation|PTS_is_PAC_proposition|lemma1}.}
		\label{fig:PTS_for_two_arm_Bern_bandit|N_random_particles|whole_particle_generation|PTS_is_PAC_proposition|lemma1_illustration}
	\end{figure}
\end{lemma}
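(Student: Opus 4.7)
The plan is to deduce both parts from one elementary observation: the function $D_i(r) = r \cdot d(\theta^*_1 || \theta^{(i)}_1) + (1-r) \cdot d(\theta^*_2 || \theta^{(i)}_2)$ is a convex combination of two non-negative quantities, so an upper bound on $D_i$ at any single interior point $r^*$ forces proportional upper bounds on the two endpoint values $D_i(0) = d(\theta^*_2 || \theta^{(i)}_2)$ and $D_i(1) = d(\theta^*_1 || \theta^{(i)}_1)$ separately. Both parts then reduce, via Lemma \ref{lemma:PTS_for_two_arm_Bern_bandit|N_random_particles|sufficient_condition_for_consistent_particle}, to checking that those bounds are strict enough.

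For part (a), I would first pin down the explicit geometry: since $L_1(r) = r\bar{d}_1$ and $L_2(r) = (1-r)\bar{d}_2$, we have $s_0 = \delta/\bar{d}_1$ and $r_0 = 1-\delta/\bar{d}_2$, and the assumption $\delta < \delta_0$ guarantees $s_0 < r_0$. Given an $r^* \in (s_0, r_0)$ witnessing $D_i(r^*) \leq \delta$, I would drop either of the two non-negative KL summands to obtain $d(\theta^*_1 || \theta^{(i)}_1) \leq \delta/r^* < \delta/s_0 = \bar{d}_1$ and $d(\theta^*_2 || \theta^{(i)}_2) \leq \delta/(1-r^*) < \delta/(1-r_0) = \bar{d}_2$; Lemma \ref{lemma:PTS_for_two_arm_Bern_bandit|N_random_particles|sufficient_condition_for_consistent_particle} then delivers action-optimality of particle $i$ directly.

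For part (b), I would exploit that the existence of a particle $j$ with $D_j < \delta$ on all of $[0,1]$ forces $D^o(r) \leq D_j(r) < \delta$ everywhere, so every interior breakpoint of $D^o$ sits at height strictly less than $\delta$. Take any $r \in \calR \cap (0,1)$ and suppose for contradiction that $r \in (s_0, r_0)$. By Definition \ref{def:PTS_for_two_arm_Bern_bandit|N_fixed_particles|contraction_set_R}, $r$ is a breakpoint whose two associated particles form a CR pair, and therefore have \emph{opposite} optimal arms. But both particles satisfy $D_{\cdot}(r) = D^o(r) < \delta$ at the same $r \in (s_0, r_0)$, so part (a) applies to each and forces both to share the optimal arm $A(\theta^*)$, contradicting the CR pair condition. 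The boundary cases $r = 0$ and $r = 1$ automatically belong to $[0, s_0] \cup [r_0, 1]$, completing the classification.

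I do not foresee any serious obstacle here: the argument is essentially a pigeonhole on a convex combination paired with the geometric picture of the divergence diagram. The only care needed is to keep strict versus non-strict inequalities aligned so that the \emph{strict} hypotheses $d(\theta^*_a || \theta_a) < \bar{d}_a$ required by Lemma \ref{lemma:PTS_for_two_arm_Bern_bandit|N_random_particles|sufficient_condition_for_consistent_particle} are actually met when invoked from the $\leq \delta$ hypothesis of the present lemma; this is exactly where using the open interval $(s_0,r_0)$ rather than $[s_0,r_0]$ matters.
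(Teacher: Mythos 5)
Your proposal is correct and takes essentially the same approach as the paper: both parts come down to the divergence-diagram geometry (with $s_0 = \delta/\bar{d}_1$, $r_0 = 1-\delta/\bar{d}_2$) combined with Lemma \ref{lemma:PTS_for_two_arm_Bern_bandit|N_random_particles|sufficient_condition_for_consistent_particle}. The only cosmetic differences are that you prove (a) directly by splitting the convex combination where the paper argues by contraposition (a non-action-optimal particle's segment lies entirely above $L_1$ or $L_2$), and for (b) you invoke the CR-pair clause of Definition \ref{def:PTS_for_two_arm_Bern_bandit|N_fixed_particles|contraction_set_R} at a breakpoint explicitly rather than the paper's dominant-particle/drift phrasing, which reaches the same conclusion.
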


\begin{proof}
	The proof is geometric. See Figure \ref{fig:PTS_for_two_arm_Bern_bandit|N_random_particles|whole_particle_generation|PTS_is_PAC_proposition|lemma1_illustration}. It is obvious that $s_0 < r_0$. 
	
	We show part (a) by showing that its contraposition is true. Consider a particle $i$ associated with a line $D_i$ in the diagram. Suppose particle $i$ is not action-optimal. Then by Lemma \ref{lemma:PTS_for_two_arm_Bern_bandit|N_random_particles|sufficient_condition_for_consistent_particle}, either $D_i(0) \geq \bar{d}_2$ or $D_i(1) \geq \bar{d}_1$. Without loss of generality, assume $D_i(1) \geq \bar{d}_1$. Then $D_i$ must be entirely above $L_1$. Therefore $D_i$ cannot intersect the red rectangle in Fig. \ref{fig:PTS_for_two_arm_Bern_bandit|N_random_particles|whole_particle_generation|PTS_is_PAC_proposition|lemma1_illustration}. 
	
	Next, we show part (b). Suppose particle $j$ has $D_j$ entirely below $L$. Obviously particle $j$ is action-optimal. For any $\xi \in (s_0, r_0)$, its dominant particle must be either particle $j$ itself or below particle $j$ at $\xi$. In the latter case, the dominant particle must be action-optimal according to part (a). Thus, the dominant particle for any $\xi \in (s_0, r_0)$ must be action-optimal. Therefore if $r_t$ is in $(s_0,r_0)$, it always drift to the optimal arm side. $\calR$ does not contain any points in $(s_0, r_0)$.
	
\end{proof}

\begin{lemma}
	\label{lemma:PTS_for_two_arm_Bern_bandit|N_random_particles|whole_particle_generation|PTS_is_PAC_proposition|lemma2}
	Let $U$ be a random variable uniformly distributed in $[0,1]$. Then for any $\epsilon \in (0,1)$, for any value $x \in [0,1]$ fixed and given, $\Pr\left\{d(x||U) \leq \epsilon \right\} \geq \frac{\epsilon}{2}$. 
\end{lemma}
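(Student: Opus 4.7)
The plan is to lower-bound the Lebesgue measure of $\{y\in[0,1]: d(x\|y)\le\epsilon\}$ by exploiting the fact that, on each side of $x$, one of the two summands in $d(x\|y)=x\ln(x/y)+(1-x)\ln\!\tfrac{1-x}{1-y}$ is non-positive and can be discarded, leaving a simple single-logarithm upper bound that can be inverted by hand.

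First I would split into the two regions $y\ge x$ and $y\le x$. On $[x,1]$ the first summand $x\ln(x/y)$ is $\le 0$, so $d(x\|y)\le (1-x)\ln\!\tfrac{1-x}{1-y}$; this last quantity is $\le\epsilon$ iff $y\le 1-(1-x)e^{-\epsilon/(1-x)}$, giving a sub-interval of $[x,1]$ of length $(1-x)\!\left(1-e^{-\epsilon/(1-x)}\right)$ on which $d(x\|y)\le\epsilon$. Symmetrically, on $[0,x]$ the second summand is $\le 0$, and the same calculation produces a sub-interval of length $x\!\left(1-e^{-\epsilon/x}\right)$. Adding, and setting $h(u)\triangleq u\!\left(1-e^{-\epsilon/u}\right)$ (with $h(0)\triangleq 0$), we obtain the key inequality
\begin{equation*}
    \Pr\{d(x\|U)\le\epsilon\} \;\ge\; h(x) + h(1-x).
\end{equation*}

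Next I would show $h(x)+h(1-x)\ge 1-e^{-\epsilon}$. The observation is that $h(u)/u = 1-e^{-\epsilon/u}$ is a decreasing function of $u$ (because $\epsilon/u$ is decreasing), so for every $u\in(0,1]$,
\begin{equation*}
    \frac{h(u)}{u} \;\ge\; h(1) \;=\; 1-e^{-\epsilon}.
\end{equation*}
Applying this with $u=x$ and $u=1-x$ and summing yields $h(x)+h(1-x)\ge (x+(1-x))(1-e^{-\epsilon}) = 1-e^{-\epsilon}$. The boundary cases $x=0$ and $x=1$ are handled automatically by the convention $h(0)=0$, and in fact agree with a direct computation of the law of $-\ln(1-U)$ or $-\ln U$.

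Finally, a one-line elementary check closes the gap between $1-e^{-\epsilon}$ and $\epsilon/2$: the function $\phi(\epsilon)=1-e^{-\epsilon}-\epsilon/2$ satisfies $\phi(0)=0$, $\phi(1)=1-e^{-1}-1/2>0$, and $\phi''(\epsilon)=-e^{-\epsilon}<0$, so $\phi$ is concave on $[0,1]$ and hence $\phi\ge 0$ there, giving $1-e^{-\epsilon}\ge\epsilon/2$ for $\epsilon\in(0,1)$. The only non-routine step is the monotonicity argument for $h(u)/u$; once that is in hand, the remaining steps are algebra and one-variable calculus.
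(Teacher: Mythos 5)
Your proof is correct, but it takes a genuinely different route from the paper's. The paper invokes an external $\chi^2$-type upper bound on the binary KL divergence, $d(x\|u) \le \frac{x^2}{u} + \frac{(1-x)^2}{1-u} - 1$ (Theorem 1 of the cited Dragomir--Scholz--Sunde reference), and notes that this bound is at most $\epsilon$ on the entire interval $\frac{x}{1+\epsilon} \le u \le 1-\frac{1-x}{1+\epsilon}$, whose length is $\frac{\epsilon}{1+\epsilon} \ge \frac{\epsilon}{2}$. You instead discard the non-positive summand of $d(x\|y)$ on each side of $x$, invert the remaining single-logarithm bound exactly, and add the two interval lengths to obtain $\Pr\{d(x\|U)\le\epsilon\} \ge x\left(1-e^{-\epsilon/x}\right) + (1-x)\left(1-e^{-\epsilon/(1-x)}\right) \ge 1-e^{-\epsilon} \ge \epsilon/2$, the middle step via the monotonicity of $u \mapsto 1-e^{-\epsilon/u}$ and the last via concavity of $\epsilon \mapsto 1-e^{-\epsilon}-\epsilon/2$; each of these steps checks out, including the boundary cases $x\in\{0,1\}$. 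What each approach buys: yours is fully self-contained (no external inequality needed) and actually yields the slightly sharper constant $1-e^{-\epsilon} \ge \frac{\epsilon}{1+\epsilon}$, whereas the paper's argument is a two-line consequence of the cited divergence bound and requires no case split or inversion.
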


\begin{proof}
	By Theorem 1 in \cite{DragomirScholzSunde2000}, $d(x||u) \leq   \frac{x^2}{u} + \frac{(1-x)^2}{1-u}-1$. Therefore, if $u$ satisfies 
	\begin{equation*}
		u \geq \frac{1}{1+\epsilon}x \quad \text{and} \quad 1-u \geq \frac{1}{1+\epsilon}(1-x) \, , 
	\end{equation*}
	then $d(x||u) \leq (1+\epsilon)x + (1+\epsilon)(1-x) -1 = \epsilon$. It follows that 
	\begin{equation*}
	\begin{aligned}
		\Pr\left\{d(x||U) \leq \epsilon \right\} \geq \Pr\left\{\frac{1}{1+\epsilon}x \leq U \leq 1-\frac{1}{1+\epsilon}(1-x) \right\} = 1-\frac{1-x}{1+\epsilon}-\frac{x}{1+\epsilon} = \frac{\epsilon}{1+\epsilon} \geq \frac{\epsilon}{2} \, . 
	\end{aligned}
	\end{equation*}
\end{proof}

\begin{proof}[Proof of Proposition \ref{proposition:PTS_for_two_arm_Bern_bandit|N_random_particles|whole_particle_generation|PTS_is_PAC}]
	Consider a fixed large $N$. Let $\delta(N) = 2N^{-\frac{1}{3}}$. Without loss of generality, suppose $N$ is large enough such that $\delta(N) < \delta_0$ as in Lemma \ref{lemma:PTS_for_two_arm_Bern_bandit|N_random_particles|whole_particle_generation|PTS_is_PAC_proposition|lemma1}. Let $L(N), s_0(N), r_0(N)$ be defined for $\delta(N)$ as $L, s_0, r_0$ are defined for $\delta$ in Lemma \ref{lemma:PTS_for_two_arm_Bern_bandit|N_random_particles|whole_particle_generation|PTS_is_PAC_proposition|lemma1}. If a particle $i$ satisfies that $D_i$ is entirely below the line $L(N)$, we say that particle $i$ is good. Let $E$ be the event that there exists at least one good particle in $\calP_N$. It follows that 
	\begin{equation*}
		\begin{aligned}
			\Pr\{E\} &= 1 - (1 - \Pr\{\text{particle 1 is good}\})^N \\
			&= 1 - \left(1 - \Pr\left\{d(\theta^*_1 || \theta^{(1)}_1) \leq \delta(N)\right\} \cdot  \Pr\left\{d(\theta^*_2 || \theta^{(1)}_2) \leq \delta(N)\right\} \right)^N \\
			&\overset{(i)}{\geq} 1-\left(1 - N^{-1/3} N^{-1/3} \right)^N \\
			&\geq 1-e^{-N^{-2/3}N} = 1-e^{-N^{\frac{1}{3}}} \, ,
		\end{aligned}
	\end{equation*}
	where $(i)$ is due to $\Pr\left\{d(\theta^*_i||\theta^{(1)}_i) \leq \delta(N)\right\} \geq N^{-\frac{1}{3}}$ by Lemma \ref{lemma:PTS_for_two_arm_Bern_bandit|N_random_particles|whole_particle_generation|PTS_is_PAC_proposition|lemma2} for $i=1,2$.

	Suppose event $E$ is true. Let $i_0$ be one good particle. Then by Lemma \ref{lemma:PTS_for_two_arm_Bern_bandit|N_random_particles|whole_particle_generation|PTS_is_PAC_proposition|lemma1} part (b), any $r \in \calR$ must satisfy $r \leq s_0(N)$ or $r \geq r_0(N)$. Simple geometry shows that $s_0(N) = \frac{\delta(N)}{\bar{d}_1} = \frac{2}{\bar{d}_1} N^{-\frac{1}{3}}$ and $r_0(N) = 1- \frac{\delta(N)}{\bar{d}_2} = 1-\frac{2}{\bar{d}_2} N^{-\frac{1}{3}}$. Let $C_1 = \frac{2}{\bar{d}_1}$ and $C_2 = \frac{2}{\bar{d}_2}$, part (a) of Proposition \ref{proposition:PTS_for_two_arm_Bern_bandit|N_random_particles|whole_particle_generation|PTS_is_PAC} is proved. 
	
	Consider any $\xi \in (s_0, r_0)$, let the corresponding dominant particle be $j$. Then $D_j(\xi) \leq D_{i_0}(\xi)$. By Lemma \ref{lemma:PTS_for_two_arm_Bern_bandit|N_random_particles|whole_particle_generation|PTS_is_PAC_proposition|lemma1} part (a), particle $j$ must be action-optimal. Part (b) of Proposition \ref{proposition:PTS_for_two_arm_Bern_bandit|N_random_particles|whole_particle_generation|PTS_is_PAC} is proved. 
\end{proof}

\subsection{Summary} \label{subsec:PTS_for_two_arm_Bernoulli_bandit|summary}

In this section we analyzed $\PTS$ for the two-arm Bernoulli bandit problem. Our key findings are the following.
\begin{itemize}
	\item \emph{Fit particles survive, unfit particles decay}, in the sense described in Proposition \ref{proposition:PTS_for_two_arm_Bern_bandit|weight_dynamics_of_N_particles|unnormalized_weight_dynamics} and Conjecture \ref{conjecture:PTS_for_two_arm_Bern_bandit|N_fixed_particles|rt_converges_to_a_point_in_script_R_with_prob_one}. The fitness of a particle $i$ is measured in terms of its closeness to $\theta^*$ by the divergence $D_i(r_t)$, a convex combination of the KL divergences on the two arms. Unfortunately we cannot directly compare the fitness of particles because $D_i(r_t)$ depends on the random process $r_t$. It is possible that the weights of the surviving particles oscillates forever due to the counter-reinforcing effect. Also, the weights of the decaying particles decay exponentially fast. 
	\item \emph{The set of surviving particles is random.} This is mainly due to the self-reinforcing effect. One way to find out the possible sets of surviving particles is by drawing the divergence diagram described in Section \ref{subsec:PTS_for_two_arm_Bernoulli_bandit|N_given_particles_asymptotic_behavior}. 
	\item \emph{Most particles decay}. Under Assumption \ref{assumption:PTS_for_two_arm_Bern_bandit|N_fixed_particles|no_more_than_two_particles_intersecting}, we expect that all except at most two particles decay eventually. 
	\item Roughly speaking, with randomly generated particles, \emph{PTS is consistent or near-consistent with high probability}. See Proposition \ref{proposition:PTS_for_two_arm_Bern_bandit|N_random_particles|coordinate_wise_generation|consistency_whp_for_large_N} and Proposition \ref{proposition:PTS_for_two_arm_Bern_bandit|N_random_particles|whole_particle_generation|PTS_is_PAC}. 
\end{itemize}

We believe these findings and some related concepts can be extended to other and more general kinds of stochastic bandit problems. For example, for the $K$-arm Bernoulli bandit problem with $K \geq 3$, we expect to observe counter-reinforcing sets (not just pairs) of particles in PTS, in which the particles reinforce each other in some way. Proposition \ref{proposition:sample_path_necessary_survival_condition} provides a generalized method to identify surviving particles, including counter-reinforcing particles, for general stochastic bandit problems and for any finite number of particles. 

\subsection{Useful Drift Implied Bounds} \label{subsec:PTS_for_two_arm_Bernoulli_bandit|useful_drift_implied_bounds}

This section includes for reference two useful drft implied bounds.

\subsubsection{One drift implied bound with stochastic dominance} \label{section:appendix|Hajek_drift_bound}

The following result (Proposition \ref{proposition:appendix|Hajek_drift_bound}) is taken out from \cite{Hajek1982} for convenience of reference. Let $X_0, X_1, \cdots$ be a sequence of random variables. The drift at time $t$ is defined as $\E[X_{t+1} - X_t | \calF_t]$, where $\calF_t = \sigma(X_0, \cdots, X_t)$. Consider the following two conditions:

\textbf{Condition C1}: 
\begin{equation}
	\label{eq:appendix|Hajek_drift_bound|condition_C1}
	\E\left[(X_{t+1} - X_t) \indicator_{\{X_t \geq a\}} | \calF_t \right] \leq -\epsilon_0 \quad t \geq 0
\end{equation}
for some constants $-\infty \leq a < \infty$ and $\epsilon_0 > 0$. That is, the drift at time $t$ is strictly negative whenever $X_t \geq a$. 

\textbf{Condition C2}: There exists a random variable $Z$ with $\E[e^{\lambda Z}] = D$ for some constants $\lambda > 0$ and $D > 0$ such that $(\abs{X_{t+1} - X_t} | \calF_t) \prec Z$. That is, given $\calF_t$, $\abs{X_{t+1} - X_t}$ is stochastically dominated by a random variable with exponential tail. 

Let $c, \eta, \rho$ be constants such that
\begin{equation*}
	\begin{aligned}
		c &\geq \frac{\E[e^{\lambda Z}] - (1+\lambda \E[Z])}{\lambda^2} \, , \\
		0 &< \eta \leq \lambda \, , \\
		\eta &< \epsilon_0 / c \, , \\
		\rho &= 1 - \epsilon_0 \eta + c \eta^2 \, .
	\end{aligned}
\end{equation*}
Then $\rho < 1$.

\begin{proposition}[Theorem 2.3 in \cite{Hajek1982}]
	\label{proposition:PTS_for_two_arm_Bern_bandit|appendix|Hajek_drift_bound}
	\label{proposition:appendix|Hajek_drift_bound}
	Conditions C1 and C2 imply that
	\begin{equation*}
		P\left\{X_t \geq b | X_0 \right\} \leq \rho^t e^{\eta(Y_0 - b)} + \frac{1-\rho^t}{1-\rho} D e^{-\eta(b-a)} \, . 
	\end{equation*}
	In particular, if $X_0 \leq a$, then 
	\begin{equation*}
		P\left\{X_t \geq b | X_0\right\} \leq \frac{D}{1-\rho} e^{-\eta(b-a)} \, . 
	\end{equation*}
\end{proposition}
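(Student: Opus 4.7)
The plan is to run a standard exponential-Lyapunov drift argument: define $V_t = e^{\eta(X_t - a)}$, establish a one-step inequality of the form $\E[V_{t+1}\mid \calF_t] \leq \rho V_t$ on $\{X_t \geq a\}$ with $\rho < 1$ together with a cruder bound $\E[V_{t+1}\mid \calF_t] \leq D$ on $\{X_t < a\}$, iterate to bound $\E[V_t \mid X_0]$ by a geometric series, and finally apply Markov's inequality to invert the exponential-moment bound into the desired tail bound on $X_t$.

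The heart of the proof is the single-step exponential-moment estimate
\begin{equation*}
\E\!\left[ e^{\eta(X_{t+1}-X_t)} \,\Big|\, \calF_t \right] \leq 1 + \eta\,\E[X_{t+1}-X_t \mid \calF_t] + c\eta^2, \qquad 0 < \eta \leq \lambda.
\end{equation*}
I would derive this by writing $e^{\eta y} = 1 + \eta y + \eta^2 \phi(y)$ with $\phi(y) = (e^{\eta y}-1-\eta y)/\eta^2$. Two facts make condition C2 applicable: (i) $\phi(y) \leq \phi(|y|)$ and $\phi(|\cdot|)$ is non-decreasing, so the stochastic-dominance hypothesis $|X_{t+1}-X_t|\prec Z$ lifts to $\E[\phi(X_{t+1}-X_t)\mid \calF_t] \leq \E[\phi(Z)]$; and (ii) the map $\eta \mapsto (\E[e^{\eta Z}] - 1 - \eta\E[Z])/\eta^2$ is non-decreasing on $(0,\lambda]$, because its power-series representation $\sum_{k\geq 2}\eta^{k-2}\E[Z^k]/k!$ has non-negative terms when $Z\geq 0$. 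Hence $\E[\phi(Z)] \leq (D - 1 - \lambda\E[Z])/\lambda^2 \leq c$. Combining this with C1 yields $\E[V_{t+1}\mid \calF_t] \leq \rho V_t$ on $\{X_t \geq a\}$ with $\rho = 1 - \eta\epsilon_0 + c\eta^2 < 1$ (since $\eta < \epsilon_0/c$), while on $\{X_t < a\}$ I use $V_t < 1$ together with the bound $\E[e^{\eta(X_{t+1}-X_t)}\mid \calF_t] \leq \E[e^{\lambda Z}] = D$ (valid for $\eta \leq \lambda$ since $Z \geq 0$) to obtain $\E[V_{t+1}\mid \calF_t] \leq D$.

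Summing the two contributions gives the scalar recursion $\E[V_{t+1}\mid X_0] \leq \rho\,\E[V_t \mid X_0] + D$, which iterates to $\E[V_t\mid X_0] \leq \rho^t e^{\eta(X_0-a)} + \frac{1-\rho^t}{1-\rho}D$. Markov's inequality, $P(X_t \geq b\mid X_0) = P(V_t \geq e^{\eta(b-a)}\mid X_0) \leq e^{-\eta(b-a)}\E[V_t\mid X_0]$, then produces exactly the stated bound after noting $\rho^t e^{\eta(X_0-a)}\cdot e^{-\eta(b-a)} = \rho^t e^{\eta(X_0-b)}$; the ``in particular'' case when $X_0 \leq a$ follows from the simple observation that $\rho^t + (1-\rho^t)D/(1-\rho) \leq D/(1-\rho)$ since $D \geq 1 \geq 1-\rho$. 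The step I expect to be the main obstacle is the careful bookkeeping in the one-step bound: the stochastic-dominance transfer through $\phi$ and the monotonicity of $(\E[e^{\eta Z}]-1-\eta\E[Z])/\eta^2$ in $\eta$ are both elementary but need to be written out carefully, and one must verify that the specific choice of constants $c, \eta, \rho$ in the statement really forces $\rho < 1$; once these checks are in place, the remainder of the argument is a routine iteration and Markov step.
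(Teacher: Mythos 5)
Your argument is correct and is essentially the standard exponential-Lyapunov (exponential supermartingale) drift proof, which is exactly how this result is established in the cited source \cite{Hajek1982}; the paper itself states the proposition as a quoted known theorem without reproducing a proof. The one-step bound $\E[e^{\eta(X_{t+1}-X_t)}\mid\calF_t]\leq 1+\eta\,\E[X_{t+1}-X_t\mid\calF_t]+c\eta^2$, the case split at $\{X_t\geq a\}$, the geometric iteration, and the Markov step all match that argument, and your side checks (monotonicity in $\eta$, $\rho<1$, $D\geq 1$ since $Z$ dominates a nonnegative increment and hence $Z\geq 0$ a.s.) go through.
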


\subsubsection{Another drift implied bound with bounded steps} \label{section:appendix|bounded_steps_drift_bound}

Two lemmas are stated first.

\begin{lemma}[Hoeffding's Lemma]
	\label{lemma:appendix|bounded_steps_drift_bound|hoeffding_lemma}
	Suppose $Y$ is a random variable such that $\Pr\left\{Y \in [a,b] \right\} = 1$, then $\E\left[e^{\theta (Y - \E[Y])} \right] \leq \frac{\theta^2 (b-a)^2}{8}$. 
\end{lemma}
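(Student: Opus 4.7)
The plan is the classical two-step proof: first reduce to the centered case, then combine convexity of the exponential with a one-dimensional Taylor-type estimate on the log of the resulting chord bound. First I would set $X \triangleq Y - \E[Y]$, so that $X$ is mean zero and still lies in an interval of length $b-a$; it therefore suffices to prove the bound assuming $\E[Y]=0$ and $a \leq 0 \leq b$, which is what I do in the remainder of the argument.

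Next, since $y \mapsto e^{\theta y}$ is convex, on $[a,b]$ it is bounded above by the chord joining the endpoints $(a,e^{\theta a})$ and $(b,e^{\theta b})$. Taking expectations of this affine upper bound and using $\E[Y]=0$ yields
\begin{equation*}
\E[e^{\theta Y}] \;\leq\; \frac{b}{b-a}\,e^{\theta a} \;+\; \frac{-a}{b-a}\,e^{\theta b}.
\end{equation*}
Setting $p \triangleq -a/(b-a) \in [0,1]$ and $u \triangleq \theta(b-a)$, the right-hand side equals $e^{-pu}\bigl(1-p + p e^{u}\bigr)$, whose logarithm I would denote $L(u) \triangleq \ln(1-p+pe^{u}) - pu$. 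This is just the cumulant generating function of a mean-zero shifted $\mathrm{Bernoulli}(p)$ random variable.

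The remaining step is to show $L(u) \leq u^2/8$ for all $u$. I would verify $L(0)=0$ and $L'(0) = \frac{pe^{u}}{1-p+pe^{u}} - p \big|_{u=0} = 0$ by direct computation, and then write $L''(u) = q(1-q)$ where $q \triangleq p e^{u}/(1-p+pe^{u}) \in (0,1)$. Since $q(1-q) \leq 1/4$ uniformly in $q \in [0,1]$, Taylor's theorem with the integral remainder gives $L(u) \leq u^{2}/8 = \theta^{2}(b-a)^{2}/8$; exponentiating yields the standard form $\E[e^{\theta(Y-\E[Y])}] \leq \exp\!\bigl(\theta^{2}(b-a)^{2}/8\bigr)$, which is what the lemma asserts. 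The argument is essentially textbook; the only bit of care needed is the algebraic identification of the chord bound with the shifted Bernoulli log-MGF, after which the elementary estimate $q(1-q) \leq 1/4$ does all the work, and there is no real obstacle.
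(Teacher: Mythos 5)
Your proof is correct, and it is the standard argument: center $Y$, bound $e^{\theta y}$ on $[a,b]$ by its chord, recognize the resulting bound as the moment generating function of a centered $\mathrm{Bernoulli}(p)$ variable, and control its log via $L(0)=L'(0)=0$ and $L''(u)=q(1-q)\leq 1/4$ with Taylor's theorem. The paper itself states this lemma without proof, as a classical fact, so there is nothing to compare against; your write-up fills that in correctly (the degenerate cases $p\in\{0,1\}$ and $a=b$ are trivial and do not affect the argument). One point worth flagging: the lemma as printed in the paper contains a typo --- the right-hand side should be $\exp\left(\frac{\theta^2(b-a)^2}{8}\right)$ rather than $\frac{\theta^2(b-a)^2}{8}$ (the stated form is false, e.g.\ as $\theta \rightarrow 0$ the left side tends to $1$ while the stated bound tends to $0$). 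The exponentiated form is exactly what you prove and exactly what is used in the proof of Proposition \ref{proposition:appendix|bounded_steps_drift_bound}, so your conclusion is the intended statement.
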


\begin{lemma}
	\label{lemma:appendix|bounded_steps_drift_bound|max_markov_type_inequality_for_supermartingale}
	\label{lemma:max-markov-type-inequality-for_supermartingale}
	Suppose $(M_k: k \geq 0)$ is a non-negative supermartingale. Then for any $n \geq 0$ and $\gamma > 0$, $\Pr \left\{\max_{0 \leq k \leq n} M_k > \gamma \right\} \leq \frac{\E[M_0]}{\gamma}$.
\end{lemma}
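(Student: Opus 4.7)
The plan is a standard stopping-time argument combined with the optional stopping theorem for non-negative supermartingales. First I would introduce the stopping time $\tau \triangleq \inf\{k \geq 0 : M_k > \gamma\}$, with the convention that $\tau = \infty$ if the indicated set is empty. The key observation is that $\tau \wedge n$ is a bounded stopping time (bounded by $n$), which means optional stopping can be invoked without any additional integrability hypotheses beyond $\E[M_0] < \infty$.

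Applying optional stopping to the supermartingale $(M_k)$ at the bounded stopping time $\tau \wedge n$ gives $\E[M_{\tau \wedge n}] \leq \E[M_0]$. Next, I would estimate $\E[M_{\tau \wedge n}]$ from below on the event
\begin{equation*}
A \triangleq \Bigl\{ \max_{0 \leq k \leq n} M_k > \gamma \Bigr\}.
\end{equation*}
On $A$, the definition of $\tau$ forces $\tau \leq n$, so $\tau \wedge n = \tau$ and $M_{\tau \wedge n} = M_\tau > \gamma$. On $A^c$, non-negativity of $M$ gives $M_{\tau \wedge n} \geq 0$. Combining these cases yields $M_{\tau \wedge n} \geq \gamma \, \indicator_A$ pointwise, so $\E[M_{\tau \wedge n}] \geq \gamma \Pr(A)$.

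Chaining the two inequalities, $\gamma \Pr(A) \leq \E[M_{\tau \wedge n}] \leq \E[M_0]$, which rearranges to the claim $\Pr(A) \leq \E[M_0]/\gamma$. There is no real obstacle: this is the classical Doob maximal inequality specialized from submartingales to non-negative supermartingales. The only subtlety worth stating explicitly in the writeup is that the boundedness of $\tau \wedge n$ is what makes optional stopping applicable to a generic (not necessarily uniformly integrable) non-negative supermartingale.
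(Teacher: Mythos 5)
Your argument is correct: stopping at the bounded stopping time $\tau \wedge n$, invoking optional stopping for the non-negative supermartingale, and bounding $M_{\tau \wedge n} \geq \gamma \, \indicator_{\{\max_{0 \leq k \leq n} M_k > \gamma\}}$ pointwise gives exactly the claimed maximal inequality. The paper does not prove this lemma in-text but defers to the cited lecture notes, where the proof is this same classical stopping-time argument, so your approach matches the intended one.
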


A proof of Lemma \ref{lemma:appendix|bounded_steps_drift_bound|max_markov_type_inequality_for_supermartingale} can be found in Section 3.4 (Page 69) of \cite{HajekECE567CommunicationNetworkAnalysisNotes}.

\begin{proposition}
	\label{proposition:appendix|bounded_steps_drift_bound}
	Consider a randon sequence $(U_n: n \geq 1)$ and define $\calF = \varnothing$ and $\calF_k = \sigma(U_1, \cdots, U_k)$. Suppose $\E[U_{k+1}|\calF_k] \leq -\mu < 0$ for $k \geq 0$ and $\Pr\left\{\abs{U_k} \leq C\right\} = 1$ for $k \geq 1$ for some constancts $\mu, C > 0$. Let $X_n \triangleq U_1 + \cdots + U_n$ for $n \geq 1$ and $X_0 = 0$. Let $G_n \triangleq \max_{0 \leq k \leq n} X_k$ and $G \triangleq \max_{k \geq 0} X_k$. Then for any $b > 0$, $\Pr\left\{G > b \right\} \leq e^{-\frac{2\mu b}{C^2}}$ \, . 
\end{proposition}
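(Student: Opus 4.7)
The plan is to use a Cramér–Chernoff exponential martingale argument, combined with the maximal inequality for non-negative supermartingales (Lemma \ref{lemma:appendix|bounded_steps_drift_bound|max_markov_type_inequality_for_supermartingale}). The idea is to find $\theta > 0$ such that $M_k := e^{\theta X_k}$ is a non-negative supermartingale; then $\Pr\{G_n > b\} = \Pr\{\max_{0\le k\le n} M_k > e^{\theta b}\} \leq e^{-\theta b}\,\E[M_0] = e^{-\theta b}$, and we optimize $\theta$.

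First I would verify the supermartingale property. Conditioning on $\calF_k$, the increment $U_{k+1}$ is bounded in $[-C,C]$, so Hoeffding's lemma (Lemma \ref{lemma:appendix|bounded_steps_drift_bound|hoeffding_lemma}) applied to $U_{k+1}$ yields
\begin{equation*}
\E\!\left[e^{\theta(U_{k+1}-\E[U_{k+1}|\calF_k])}\,\big|\,\calF_k\right] \leq e^{\theta^2 (2C)^2/8} = e^{\theta^2 C^2/2}.
\end{equation*}
Combining with the drift hypothesis $\E[U_{k+1}|\calF_k] \leq -\mu$, one gets
\begin{equation*}
\E[e^{\theta U_{k+1}}|\calF_k] \leq e^{-\theta \mu + \theta^2 C^2/2},
\end{equation*}
which is $\leq 1$ precisely when $\theta \leq 2\mu/C^2$. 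Choosing $\theta = 2\mu/C^2$ makes the exponent zero, so $\E[M_{k+1}|\calF_k] = M_k \cdot \E[e^{\theta U_{k+1}}|\calF_k] \leq M_k$, i.e.\ $(M_k)$ is a non-negative supermartingale with $\E[M_0]=1$.

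Next I apply Lemma \ref{lemma:appendix|bounded_steps_drift_bound|max_markov_type_inequality_for_supermartingale} with threshold $\gamma = e^{\theta b}$: since $\{G_n > b\} = \{\max_{0\le k\le n} M_k > e^{\theta b}\}$, we obtain $\Pr\{G_n > b\} \leq e^{-\theta b} = e^{-2\mu b/C^2}$ for every $n$. Finally, since $G_n$ is monotone nondecreasing in $n$ with $\bigcup_n \{G_n > b\} = \{G > b\}$, continuity of probability from below gives $\Pr\{G > b\} = \lim_{n\to\infty} \Pr\{G_n > b\} \leq e^{-2\mu b/C^2}$, which is the desired bound.

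There is no real obstacle here; the only step requiring a little care is the application of Hoeffding's lemma to the \emph{conditional} distribution of $U_{k+1}$ given $\calF_k$ (legitimate because $|U_{k+1}|\leq C$ almost surely, hence conditionally almost surely), and the passage from finite $n$ to the supremum $G$, which is routine since the events $\{G_n>b\}$ are nested. The choice $\theta = 2\mu/C^2$ is optimal in this bound and recovers the stated constant in the exponent.
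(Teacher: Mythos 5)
Your proposal is correct and follows essentially the same route as the paper's own proof: Hoeffding's lemma applied conditionally to the bounded increments, the choice $\theta = 2\mu/C^2$ making $M_k = e^{\theta X_k}$ a non-negative supermartingale, the maximal inequality of Lemma \ref{lemma:appendix|bounded_steps_drift_bound|max_markov_type_inequality_for_supermartingale}, and a passage to the limit in $n$ (your continuity-from-below argument is equivalent to the paper's monotone convergence step). No gaps.
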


\begin{proof}
	By Hoeffding's lemma (Lemma \ref{lemma:appendix|bounded_steps_drift_bound|hoeffding_lemma}), 
	\begin{equation*}
		\E \left[e^{\theta(U_k - \E[U_k|\calF_{k-1}])} | \calF_{k-1} \right] \leq e^{\frac{\theta^2 (2C)^2}{8}} = e^{\frac{\theta^2 C^2}{2}} \, .
	\end{equation*}
	Therefore, for all $k \geq 1$, 
	\begin{equation*}
		\E\left[e^{\theta U_k} | \calF_{k-1} \right] \leq e^{\theta \E\left[U_k | \calF_{k-1} \right]} e^{\frac{\theta^2 C^2}{2}} \leq e^{-\theta \mu + \frac{\theta^2 C^2}{2}}\, . 
	\end{equation*}
	$-\theta \mu + \theta^2C^2/2$ is quadratic in $\theta$ and is less than or equal to zero for all $\theta \in [0, 2\mu/C^2]$. Let $\theta^* = 2\mu/C^2$. Then $\E\left[e^{\theta^* U_k} | \calF_{k-1} \right] \leq 1$ for all $k \geq 1$. Next, define $M_0 = 1$ and $M_k = e^{\theta^* X_k}$ for $k \geq 1$. $(M_k: k \geq 0)$ is a supermartingale because
	\begin{equation*}
		\begin{aligned}
			\E[M_{k+1} | \calF_k] = \E\left[e^{\theta^* (U_1 + \cdots + U_{k+1})} | \calF_k\right] = e^{\theta^*(U_1 + \cdots + U_k)} \E\left[e^{\theta^* U_{k+1}} | \calF_k\right] = M_k \E\left[e^{\theta^* U_k} | \calF_{k-1} \right] \leq M_k \, . 
		\end{aligned}
	\end{equation*}
	It follows that, for any $n \geq 0$ and $b > 0$, 
	\begin{equation*}
		\begin{aligned}
			\Pr\left\{G_n > b \right\} &= \Pr \left\{\max_{0 \leq k \leq n} X_k > b \right\} = \Pr \left\{\max_{0 \leq k \leq n} e^{\theta^* X_k} > e^{\theta^* b} \right\} = \Pr \left\{\max_{0 \leq k \leq n} M_k > e^{\theta^* b} \right\} \\ &\overset{(i)}{\leq} \frac{\E[M_0]}{e^{\theta^* b}} = e^{-\theta^* b} \, .
		\end{aligned}
	\end{equation*}
	Step $(i)$ is due to Lemma \ref{lemma:appendix|bounded_steps_drift_bound|max_markov_type_inequality_for_supermartingale}. Finally, since $G_n$ is non-decreasing in $n$ and $G_n \rightarrow G$ for each sample path, $\indicator_{\{G_n > b\}}$ is non-negative and is non-decreasing in $n$ and $\indicator_{\{G_n > b\}} \rightarrow \indicator_{\{G > b\}}$ for each sample path. So by the monotone convergence theorem 
	\begin{equation*}
		\begin{aligned}
			\Pr\left\{G >b \right\} = \E\left[\indicator_{\{G > b\}} \right] = \lim_{n \rightarrow \infty} \E\left[\indicator_{\{G_n > b\}} \right] = \lim_{n \rightarrow \infty} \Pr\left\{G_n > b \right\} \leq e^{-\theta^* b} = e^{-\frac{2\mu b}{C^2}} \, .
		\end{aligned}
	\end{equation*}
\end{proof}

\begin{corollary}
	\label{corollary:appendix|bounded_steps_drift_bound}
	Consider a randon sequence $(U_n: n \geq 1)$ and define $\calF = \varnothing$ and $\calF_k = \sigma(U_1, \cdots, U_k)$. Suppose $\E[U_{k+1}|\calF_k] \geq \mu > 0$ for $k \geq 0$ and $\Pr\left\{\abs{U_k} \leq C\right\} = 1$ for $k \geq 1$ for some constancts $\mu, C > 0$. Let $X_n \triangleq U_1 + \cdots + U_n$ for $n \geq 1$ and $X_0 = 0$. Let $G_n \triangleq \min_{0 \leq k \leq n} X_k$ and $G \triangleq \min_{k \geq 0} X_k$. Then for any $b > 0$, $\Pr\{G < -b\} \leq e^{-\frac{2\mu b}{C^2}}$.
\end{corollary}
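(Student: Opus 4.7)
The plan is to reduce the corollary directly to Proposition \ref{proposition:appendix|bounded_steps_drift_bound} by a sign-flip symmetry argument, since the only difference between the two statements is that the drift is positive rather than negative, and the infimum replaces the supremum. No new probabilistic machinery should be needed.

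First, I would introduce the reflected sequence $\widetilde{U}_k \triangleq -U_k$ for $k \geq 1$. Since $\sigma(\widetilde{U}_1,\ldots,\widetilde{U}_k) = \sigma(U_1,\ldots,U_k) = \calF_k$, the filtration is unchanged. The two hypotheses of Proposition \ref{proposition:appendix|bounded_steps_drift_bound} then transfer cleanly: $\E[\widetilde{U}_{k+1} \mid \calF_k] = -\E[U_{k+1}\mid \calF_k] \leq -\mu < 0$, and $\Pr\{|\widetilde{U}_k| \leq C\} = \Pr\{|U_k|\leq C\} = 1$. So the reflected increments satisfy exactly the drift and boundedness assumptions used in the proposition.

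Next, I would set $\widetilde{X}_n \triangleq \widetilde{U}_1 + \cdots + \widetilde{U}_n = -X_n$ (with $\widetilde{X}_0 = 0 = -X_0$), and $\widetilde{G} \triangleq \max_{k \geq 0} \widetilde{X}_k$. Because $\widetilde{X}_k = -X_k$, we have $\widetilde{G} = -\min_{k \geq 0} X_k = -G$. In particular, for any $b > 0$,
\begin{equation*}
\{G < -b\} = \{-G > b\} = \{\widetilde{G} > b\}.
\end{equation*}
Proposition \ref{proposition:appendix|bounded_steps_drift_bound} applied to $(\widetilde{U}_n)$ yields $\Pr\{\widetilde{G} > b\} \leq e^{-2\mu b/C^2}$, and combining this with the displayed identity gives $\Pr\{G < -b\} \leq e^{-2\mu b / C^2}$, which is the claim.

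There is essentially no hard step in this argument; the entire point is that the previous proposition was proved with Hoeffding's lemma and a supermartingale maximal inequality, both of which are insensitive to a global sign change. The only thing worth checking carefully is that the filtration and the boundedness are preserved under negation, which they obviously are. Thus the corollary follows immediately as a one-line corollary of the proposition via the substitution $U_k \mapsto -U_k$.
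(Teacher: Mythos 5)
Your proposal is correct and is exactly the paper's argument: the paper proves the corollary by applying Proposition \ref{proposition:appendix|bounded_steps_drift_bound} to the negated sequence, which is precisely your substitution $U_k \mapsto -U_k$ with the observation that $\min_k X_k = -\max_k(-X_k)$. Your write-up just spells out the hypothesis checks that the paper leaves implicit.
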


\begin{proof}
	Apply Proposition \ref{proposition:appendix|bounded_steps_drift_bound} to the sequence $\{-X_n\}$. 
\end{proof}

\section{Regenerative particle Thompson sampling: choice of hyper-parameters and more simulations} \label{appendix:RPTS}

\begin{figure}[h]
\centering
\subcaptionbox{Bernoulli bandit, $K=10$ \\ $\theta^* = [0.05, 0.10, \cdots, 0.50].$}{\includegraphics[width=0.5\textwidth]{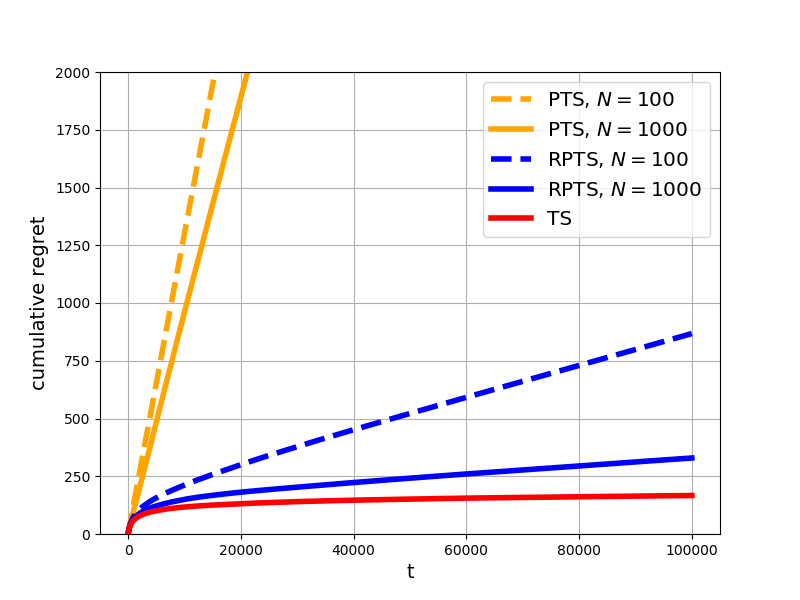}}%
\hfill
\subcaptionbox{Bernoulli bandit, $K=100$ \\ $\theta^*$ consists of $N=100$ points uniformly spaced over [0.3,0.8].}{\includegraphics[width=0.5\textwidth]{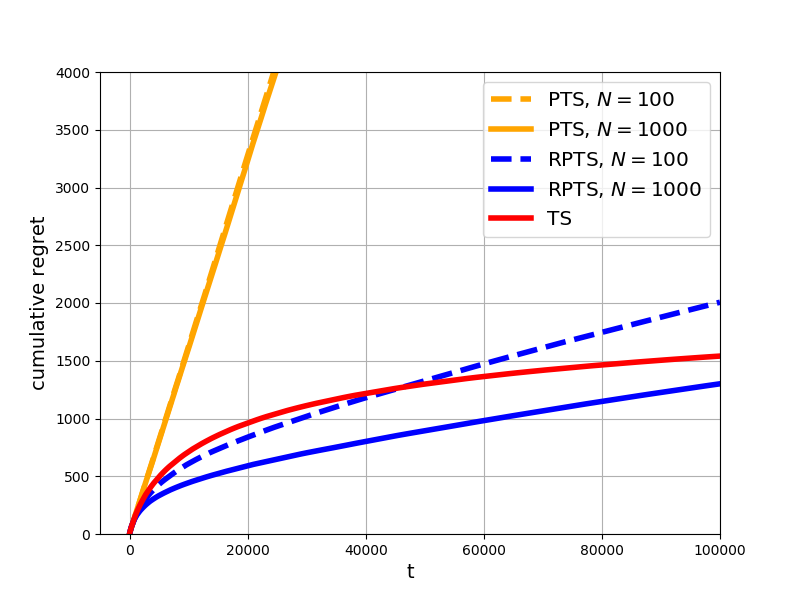}}%
\hfill
\subcaptionbox{Max-Bernoulli bandit, $K=10$, $M=3$ \\ $\theta^* = [0.51, 0.52, \cdots, 0.60].$ }{\includegraphics[width=0.5\textwidth]{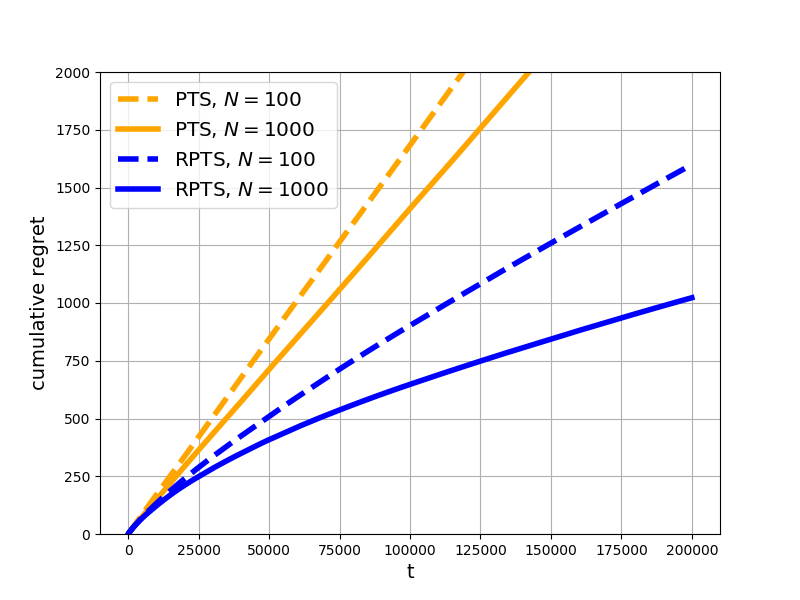}}%
\hfill
\subcaptionbox{Max-Bernoulli bandit, $K=10$, $M=3$ \\ $\theta^* = [0.05, 0.10, \cdots, 0.50].$}{\includegraphics[width=0.5\textwidth]{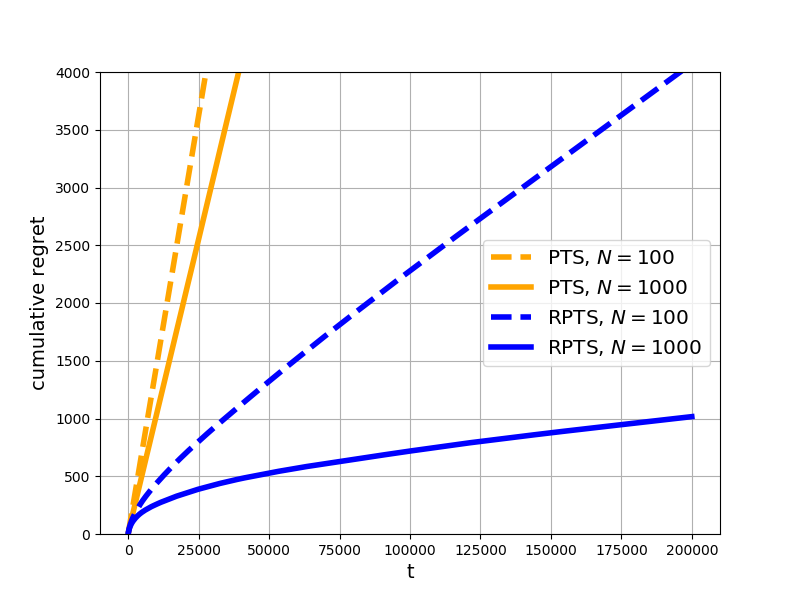}}%
\hfill
\subcaptionbox{Linear bandit, $K=10$, $\sigma_W^2 = 0.1$, $\theta^* = [0.2, \cdots, 0.2]$.}{\includegraphics[width=0.5\textwidth]{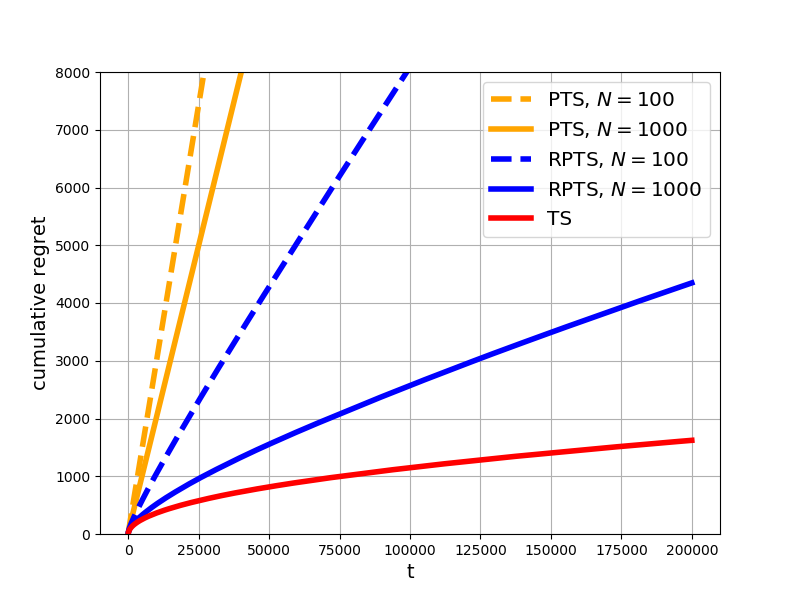}}%
\hfill
\subcaptionbox{Linear bandit, $K=100$, $\sigma_W^2 = 0.1$, $\theta^* = [0.08, \cdots, 0.08]$. }{\includegraphics[width=0.5\textwidth]{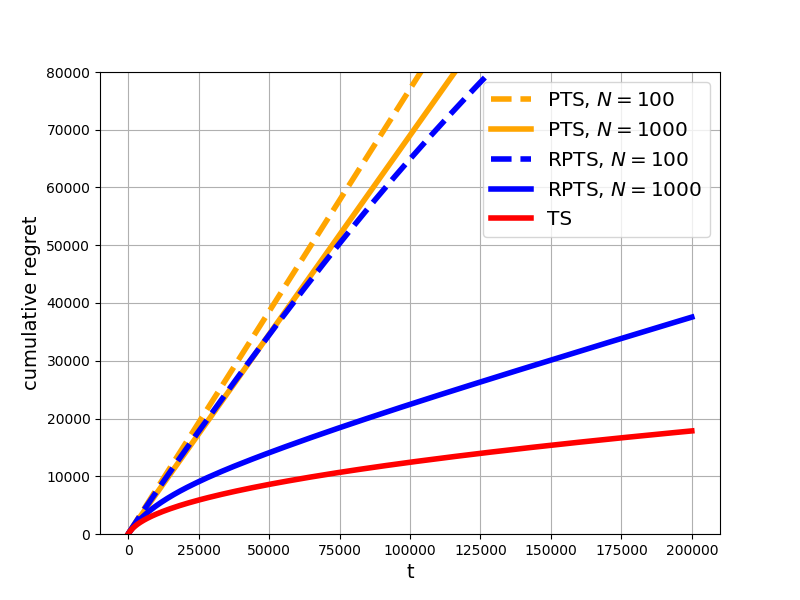}}%
\caption{More simulations.}
\label{fig:RPTS_more_simulation}
\end{figure}

The recommended numerical values of the three hyper-parameters for RPTS (Algorithm \ref{alg:RPTS}) are $f_{del} = 0.8$, $w_{inact} = 0.001$, and $w_{new} = 0.01$. The behavior of the algorithm is relatively insensitive to these values, but further tuning may be beneficial in a given application. In this section we comment on how these values influence the performance of the algorithm.

\begin{itemize}
	\item Analysis for Bernoulli bandits (Section \ref{appendix:PTS_for_two_arm_Bernoulli_bandit}) and empirical evidence for other bandit models indicate that with high probability all but a few particles eventually decay in PTS. Hence it may be attempting to make $f_{del}$ very large. However, since the set of decaying particles is random, it may happen that some fit particles end up decaying. Also, a not-so-bad particle may have an oscillating weight due to counter-reinforcing effects and thus may have low weight at times. Making $f_{del}$ not too large gives those unfortunate fit and not-so-bad particles a chance to survive. We have tried $f_{del} = 0.8$ and $f_{del} = 0.5$ and both work fine. 
	\item The value of $w_{inact}$ should be small, but if it is too small, it may take a long time for the CONDITION in Step 9 to become true, especially when the particles become concentrated in a small subset of the parameter space. 
	\item The value of $w_{new}$ should be small, but strictly larger than $w_{inact}$. There are three aspects of consideration here. First, it is desirable that the weight re-balancing in Step 13 due to normalization has minimal effect on the weights of the surviving particles. We discovered through experiments that it is good for heavy weight particles to remain heavily weighted. Therefore $w_{new}$ should be small. Second, $w_{new}$ should be larger than $w_{inact}$, because otherwise, the newly generated particles in a step will be immediately deleted in the next step. Third, the purpose of setting the value of $w_{new}$ is to give some initial weights to the new particles so that they can participate in the weight updating in the subsequent steps. If a new particle is fit, its weight will boost up exponentially fast; if a new particle is unfit, it will decay exponentially fast. Therefore, the initial weights assigned to these new particles should not significantly affect their chance of survival and their long-term weight dynamics. Thus, as long as $w_{new}$ is fairly small and larger than $w_{inact}$, the choice of its actual value may not make much difference qualitatively. 
\end{itemize}

More simulations are shown in Figure \ref{fig:RPTS_more_simulation}.

For the linear bandit problem, TS can also be exactly implemented by a Kalman filter. The initial set of particles of PTS and RPTS for linear bandits are generated uniformly at random from the unit ball in $\sR^K$. That is based on the assumption that we already know that $\theta^*$ is in the unit ball before running the algorithm. In practice, such knowledge may not be available and a common practice is to use a distribution that spreads out wide enough so that it should cover $\theta^*$. For the purpose of demonstrating the performance of PTS and RPTS here, our practice should be acceptable.

\section{Approximation of expected reward for the network slicing model} \label{appendix:application}

In Section \ref{sec:application}, in step 4 of Algorithm \ref{alg:PTS_for_contextual_stochastic_bandit}, the expected reward $\E_{\theta_t}[R(Y) | A_t = a, c_t]$ becomes $\E_{\theta_t}[g_{c_{t,2}}(Y_t) | a]$ for the network slicing model, where $Y_t = Y_{t,1} + Y_{t,2} + Y_{t,3}$. Since $Y_{t,1}, Y_{t,2}, Y_{t,3}$ are coupled through the non-linear function $g_d$, it is not clear if the expectation can be exactly calculated by a closed-form expression. We propose the following approximation. Given a random variable $Y = Y_1 + Y_2 + Y_3$, where $Y_i$ is an exponentially distributed random variable with mean $\mu_i$ and $Y_i$'s are independent.  Suppose we approximate $Y$ by a Gaussian random variable $\widetilde{Y}$ with mean $\mu = \mu_1 + \mu_2 + \mu_3$ and variance $\sigma^2 = \mu_1^2 + \mu_2^2 + \mu_3^2$. Then 
\begin{equation*}
	\begin{aligned}
		\E[g_d(Y)] &\approx \E[g_d(\widetilde{Y})] \\
		&= \int_0^d \! \frac{y}{d} \frac{1}{\sqrt{2 \pi \sigma^2}} e^{-\frac{(y - \mu)^2}{2\sigma^2}} \, \dd y \\ 
		&= \int_{-\mu}^{d-\mu} \! \frac{1}{d} (z+\mu) \frac{1}{\sqrt{2\pi \sigma^2}} e^{-\frac{z^2}{2\sigma^2}} \, \dd z \quad \left(\text{with} \; z = y-\mu\right) \\
		&= \frac{1}{d \sqrt{2\pi \sigma^2}} \int_{-\mu}^{d-\mu} \! z e^{-\frac{z^2}{2\sigma^2}} \, \dd z + \frac{\mu}{d} \int_{-\mu}^{d-\mu} \! \frac{1}{\sqrt{2\pi \sigma^2}} e^{-\frac{z^2}{2\sigma^2}} \, \dd z \\
		&= \frac{\sigma}{d \sqrt{2\pi}} \left(e^{-\frac{\mu^2}{2\sigma^2}} - e^{-\frac{(d-\mu)^2}{2\sigma^2}}\right) + \frac{\mu}{d} \left( \Phi\left(\frac{d-\mu}{\sigma}\right) - \Phi\left(-\frac{\mu}{\sigma} \right) \right) \, ,
	\end{aligned}
\end{equation*}
where $\Phi(x) \triangleq \sP(N \leq x)$ for a standard Gaussian random variable $N$. Then 
\begin{equation}
	\label{eq:application|model2_expected_reward_approximation2}
	\begin{aligned}
	\E_{\theta_t} \left[g_{c_{t,1}}(Y_t)|a \right] \approx \frac{\sigma_t}{c_{t,2} \sqrt{2\pi}} \left(e^{-\frac{\mu_t^2}{2\sigma_t^2}} - e^{-\frac{(c_{t,2}-\mu_t)^2}{2\sigma_t^2}}\right) + \frac{\mu_t}{c_{t,2}} \left( \Phi\left(\frac{c_{t,2}-\mu_t}{\sigma_t}\right) - \Phi\left(-\frac{\mu_t}{\sigma_t} \right) \right) \, , 
	\end{aligned}
\end{equation}
where $\mu_t= \mu_{t,1} + \mu_{t,2} + \mu_{t,3}$ and $\sigma^2_t = \mu_{t,1}^2 + \mu_{t,2}^2 + \mu_{t,3}^2$ and $\mu_{t,i} = c_{t,1} \theta_{t, i, a_i, 1} + \theta_{t, i, a_i, 2}$ for $i = 1,2,3$. Step 4 of Algorithm \ref{alg:PTS_for_contextual_stochastic_bandit} can then be approximately solved by looping over all possible $a \in [B_1] \times [B_2] \times [B_3]$ and find the one that maximizes (\ref{eq:application|model2_expected_reward_approximation2}).

\end{document}